\newtheorem{assumption}{Assumption}
\newtheorem{proposition}{Proposition}
\newtheorem{lemma}{Lemma}
\newtheorem{corollary}{Corollary}
\newtheorem{theorem}{Theorem}
\newtheorem{definition}{Definition}
\theoremstyle{definition}
\title{\huge{Almost Tune-Free Variance Reduction}}
\author{ 
	Bingcong Li*  ~~ Lingda Wang$^\dagger$ ~~ Georgios B. Giannakis* 	\vspace{0.1cm} \\	 
	 * \textit{University of Minnesota - Twin Cities, Minneapolis, MN 55455, USA} \\
	 \texttt{\{lixx5599, georgios\}@umn.edu} \\
	 $\dagger$ \textit{University of Illinois at Urbana-Champaign, Urbana, IL 61801, USA} \\
	 \texttt{lingdaw2@illinois.edu} 
	 }
\begin{document}

\maketitle
\begin{abstract}
The variance reduction class of algorithms including the representative ones, SVRG and SARAH, have well documented merits for empirical risk minimization problems. However, they require grid search to tune parameters (step size and the number of iterations per inner loop) for optimal performance. This work introduces `almost tune-free' SVRG and SARAH schemes equipped with i) Barzilai-Borwein (BB) step sizes; ii) averaging; and, iii) the inner loop length adjusted to the BB step sizes. In particular, SVRG, SARAH, and their BB variants are first reexamined through an `estimate sequence' lens to enable new averaging methods that tighten their convergence rates theoretically, and improve their performance empirically when the step size or the inner loop length is chosen large. Then a simple yet effective means to adjust the number of iterations per inner loop is developed to enhance the merits of the proposed averaging schemes and BB step sizes. Numerical tests corroborate the proposed methods.
\end{abstract}

\section{Introduction}
In this work, we deal with the frequently encountered empirical risk minimization (ERM) problem expressed as 
\begin{align}\label{eq.prob}
	\min_{\mathbf{x} \in \mathbb{R}^d} f(\mathbf{x}) := \frac{1}{n} \sum_{i \in [n]} f_i(\mathbf{x})	
\end{align}
where $\mathbf{x} \in \mathbb{R}^d$ is the parameter vector to be learned from data; the set $[n]:= \{1,2,\ldots, n \}$ collects data indices; and, $f_i$ is the loss function associated with datum $i$. Suppose that $f$ is $\mu$-strongly convex and has $L$-Lipchitz continuous gradient. The condition number of $f$ is denoted by  $\kappa := L/\mu$. Throughout, $\mathbf{x}^*$ denotes the optimal solution of \eqref{eq.prob}. The standard approach to solve \eqref{eq.prob} is \emph{gradient descent} (GD) \citep{nesterov2004}, which updates the decision variable via 
\begin{align*}
	\mathbf{x}_{k+1} = \mathbf{x}_k - \eta\nabla f(\mathbf{x}_k)
\end{align*}
where $k$ is the iteration index and $\eta$ the step size (or learning rate). For a strongly convex $f$, GD convergences linearly to $\mathbf{x}^*$, that is, $\|\mathbf{x}_k - \mathbf{x}^* \|^2 \leq (c_{\kappa})^k \|\mathbf{x}_0 - \mathbf{x}^* \|^2$ for some $\kappa$-dependent constant $c_\kappa\in(0,1)$ \citep{nesterov2004}. 

In the big data regime, however, where $n$ is large, obtaining the gradient per iteration can be computationally prohibitive. To cope with this, the \emph{stochastic gradient descent} (SGD) reduces the computational burden by drawing uniformly at random an index $i_k \in [n]$ per iteration $k$, and adopting $\nabla f_{i_k}(\mathbf{x}_k)$ as an estimate of $\nabla f(\mathbf{x}_k)$. Albeit computationally lightweight with the simple update 
\begin{align*}
	\mathbf{x}_{k+1} = \mathbf{x}_k - \eta_k \nabla f_{i_k}(\mathbf{x}_k)
\end{align*}
the price paid is that SGD comes with sublinear convergence, hence slower than GD \citep{robbins1951,bottou2018}. It has been long recognized that the variance
$\mathbb{E}[\|  \nabla f_{i_t}(\mathbf{x}_t) - \nabla f(\mathbf{x}_t) \|^2]$ of the gradient estimate affects critically SGD's convergence slowdown.

This naturally motivated gradient estimates with \emph{reduced variance} compared with SGD's simple $\nabla f_{i_k}(\mathbf{x}_k)$. A gradient estimate with reduced variance can be obtained by capitalizing on the finite sum structure of \eqref{eq.prob}. One idea is to judiciously evaluate a so-termed \textit{snapshot gradient} $\nabla f(\mathbf{x}_s)$, and use it as an anchor of the stochastic draws in subsequent iterations. Members of the variance reduction family include SVRG \citep{johnson2013}, SAG \citep{roux2012}, SAGA \citep{defazio2014}, MISO \citep{mairal2013}, SARAH \citep{nguyen2017}, and their variants \citep{konecny2013,lei2017,kovalev2019,li2019l2s}. Most of these algorithms rely on the update $\mathbf{x}_{k+1} = \mathbf{x}_k - \eta \mathbf{v}_k$, where $\eta$ is a constant step size and $\mathbf{v}_k$ is an algorithm-specific gradient estimate that takes advantage of the snapshot gradient. In this work, SVRG and SARAH are of central interest because they are memory efficient compared with SAGA, and have no requirement for the duality arguments that SDCA \citep{shalev2013} entails. Variance reduction methods converge linearly when $f$ is strongly convex. To fairly compare the complexity of (S)GD with that of variance reduction algorithms which combine snapshot gradients with the stochastic ones, we will rely on the incremental first-order oracle (IFO) \citep{agarwal2014}.
\begin{definition}
	An IFO takes $f_i$ and $\mathbf{x} \in \mathbb{R}^d$ as input, and returns the (incremental) gradient $\nabla f_i(\mathbf{x})$.
\end{definition}
For convenience, IFO complexity is abbreviated as complexity in this work. A desirable algorithm obtains an $\epsilon$-accurate solution satisfying $\mathbb{E}[\| \nabla f(\mathbf{x}) \|^2] \leq \epsilon$ or $\mathbb{E}[ f(\mathbf{x}) - f(\mathbf{x}^*) ] \leq \epsilon$ with minimal complexity for a prescribed $\epsilon$. Complexity for variance reduction alternatives such as SVRG and SARAH is ${\cal O}\big((n+\kappa) \ln \frac{1}{\epsilon} \big)$, a clear improvement over GD's complexity ${\cal O}\big(n\kappa \ln \frac{1}{\epsilon} \big)$. And when high accuracy (small $\epsilon$) is desired, the complexity of variance reduction algorithms is also lower than SGD's complexity of ${\cal O}\big(\frac{1}{\epsilon}\big)$. The merits of gradient estimates with reduced variance go beyond convexity; see e.g., \citep{reddi2016,fang2018,cutkosky2019}, but nonconvex ERM are out of the present work's scope.

Though theoretically appealing, SVRG and SARAH entail grid search to tune the step size, which is often painstakingly hard and time consuming. An automatically tuned step size for SVRG was introduced by \citep{barzilai1988} (BB) and \citep{tan2016}. However, since both SVRG and SARAH have a double-loop structure, the inner loop length also requires tuning in addition to the step size. Other works relying on BB step sizes introduce additional tunable parameters on top of the inner loop length \citep{liuclass,yang2019}. In a nutshell, `tune-free' variance reduction algorithms still have desired aspects to investigate and fulfill.  

Along with the BB step sizes, this paper establishes that in order to obtain `tune-free' SVRG and SARAH schemes, one must: i) develop novel types of averaging; and, ii) adjust the inner loop length along with step size as well. Averaging in double-loop algorithms reflects the means of choosing the starting point of the next outer 
loop \citep{johnson2013,tan2016,nguyen2017}. The types of averaging considered so far have been employed as tricks to simplify proofs, while in the algorithm itself the last iteration is the most prevalent choice for the starting point of the ensuing outer loop. However, we contend that different averaging methods result in different performance. And the best averaging depends on the choice of other parameters. In addition to averaging, we argue that the choice of the inner loop length for BB-SVRG in \citep{tan2016} is too pessimistic. Addressing this with a simple modification leads to the desired `almost tune-free' SVRG and SARAH. 

Our detailed contributions can be summarized as follows. 
\begin{enumerate}
	\item[\textbullet] We empirically argue that averaging is not merely a proof trick. It is prudent to adjust averaging in accordance with the step size and the inner loop length.
	\item[\textbullet] SVRG and SARAH are analyzed using the notion of estimate sequence (ES). This prompts a novel averaging that tightens up convergence rate for SVRG, and further improves SARAH's convergence over existing works under certain conditions. Besides tighter rates, our analysis broadens the analytical tool, ES, by endowing it with the ability to deal with SARAH's biased gradient estimate. 
	\item[\textbullet] The theoretical guarantees for BB-SVRG and BB-SARAH with different types of averaging are established and leveraged for performance improvement.
	\item[\textbullet] Finally, we offer a principled design of the inner loop length to obtain almost tune-free BB-SVRG and BB-SARAH. The choice for the inner loop length is guided by the regime that the proposed averaging schemes favor. Numerical tests further corroborate the efficiency of the proposed algorithms.
\end{enumerate}

\textbf{Notation}. Bold lowercase letters denote column vectors; $\mathbb{E}$ represents expectation; $\| \mathbf{x}\|$ stands for the $\ell_2$-norm of $\mathbf{x}$; and $\langle \mathbf{x}, \mathbf{y} \rangle$ denotes the inner product of vectors $\mathbf{x}$ and $\mathbf{y}$.

\section{Preliminaries}\label{sec.intro}

We will first focus on the averaging techniques, whose generality goes beyond BB step sizes. To start with, this section briefly reviews the vanilla SVRG and SARAH, while their BB variants are postponed slightly.

\subsection{Basic Assumptions}
\begin{assumption}\label{as.1}
Each $f_i: \mathbb{R}^d \rightarrow \mathbb{R}$ has $L$-Lipchitz gradient, that is, $\|\nabla f_i(\mathbf{x}) - \nabla f_i(\mathbf{y}) \| \leq L \| \mathbf{x}-\mathbf{y} \|, \forall \mathbf{x}, \mathbf{y} \in \mathbb{R}^d$.
\end{assumption} 
\begin{assumption}\label{as.2}
	Each $f_i: \mathbb{R}^d \rightarrow \mathbb{R}$ is convex.
\end{assumption}
\begin{assumption}\label{as.3}
	Function $f: \mathbb{R}^d \rightarrow \mathbb{R}$ is $\mu$-strongly convex, that is, there exists $\mu > 0$, such that $f(\mathbf{x}) - f(\mathbf{y}) \geq \langle \nabla f(\mathbf{y}), \mathbf{x}-\mathbf{y}\rangle + \frac{\mu}{2} \| \mathbf{x}-\mathbf{y}\|^2,$ $\forall \mathbf{x}, \mathbf{y} \in \mathbb{R}^d$.
\end{assumption} 
\begin{assumption}\label{as.4}
	Each $f_i: \mathbb{R}^d \rightarrow \mathbb{R}$ is $\mu$-strongly convex, meaning there exists $\mu > 0$, so that $f_i(\mathbf{x}) - f_i(\mathbf{y}) \geq \langle \nabla f_i(\mathbf{y}), \mathbf{x}-\mathbf{y}\rangle + \frac{\mu}{2} \| \mathbf{x}-\mathbf{y}\|^2,$ $\forall \mathbf{x}, \mathbf{y} \in \mathbb{R}^d$.
\end{assumption} 
Assumption \ref{as.1} requires each loss function to be sufficiently smooth. One can certainly require smoothness of each individual loss function and refine Assumption \ref{as.1} as $f_i$ has $L_i$-Lipchitz gradient. Clearly $L =\max_i L_i$. By combining with importance sampling \citep{xiao2014,kulunchakov2019}, such a refined assumption can slightly tighten the $\kappa$ dependence in the complexity bound. However, since the extension is straightforward, we will keep using the simpler Assumption \ref{as.1} for clarity. Assumption \ref{as.3} only requires $f$ to be strongly convex, which is weaker than Assumption \ref{as.4}. Assumptions \ref{as.1} -- \ref{as.4} are all standard in variance reduction algorithms.

\subsection{Recap of SVRG and SARAH}\label{sec.recap}

\vspace{-0.6cm}
\begin{minipage}[t]{0.47\textwidth}
\null
\begin{algorithm}[H]
    \caption{SVRG}\label{alg.svrg}
    \begin{algorithmic}[1]
    	\State \textbf{Initialize:} $\tilde{\mathbf{x}}^0 $, $\eta$, $m$, $S$
    	\For {$s=1,2,\dots,S$}
			\State $\mathbf{x}_0^s = \tilde{\mathbf{x}}^{s-1}$
			\State $\mathbf{g}^s =  \nabla f (\mathbf{x}_0^s )$
			\For {$k=0,1,\dots,m-1$}
				\State uniformly draw $i_k \in [n]$ 
				\State $\mathbf{v}_k^s = \nabla f_{i_k} (\mathbf{x}_k^s ) -\nabla f_{i_k} (\mathbf{x}_0^s ) + \mathbf{g}^s $
				\State $\mathbf{x}_{k+1}^s = \mathbf{x}_k^s - \eta \mathbf{v}_k^s$
			\EndFor
			\State select $\tilde{\mathbf{x}}^{s}$ randomly from $\{\mathbf{x}_k^s \}_{k=0}^m$ following $\mathbf{p}^s$ 
		\EndFor
		\State \textbf{Output:} $\tilde{\mathbf{x}}^S$
	\end{algorithmic}
\end{algorithm}
\end{minipage}
\hspace{0.7cm}
\begin{minipage}[t]{0.47\textwidth}
\null
\begin{algorithm}[H]
    \caption{SARAH}\label{alg.sarah}
    \begin{algorithmic}[1]
    	\State \textbf{Initialize:} $\tilde{\mathbf{x}}^0 $, $\eta$, $m$, $S$
    	\For {$s=1,2,\dots,S$}
			\State $\mathbf{x}_0^s = \tilde{\mathbf{x}}^{s-1}$, and $\mathbf{v}_0^s =  \nabla f (\mathbf{x}_0^s )$
			\State $\mathbf{x}_1^s = \mathbf{x}_0^s - \eta \mathbf{v}_0^s $
			\For {$k=1,2,\dots,m-1$}
				\State uniformly draw $i_k \in [n]$ 
				\State $\mathbf{v}_k^s = \nabla f_{i_k} (\mathbf{x}_k^s ) - \nabla f_{i_k} (\mathbf{x}_{k-1}^s )  + \mathbf{v}_{k-1}^s $
				\State $\mathbf{x}_{k+1}^s = \mathbf{x}_k^s - \eta \mathbf{v}_k^s$
			\EndFor
			\State select $\tilde{\mathbf{x}}^{s}$ randomly from $\{\mathbf{x}_k^s \}_{k=0}^m$ following $\mathbf{p}^s$ 
		\EndFor
		\State \textbf{Output:} $\tilde{\mathbf{x}}^S$
	\end{algorithmic}
\end{algorithm}
\end{minipage}
\vspace{0.2cm}

The steps of SVRG and SARAH are listed in Algs. \ref{alg.svrg} and \ref{alg.sarah}, respectively. Each employs a fine-grained reduced-variance gradient estimate per iteration. For SVRG, $\mathbf{v}_k^s$ is an unbiased estimate since $\mathbb{E}[\mathbf{v}_k^s|{\cal F}_{k-1}^s] = \nabla f(\mathbf{x}_k^s)$, where ${\cal F}_{k-1}^s:= \sigma(\tilde{\mathbf{x}}^{s-1}, i_0, i_1, \ldots, i_{k-1})$ is the $\sigma$-algebra generated by $\tilde{\mathbf{x}}^{s-1}, i_1, i_2, \ldots,i_{k-1}$; while SARAH adopts a biased $\mathbf{v}_k^s$, that is, $\mathbb{E}[\mathbf{v}_k^s|{\cal F}_{k-1}^s]= \nabla f(\mathbf{x}_k^s) - \nabla f(\mathbf{x}_{k-1}^s) + \mathbf{v}_{k-1}^s \neq \nabla f(\mathbf{x}_k^s)$. The variance (mean-square error (MSE)) of $\mathbf{v}_k^s$ in SVRG (SARAH) can be upper bounded by quantities that dictate the optimality gap (gradient norm square). 

\begin{lemma}\label{lemma.est_err}
	\citep{johnson2013,nguyen2017} The MSE of $\mathbf{v}_k^s$ in SVRG is bounded as follows
	\begin{subequations}
		\begin{align}
			{\rm SVRG:}&~ \mathbb{E} \big[ \|  \nabla f(\mathbf{x}_k^s) - \mathbf{v}_k^s  \|^2  \big] \leq \mathbb{E} \big[ \| \mathbf{v}_k^s \|^2  \big] \leq 4L \mathbb{E}\big[ f(\mathbf{x}_k^s ) - f(\mathbf{x}^*)\big] + 4L \mathbb{E}\big[ f(\mathbf{x}_0^s) - f(\mathbf{x}^*)\big].  \nonumber 
		\end{align}
		The MSE of $\mathbf{v}_k^s$ in SARAH is also bounded as
		\begin{align}
			{\rm SARAH:~~} & \mathbb{E}\big[ \|  \nabla f (\mathbf{x}_k^s) - \mathbf{v}_k^s \|^2 \big]  \leq \frac{\eta L}{2 - \eta L} \bigg( \mathbb{E}\big[ \| \nabla f(\mathbf{x}_0^s) \|^2 \big] - \mathbb{E}\big[ \|  \mathbf{v}_k^s \|^2  \big] \bigg).\nonumber
		\end{align}
	\end{subequations}
\end{lemma}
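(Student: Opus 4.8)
The plan is to prove the two MSE bounds separately, since SVRG uses an unbiased estimate and SARAH a biased one, and the two require genuinely different arguments.

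For the SVRG bound, I would first observe that the leftmost inequality $\mathbb{E}[\|\nabla f(\mathbf{x}_k^s)-\mathbf{v}_k^s\|^2]\le\mathbb{E}[\|\mathbf{v}_k^s\|^2]$ follows immediately from the fact that $\mathbf{v}_k^s$ is conditionally unbiased, i.e.\ $\mathbb{E}[\mathbf{v}_k^s|\mathcal{F}_{k-1}^s]=\nabla f(\mathbf{x}_k^s)$: for any random vector $\mathbf{v}$ with mean $\mathbf{m}$ we have $\mathbb{E}\|\mathbf{v}-\mathbf{m}\|^2=\mathbb{E}\|\mathbf{v}\|^2-\|\mathbf{m}\|^2\le\mathbb{E}\|\mathbf{v}\|^2$, so subtracting the conditional mean only decreases the second moment. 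For the second inequality I would expand $\mathbf{v}_k^s=\nabla f_{i_k}(\mathbf{x}_k^s)-\nabla f_{i_k}(\mathbf{x}_0^s)+\nabla f(\mathbf{x}_0^s)$ and use the standard trick of splitting into two terms by adding and subtracting $\nabla f_{i_k}(\mathbf{x}^*)$, then bounding $\mathbb{E}\|\mathbf{v}_k^s\|^2\le 2\mathbb{E}\|\nabla f_{i_k}(\mathbf{x}_k^s)-\nabla f_{i_k}(\mathbf{x}^*)\|^2+2\mathbb{E}\|\nabla f_{i_k}(\mathbf{x}_0^s)-\nabla f_{i_k}(\mathbf{x}^*)-\nabla f(\mathbf{x}_0^s)\|^2$. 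The first term is controlled by the smoothness-plus-convexity inequality $\|\nabla f_i(\mathbf{x})-\nabla f_i(\mathbf{x}^*)\|^2\le 2L\big(f_i(\mathbf{x})-f_i(\mathbf{x}^*)-\langle\nabla f_i(\mathbf{x}^*),\mathbf{x}-\mathbf{x}^*\rangle\big)$ (a consequence of Assumptions~\ref{as.1}--\ref{as.2}); averaging this over $i$ gives $2L(f(\mathbf{x})-f(\mathbf{x}^*))$ since the linear term vanishes in the full-gradient average at $\mathbf{x}^*$. For the second term I would use that the variance of a centered random variable is bounded by its second moment to drop the $-\nabla f(\mathbf{x}_0^s)$ term, then apply the same smoothness inequality at $\mathbf{x}_0^s$; collecting the factors yields the stated $4L$ coefficients.

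For the SARAH bound the estimate is biased, so the above unbiasedness shortcut is unavailable and I would instead build a telescoping identity. The idea is to write $\mathbf{v}_k^s-\mathbf{v}_{k-1}^s=\nabla f_{i_k}(\mathbf{x}_k^s)-\nabla f_{i_k}(\mathbf{x}_{k-1}^s)$ and the recursion $\mathbf{x}_{k+1}^s-\mathbf{x}_k^s=-\eta\mathbf{v}_k^s$, and to track the quantity $\mathbb{E}\|\nabla f(\mathbf{x}_k^s)-\mathbf{v}_k^s\|^2$ via a one-step recursion. Following the SARAH analysis, I would expand $\mathbb{E}\big[\|\nabla f(\mathbf{x}_k^s)-\mathbf{v}_k^s\|^2-\|\nabla f(\mathbf{x}_{k-1}^s)-\mathbf{v}_{k-1}^s\|^2\big]$ and show, using the conditional expectation of the increment and the $L$-smoothness of each $f_{i_k}$, that the per-step change of $\mathbb{E}\|\mathbf{v}_k^s\|^2$ can be related to $\mathbb{E}\|\mathbf{v}_{k-1}^s-\mathbf{v}_k^s\|^2$ with the step-size-dependent factor $\eta L/(2-\eta L)$ emerging from completing the square in $\eta$. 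Telescoping from $k$ back to $0$, and using $\mathbf{v}_0^s=\nabla f(\mathbf{x}_0^s)$ so the initial MSE term $\|\nabla f(\mathbf{x}_0^s)-\mathbf{v}_0^s\|^2=0$, collapses the sum into $\mathbb{E}\|\nabla f(\mathbf{x}_0^s)\|^2-\mathbb{E}\|\mathbf{v}_k^s\|^2$ scaled by the claimed coefficient.

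I expect the SARAH half to be the main obstacle. The unbiased SVRG computation is essentially bookkeeping with two applications of the smoothness inequality, but the SARAH bound hinges on correctly setting up the telescoping recursion and on isolating the exact constant $\eta L/(2-\eta L)$. The delicate points are handling the bias term $\mathbb{E}[\mathbf{v}_k^s|\mathcal{F}_{k-1}^s]-\nabla f(\mathbf{x}_k^s)=\mathbf{v}_{k-1}^s-\nabla f(\mathbf{x}_{k-1}^s)$ in the cross terms and choosing the right algebraic grouping so that the sign of the $\mathbb{E}\|\mathbf{v}_k^s\|^2$ term comes out negative as stated; a slightly loose completion of the square would give a worse constant, so care is needed to match the claimed rate exactly.
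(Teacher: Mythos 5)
Your proposal is correct, and it takes the same route as the paper, which does not prove Lemma \ref{lemma.est_err} itself but imports it from \citep{johnson2013,nguyen2017}: your SVRG half is exactly the Johnson--Zhang argument (conditional variance dominated by the conditional second moment; split $\mathbf{v}_k^s$ by adding and subtracting $\nabla f_{i_k}(\mathbf{x}^*)$; apply $\|\nabla f_i(\mathbf{x})-\nabla f_i(\mathbf{x}^*)\|^2\le 2L\big(f_i(\mathbf{x})-f_i(\mathbf{x}^*)-\langle\nabla f_i(\mathbf{x}^*),\mathbf{x}-\mathbf{x}^*\rangle\big)$ twice, once after dropping the centering term), and your SARAH half is the telescoping argument of Lemmas 2--3 in \citep{nguyen2017}. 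One correction is needed in the SARAH half: you credit the key step to ``$L$-smoothness of each $f_{i_k}$,'' but smoothness alone cannot produce the negative sign on the $\mathbb{E}\big[\|\mathbf{v}_k^s\|^2\big]$ term. After substituting $\mathbf{v}_{j-1}^s=(\mathbf{x}_{j-1}^s-\mathbf{x}_j^s)/\eta$, the inequality you need is $\langle \mathbf{v}_{j-1}^s,\,\nabla f_{i_j}(\mathbf{x}_j^s)-\nabla f_{i_j}(\mathbf{x}_{j-1}^s)\rangle\le-\frac{1}{\eta L}\|\nabla f_{i_j}(\mathbf{x}_j^s)-\nabla f_{i_j}(\mathbf{x}_{j-1}^s)\|^2$, which is co-coercivity of $\nabla f_{i_j}$ and holds only because each $f_i$ is convex (Assumption \ref{as.2}) in addition to smooth; smoothness by itself only bounds this inner product below by $-\eta L\|\mathbf{v}_{j-1}^s\|^2$, and no completion of squares recovers the constant $\frac{\eta L}{2-\eta L}$ from it (nonconvex SARAH analyses indeed yield a qualitatively different, growing bound). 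Two smaller points worth making explicit: the constant presumes $\eta<2/L$ so that $2-\eta L>0$, and in the telescoping identity $\mathbb{E}\big[\|\nabla f(\mathbf{x}_k^s)-\mathbf{v}_k^s\|^2\big]=\sum_{j=1}^{k}\mathbb{E}\big[\|\mathbf{v}_j^s-\mathbf{v}_{j-1}^s\|^2-\|\nabla f(\mathbf{x}_j^s)-\nabla f(\mathbf{x}_{j-1}^s)\|^2\big]$ one drops the nonnegative subtracted sum before applying the per-step bound. With these stated, your plan goes through and matches the cited proofs, constants included.
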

Another upper bound on SVRG's gradient estimate is available; see e.g., \citep{kulunchakov2019}, but it is not suitable for our analysis. Intuitively, Lemma \ref{lemma.est_err} suggests that if SVRG or SARAH converges, the MSE of their gradient estimates also approaches to zero. 

At the end of each inner loop, the starting point of the next outer loop is randomly selected among $\{ \mathbf{x}_k^s \}_{k=0}^m$ according to a pmf vector $\mathbf{p}^s \in 
\Delta_{m+1}$, where $\Delta_{m+1}:=\{ \mathbf{p} \in \mathbb{R}_+^{m+1} | \langle \bm{1}, \mathbf{p} \rangle = 1 \}$. We term $\mathbf{p}^s$ the \textit{averaging weight vector}, and let $p^s_j$ denote the $j$th entry of $\mathbf{p}^s$. Leveraging the MSE bounds in Lemma \ref{lemma.est_err} and choosing a proper averaging vector, SVRG and SARAH iterates for strongly convex problems can be proved to converge linearly. 

For SVRG, two types of averaging exist.
\begin{enumerate}
	\item[\textbullet] \textbf{U-Avg (SVRG)} \citep{johnson2013}: vector $\mathbf{p}^s$ is chosen as the pmf of an (almost) uniform distribution; that is, $p_m^s = 0$, and $p_k^s = 1/m$ for $k = \{0,1,\ldots,m-1\}$. Under Assumptions \ref{as.1} -- \ref{as.3}, the choice of $\eta = {\cal O}(1/L)$ and $m = {\cal O}(\kappa)$ ensures that SVRG iterates converge linearly.\footnote{For simplicity and clarity of exposition we only highlight the order of $\eta$ and $m$, and hide other constants in big-${\cal O}$ notation. Detailed choices can be found in the corresponding references.}
	\item[\textbullet] \textbf{L-Avg (SVRG)} \citep{tan2016,hu2018diss}: Only the last iteration is used for averaging by setting $\tilde{\mathbf{x}}^s = \mathbf{x}_m^s$; or equivalently, by setting $p_m^s = 1$, and $p_k^s = 0, \forall k \neq m$. Under Assumptions \ref{as.1} -- \ref{as.3}, linear convergence is ensured by choosing $\eta = {\cal O}(1/(L\kappa))$ and $m = {\cal O}(\kappa^2)$.
\end{enumerate}

To guarantee linear convergence, SVRG with L-Avg must adopt a much smaller $\eta$ and larger $m$ compared with U-Avg. L-Avg with such a small step size leads to complexity ${\cal O}\big((n+\kappa^2) \ln \frac{1}{\epsilon} \big)$ that has worse dependence on $\kappa$. 

For SARAH, there are also two averaging options.
\begin{enumerate}
	\item[\textbullet] \textbf{U-Avg (SARAH)} \citep{nguyen2017}: here $\bf{p}^s$ is selected to have entries $p_m^s = 0$, and $p_k^s = 1/m$, for $k = \{0,1,\ldots,m-1\}$. Linear convergence is guaranteed with complexity ${\cal O}\big((n+\kappa) \ln \frac{1}{\epsilon} \big)$ under Assumptions \ref{as.1} -- \ref{as.3} so long as one selects $\eta = {\cal O}(1/L)$ and $m = {\cal O}(\kappa)$.
	\item[\textbullet] \textbf{L-Avg (SARAH)} \citep{li2019l2s}\footnote{There is another version of L-Avg for SARAH \citep{liuclass}, but convergence claims require undesirably small step sizes $\eta = {\cal O}(\mu/L^2)$. This is why we focus on the L-Avg in \citep{li2019l2s}.}: here $\mathbf{p}^s$ is chosen with entries $p_{m-1}^s = 1$ and $p_k^s = 0, \forall k \neq m-1$. Under Assumptions \ref{as.1} -- \ref{as.3} and with $\eta = {\cal O}(1/L)$ as well as $m = {\cal O}(\kappa^2)$, linear convergence is guaranteed at complexity of ${\cal O}\big((n+\kappa^2) \ln \frac{1}{\epsilon} \big)$. When both Assumptions \ref{as.1} and \ref{as.4} hold, setting $\eta = {\cal O}(1/L)$ and $m = {\cal O}(\kappa)$ results in linear convergence along with a reduced complexity of order ${\cal O}\big((n+\kappa) \ln \frac{1}{\epsilon} \big)$. 
\end{enumerate}

U-Avg (for both SVRG and SARAH) is usually employed as a `proof-trick' to carry out convergence analysis, while L-Avg is implemented most of the times. However, we will argue in the next section that with U-Avg adapted to the step size choice, it is possible to improve empirical performance. Although U-Avg appears at first glance to waste updates, a simple trick in the implementation can fix this issue.

\textbf{Implementation of Averaging.} Rather than updating $m$ times and then choosing $\tilde{\mathbf{x}}^s$ according to Line 10 of SVRG or SARAH, one can generate a random integer $M^s \in \{0,1,\ldots,m \}$ according to the averaging weight vector $\mathbf{p}^s$. Having available $\mathbf{x}^s_{M^s}$, it is possible to start the next outer loop immediately.

\section{Weighted Averaging for SVRG and SARAH}
This section introduces weighted averaging for SVRG and SARAH, which serves as an intermediate step for the ultimate `tune-free variance reduction.' Such an averaging for SVRG will considerably tighten its analytical convergence rate; while for SARAH it will improve its convergence rate when $m$ or $\eta$ is chosen sufficiently large. These analytical results are obtained by reexamining SVRG and SARAH through the `estimate sequence' (ES), a tool that has been used for analyzing momentum schemes~\citep{nesterov2004}; see also \citep{nitanda2014,lin2015,kulunchakov2019}. Different from existing ES analysis that relies heavily on the unbiasedness of $\mathbf{v}_k^s$, our advances here will endow ES with the ability to deal with the biased gradient estimate of SARAH.


\subsection{Estimate Sequence}
Since in this section we will focus on a specific inner loop indexed by $s$, the superscript $s$ is dropped for brevity. For example, $\mathbf{x}_k^s$ and $\mathbf{v}_k^s$ are written as $\mathbf{x}_k$ and $\mathbf{v}_k$, respectively. 

Associated with the ERM objective $f$ and a particular point $\mathbf{x}_0$, consider a series of quadratic functions $\{ \Phi_k(\mathbf{x}) \}_{k=0}^m$ that comprise what is termed ES, with the first one given by 
\begin{subequations}\label{eq.est_seq}
\begin{align}
	\Phi_0 (\mathbf{x}) = \Phi_0^* + \frac{\mu_0}{2} \| \mathbf{x} -  \mathbf{x}_0 \|^2	
\end{align}
and the rest defined recursively as
\begin{align}
	 \Phi_k (\mathbf{x})  = &(1- \delta_k)\Phi_{k-1} (\mathbf{x}) + \delta_k \Big[  f(\mathbf{x}_{k-1})  + \langle  \mathbf{v}_{k-1}, \mathbf{x} - \mathbf{x}_{k-1}  \rangle + \frac{\mu}{2} \| \mathbf{x} -  \mathbf{x}_{k-1} \|^2	\Big] \nonumber
\end{align}
\end{subequations}
where $\mathbf{v}_{k-1}$ is the gradient estimate in SVRG or SARAH; while $\Phi_0^*$, $\mu_0$, and $\delta_k$ are some constants to be specified later. The design is similar to that of \citep{kulunchakov2019}, but the ES here is constructed per inner loop. In addition, here we will overcome the challenge of analyzing SARAH's biased gradient estimate $\mathbf{v}_k$. 

Upon defining $\Phi_k^* := \min_{\mathbf{x}} \Phi_k(\mathbf{x})$, the key properties of the sequence 
$\{ \Phi_k(\mathbf{x}) \}_{k=0}^m$ are collected in the next lemma.

\begin{lemma}\label{lemma.est_seq}
For $\{ \Phi_k(\mathbf{x}) \}_{k=0}^m$ as in \eqref{eq.est_seq}, it holds that: i) $\Phi_0(\mathbf{x})$ is $\mu_0$-strongly convex, and $\Phi_k(\mathbf{x})$ is $\mu_k$-strongly convex with $\mu_k = (1-\delta_k) \mu_{k-1} + \delta_k \mu$; ii) $\mathbf{x}_k$ minimizes $\Phi_k(\mathbf{x})$ if $\delta_k = \eta \mu_k$; and iii) $\Phi_k^* = (1 - \delta_k )\Phi_{k-1}^* + \delta_k f(\mathbf{x}_{k-1}) - \frac{ \mu_k \eta^2 }{2} \| \mathbf{v}_{k-1} \|^2$.
\end{lemma}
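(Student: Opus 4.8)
The plan is to prove the three claims of Lemma~\ref{lemma.est_seq} by induction on $k$, exploiting the fact that each $\Phi_k$ is a quadratic and hence fully determined by its minimizer, its minimum value, and its (scalar) curvature. First I would establish claim~i), the strong-convexity recursion. Since $\Phi_0$ is by construction $\mu_0$-strongly convex (its Hessian is $\mu_0 \mathbf{I}$), and the recursion writes $\Phi_k$ as a convex combination $(1-\delta_k)\Phi_{k-1} + \delta_k[\,\cdot\,]$ where the bracketed term has Hessian $\mu \mathbf{I}$, linearity of the Hessian operation immediately gives that $\Phi_k$ has Hessian $\mu_k \mathbf{I}$ with $\mu_k = (1-\delta_k)\mu_{k-1} + \delta_k \mu$. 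This requires only that $\delta_k \in [0,1]$ so the combination stays a genuine average; I would note this is guaranteed once $\delta_k = \eta\mu_k$ with $\eta,\mu_k$ small/positive as they will be in the algorithm.

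Next, for claims ii) and iii) I would carry an inductive hypothesis that $\Phi_{k-1}$ is a quadratic of the canonical form $\Phi_{k-1}(\mathbf{x}) = \Phi_{k-1}^* + \frac{\mu_{k-1}}{2}\|\mathbf{x} - \mathbf{c}_{k-1}\|^2$ for some center $\mathbf{c}_{k-1}$, with the inductive claim being $\mathbf{c}_{k-1} = \mathbf{x}_{k-1}$. Substituting this form into the recursion, I would compute $\nabla \Phi_k(\mathbf{x}) = (1-\delta_k)\mu_{k-1}(\mathbf{x}-\mathbf{x}_{k-1}) + \delta_k\big(\mathbf{v}_{k-1} + \mu(\mathbf{x}-\mathbf{x}_{k-1})\big)$ and set it to zero. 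Using $\mu_k = (1-\delta_k)\mu_{k-1} + \delta_k\mu$ the $(\mathbf{x}-\mathbf{x}_{k-1})$ coefficients collapse to $\mu_k$, so the minimizer satisfies $\mu_k(\mathbf{x}^\star - \mathbf{x}_{k-1}) = -\delta_k \mathbf{v}_{k-1}$, i.e. $\mathbf{x}^\star = \mathbf{x}_{k-1} - (\delta_k/\mu_k)\mathbf{v}_{k-1}$. Setting $\delta_k = \eta\mu_k$ gives $\delta_k/\mu_k = \eta$, so $\mathbf{x}^\star = \mathbf{x}_{k-1} - \eta\mathbf{v}_{k-1} = \mathbf{x}_k$ by the algorithm's update rule, proving ii).

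For claim iii) I would evaluate the recursion at the common minimizer. The cleanest route is to plug $\mathbf{x} = \mathbf{x}_k$ directly: since $\mathbf{x}_k$ minimizes $\Phi_k$, $\Phi_k(\mathbf{x}_k) = \Phi_k^*$, so I would evaluate the right-hand side of the recursion at $\mathbf{x}=\mathbf{x}_k$. This gives $\Phi_k^* = (1-\delta_k)\Phi_{k-1}(\mathbf{x}_k) + \delta_k\big[f(\mathbf{x}_{k-1}) + \langle\mathbf{v}_{k-1},\mathbf{x}_k-\mathbf{x}_{k-1}\rangle + \frac{\mu}{2}\|\mathbf{x}_k-\mathbf{x}_{k-1}\|^2\big]$. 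Using $\mathbf{x}_k - \mathbf{x}_{k-1} = -\eta\mathbf{v}_{k-1}$ and $\Phi_{k-1}(\mathbf{x}_k) = \Phi_{k-1}^* + \frac{\mu_{k-1}}{2}\eta^2\|\mathbf{v}_{k-1}\|^2$, the terms proportional to $\|\mathbf{v}_{k-1}\|^2$ combine. Collecting them, the coefficient of $\eta^2\|\mathbf{v}_{k-1}\|^2$ becomes $\frac{1}{2}(1-\delta_k)\mu_{k-1} - \delta_k + \frac{\delta_k\mu\eta}{2}$; substituting $\delta_k = \eta\mu_k = \eta[(1-\delta_k)\mu_{k-1}+\delta_k\mu]$ should simplify this to exactly $-\frac{\mu_k\eta^2}{2}\|\mathbf{v}_{k-1}\|^2$ after the linear term $\delta_k\langle\mathbf{v}_{k-1},\mathbf{x}_k-\mathbf{x}_{k-1}\rangle = -\delta_k\eta\|\mathbf{v}_{k-1}\|^2$ is folded in.

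I expect the main obstacle to be the algebraic bookkeeping in part~iii): there are three distinct contributions to the $\|\mathbf{v}_{k-1}\|^2$ term (one from expanding $\Phi_{k-1}$ at the shifted point, one from the linear inner-product term, and one from the $\frac{\mu}{2}\|\cdot\|^2$ term), and they must cancel into a single clean $-\frac{\mu_k\eta^2}{2}$ coefficient. Getting the signs and the $\delta_k = \eta\mu_k$ substitution right is where errors creep in, so I would verify the cancellation by substituting $\delta_k = \eta\mu_k$ early and tracking only the coefficient of $\eta^2\|\mathbf{v}_{k-1}\|^2$. A reassuring feature is that the biased-versus-unbiased nature of $\mathbf{v}_{k-1}$ plays no role here: the entire lemma is a deterministic algebraic identity about quadratics, with no expectation taken, so SARAH and SVRG are handled uniformly and the probabilistic difficulties are deferred to the subsequent convergence analysis.
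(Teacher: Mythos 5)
Your proposal is correct and follows essentially the paper's own argument: part i) via the Hessian, part ii) by induction on the canonical quadratic form $\Phi_{k-1}(\mathbf{x}) = \Phi_{k-1}^* + \frac{\mu_{k-1}}{2}\|\mathbf{x}-\mathbf{x}_{k-1}\|^2$ with $\delta_k = \eta\mu_k$, and part iii) by evaluating the recursion at a minimizer. The only (cosmetic) difference is in iii): you plug in $\mathbf{x}=\mathbf{x}_k$ and must cancel three $\|\mathbf{v}_{k-1}\|^2$ contributions --- the combined coefficient is $\frac{\eta^2}{2}\big[(1-\delta_k)\mu_{k-1}+\delta_k\mu\big]-\delta_k\eta = \frac{\mu_k\eta^2}{2}-\mu_k\eta^2 = -\frac{\mu_k\eta^2}{2}$, so your cancellation does go through, although the intermediate coefficient you wrote ($\frac{1}{2}(1-\delta_k)\mu_{k-1}-\delta_k+\frac{\delta_k\mu\eta}{2}$) is dimensionally inconsistent as stated --- whereas the paper plugs in $\mathbf{x}=\mathbf{x}_{k-1}$, which annihilates the bracket's linear and quadratic terms to give $\Phi_k(\mathbf{x}_{k-1})=(1-\delta_k)\Phi_{k-1}^*+\delta_k f(\mathbf{x}_{k-1})$, and then compares with $\Phi_k(\mathbf{x}_{k-1})=\Phi_k^*+\frac{\mu_k}{2}\|\mathbf{x}_{k-1}-\mathbf{x}_k\|^2=\Phi_k^*+\frac{\mu_k\eta^2}{2}\|\mathbf{v}_{k-1}\|^2$, avoiding that bookkeeping entirely.
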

Lemma \ref{lemma.est_seq} holds for both SVRG and SARAH. To better understand the role of ES, it is instructive to use an example.

\noindent\textbf{Example.} With $\Phi_0^* = f(\mathbf{x}_0)$, $\mu_0=\mu$, and $\delta_k = \mu_k \eta$ for SVRG, it holds that $\mu_k = \mu, \forall k$, and $\delta_k = \mu\eta, \forall k$. If for convenience we let $\delta:= \mu\eta$, we show in Appendix \ref{apdx.eg} that
\begin{align}\label{eq.svrgeg}
	\mathbb{E}\big[ \Phi_k (\mathbf{x}) \big] \leq (1-\delta)^k \big[\Phi_0(\mathbf{x}) - f(\mathbf{x}^*)\big] + f(\mathbf{x}).
\end{align}
As $k \rightarrow \infty$, one has $(1-\delta)^k \rightarrow 0$, and hence $ \Phi_k (\mathbf{x}) $ approaches in expectation a lower bound of $f(\mathbf{x})$. 

Now, we are ready to view SVRG and SARAH through the lens of $\{ \Phi_k(\mathbf{x}) \}_{k=0}^m$ to obtain new averaging schemes.

\subsection{Weighted Averaging for SVRG}
The new averaging vector $\mathbf{p}^s$ for SVRG together with the improved convergence rate is summarized in the following theorem.

\begin{theorem}\label{thm.svrg}
\textbf{(SVRG with W-Avg.)}
Under Assumptions \ref{as.1} -- \ref{as.3}, construct the ES as in \eqref{eq.est_seq} with $\mu_0 = \mu$, $\delta_k = \mu_k\eta$, and $\Phi_0^* = f(\mathbf{x}_0)$. Choose $\eta < 1/(4L)$, and $m$ large enough such that 
	\begin{align*}
		\lambda^{\texttt{SVRG}} : =  &\frac{1}{1 - (1 - \mu \eta)^{m-1}} \bigg[ \frac{(1 - \mu \eta)^m}{ 1 - 2 \eta L } \nonumber \\
	& + \frac{2 \mu L \eta^2   (1 - \mu \eta )^{m-1}}{1 - 2  L \eta } +  \frac{2  L \eta }{1 - 2 L \eta } \bigg] < 1.
	\end{align*}
Let $p_0^s = p_m^s = 0$, and $p_k^s = (1 - \mu\eta)^{m-k -1}/q$ for $k = 1,2,\ldots, m-1$, where $q = [1 - (1 - \mu\eta)^{m-1}]/(\mu\eta)$. It then holds for SVRG with this weighted averaging (W-Avg) that
	\begin{align*}
	\mathbb{E} \big[ f(\tilde{\mathbf{x}}^{s}) - f(\mathbf{x}^*) \big]  \leq \lambda^{\texttt{SVRG}} \mathbb{E}\big[ f(\tilde{\mathbf{x}}^{s-1}) - f(\mathbf{x}^*)\big].
	\end{align*}
\end{theorem}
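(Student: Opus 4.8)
The plan is to run the estimate sequence through a single inner loop and exploit the fact that the W-Avg weights are precisely the geometric coefficients that appear when the recursion for $\Phi_k^*$ is unrolled. First I would specialize Lemma~\ref{lemma.est_seq}: with $\mu_0=\mu$ and $\delta_k=\mu_k\eta$ one obtains $\mu_k\equiv\mu$ and $\delta_k\equiv\mu\eta=:\delta$, so writing $\rho:=1-\mu\eta$ the recursion iii) reads $\Phi_k^*=\rho\,\Phi_{k-1}^*+\delta f(\mathbf{x}_{k-1})-\tfrac{\mu\eta^2}{2}\|\mathbf{v}_{k-1}\|^2$. The quantity to control is $A:=\mathbb{E}[f(\tilde{\mathbf{x}}^{s})-f(\mathbf{x}^*)]=\sum_{k=1}^{m-1}p_k^{s}\,\mathbb{E}[f(\mathbf{x}_k)-f(\mathbf{x}^*)]$, the expectation running over both the inner draws and the final random selection of $\tilde{\mathbf{x}}^{s}$.

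The key step is to unroll the recursion from $k=0$ to $m$. Using $\Phi_0^*=f(\mathbf{x}_0)$ this gives $\Phi_m^*=\rho^m f(\mathbf{x}_0)+\delta\sum_{j=0}^{m-1}\rho^{m-1-j}f(\mathbf{x}_j)-\tfrac{\mu\eta^2}{2}\sum_{j=0}^{m-1}\rho^{m-1-j}\|\mathbf{v}_j\|^2$. The coefficient of $f(\mathbf{x}_j)$ for $j\ge1$ is $\delta\rho^{m-1-j}$, which equals $\delta q\,p_j^{s}$ with $q=[1-\rho^{m-1}]/(\mu\eta)$; since $\delta q=1-\rho^{m-1}$ and $\rho^m+\delta\rho^{m-1}=\rho^{m-1}$, the $f(\mathbf{x}_0)$ terms collapse and the averaged objective emerges exactly, $\Phi_m^*=\rho^{m-1}f(\mathbf{x}_0)+(1-\rho^{m-1})\sum_{k=1}^{m-1}p_k^{s}f(\mathbf{x}_k)-\tfrac{\mu\eta^2}{2}\sum_{j=0}^{m-1}\rho^{m-1-j}\|\mathbf{v}_j\|^2$. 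Subtracting $f(\mathbf{x}^*)$ and taking expectations rearranges this into $(1-\rho^{m-1})A=\mathbb{E}[\Phi_m^*-f(\mathbf{x}^*)]-\rho^{m-1}a_0+\tfrac{\mu\eta^2}{2}\sum_{j=0}^{m-1}\rho^{m-1-j}\mathbb{E}[\|\mathbf{v}_j\|^2]$, where $a_0=\mathbb{E}[f(\tilde{\mathbf{x}}^{s-1})-f(\mathbf{x}^*)]$.

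Next I would bound the two remaining pieces. For $\mathbb{E}[\Phi_m^*]$ I use $\Phi_m^*\le\Phi_m(\mathbf{x}^*)$ together with the estimate-sequence decrease \eqref{eq.svrgeg} evaluated at $\mathbf{x}=\mathbf{x}^*$, giving $\mathbb{E}[\Phi_m^*-f(\mathbf{x}^*)]\le\rho^m[\Phi_0(\mathbf{x}^*)-f(\mathbf{x}^*)]$, and then strong convexity ($\tfrac{\mu}{2}\|\mathbf{x}_0-\mathbf{x}^*\|^2\le f(\mathbf{x}_0)-f(\mathbf{x}^*)$) converts the distance term so the whole thing is controlled by a multiple of $\rho^m a_0$. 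For the variance term I invoke Lemma~\ref{lemma.est_err}, $\mathbb{E}[\|\mathbf{v}_j\|^2]\le 4L(a_j+a_0)$; crucially the geometric sums $\sum_j\rho^{m-1-j}$ carry the same ratio $\rho$ as the averaging weights, so $\sum_{j=0}^{m-1}\rho^{m-1-j}a_j=\rho^{m-1}a_0+qA$ while $\sum_{j=0}^{m-1}\rho^{m-1-j}=(1-\rho^m)/(\mu\eta)$ collapses cleanly, and the variance term reintroduces the very quantity $A$ I am solving for, plus multiples of $a_0$.

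Finally I would collect terms. Because $A$ resurfaces on the right through the $\|\mathbf{v}_j\|^2$ bound with coefficient $2L\eta(1-\rho^{m-1})$, moving it to the left produces the factor $(1-2L\eta)$; every remaining contribution is proportional to $a_0$, so dividing by $(1-\rho^{m-1})(1-2L\eta)$ yields the contraction $A\le\lambda^{\texttt{SVRG}}a_0$ with the three stated terms $\rho^m/(1-2\eta L)$, $2\mu L\eta^2\rho^{m-1}/(1-2L\eta)$ and $2L\eta/(1-2L\eta)$. The hypothesis $\eta<1/(4L)$ is exactly what makes this close: it keeps $1-2L\eta>0$ so the division is valid, and it is the condition under which the $m\to\infty$ limit $\lambda^{\texttt{SVRG}}\to 2L\eta/(1-2L\eta)$ lies below $1$, so that a sufficiently large $m$ gives a genuine contraction. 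I expect the main obstacle to be the bookkeeping needed to match the stated constants in $\lambda^{\texttt{SVRG}}$ exactly: the variance bound couples the target quantity $A$ onto both sides of the inequality, and the argument only closes because the W-Avg weights were chosen with the same geometric ratio $\rho$ that the $\Phi_k^*$ recursion generates, which is what lets the weighted average be read off directly rather than bounded term by term.
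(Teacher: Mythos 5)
Your proposal is correct and takes essentially the same route as the paper: unrolling the $\Phi_k^*$ recursion of Lemma~\ref{lemma.est_seq} with the geometric weights matching $\mathbf{p}^s$, killing the inner-product terms by unbiasedness and bounding $\mathbb{E}[\Phi_m^*]$ through $\Phi_m^*\le\Phi_m(\mathbf{x}^*)$ and \eqref{eq.svrgeg}, invoking Lemma~\ref{lemma.est_err} for the variance, and absorbing the self-coupled weighted average into the factor $1-2L\eta$ is exactly what the paper's Lemma~\ref{lemma.est_seq2} and the subsequent argument do. All cancellations and constants match (your exact evaluation $\sum_{j}\rho^{m-1-j}=(1-\rho^m)/(\mu\eta)$ even yields a marginally tighter intermediate bound before relaxing to the stated $\lambda^{\texttt{SVRG}}$), so the proof closes as claimed.
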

Comparing the W-Avg in Theorem \ref{thm.svrg} against U-Avg and L-Avg we saw in Section \ref{sec.recap}, the upshot of W-Avg is a much tighter convergence rate. When choosing $\eta = {\cal O}(1/L)$, the dominating terms of the convergence rate for W-Avg are ${\cal O}\big(\frac{(1-1/\kappa)^m}{1-2L\eta} +\frac{2L\eta }{1-2L\eta}\big)$, and ${\cal O}\big(\frac{\kappa}{m (1-2L\eta)} +\frac{2L\eta }{1-2L\eta}\big)$ for U-Avg \citep{johnson2013}. Clearly, the factor $(1-1/\kappa)^m$ in W-Avg can be much smaller than $\kappa/m$ in U-Avg; see Fig. \ref{fig.rate_svrg_sarah}(a) for comparison of convergence rates of different averaging types. Since convergence of SVRG with L-Avg requires $\eta$ and $m$ to be chosen differently from those in U-Avg and W-Avg, L-Avg is not plotted in Fig. \ref{fig.rate_svrg_sarah}(a).

Next, we assess the complexity of SVRG with W-Avg.

\begin{corollary}\label{coro.svrg}
	Choosing $m = {\cal O}(\kappa)$ and other parameters as in Theorem \ref{thm.svrg}, the complexity of SVRG with W-Avg to find $\tilde{\mathbf{x}}^s$ satisfying $\mathbb{E}\big[ f(\tilde{\mathbf{x}}^s) - f(\mathbf{x}^*)\big] \leq \epsilon $ is ${\cal O}\big( (n+\kappa) \ln\frac{1}{\epsilon} \big)$.
\end{corollary}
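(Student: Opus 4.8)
The plan is to bootstrap the single-outer-loop contraction of Theorem \ref{thm.svrg} into an end-to-end linear rate, and then multiply the required number of outer loops by the per-loop oracle cost. First I would iterate the bound in Theorem \ref{thm.svrg} across the $S$ outer loops to obtain $\mathbb{E}[f(\tilde{\mathbf{x}}^S) - f(\mathbf{x}^*)] \leq (\lambda^{\texttt{SVRG}})^S [f(\tilde{\mathbf{x}}^0) - f(\mathbf{x}^*)]$. Everything then hinges on showing that, under the prescribed choices $\eta = {\cal O}(1/L)$ and $m = {\cal O}(\kappa)$, the per-loop factor $\lambda^{\texttt{SVRG}}$ is an absolute constant strictly below $1$, so that a fixed number of outer loops shrinks the gap by a constant factor.

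The crux — and the step I expect to be the main obstacle — is this uniform bound $\lambda^{\texttt{SVRG}} \leq c < 1$, which requires balancing the three bracketed terms against the prefactor $1/[1 - (1-\mu\eta)^{m-1}]$. I would set $\eta = c_1/L$ with $c_1 < 1/4$, so that $1 - 2\eta L = 1 - 2c_1$ is a positive constant and $\mu\eta = c_1/\kappa$. Taking $m = c_2\kappa$ and using the elementary estimate $(1 - c_1/\kappa)^{c_2\kappa} \leq e^{-c_1 c_2}$, the denominator $1 - (1-\mu\eta)^{m-1}$ is bounded below by the positive constant $\approx 1 - e^{-c_1 c_2}$, the first bracketed term is ${\cal O}(e^{-c_1 c_2})$, the middle term is ${\cal O}(1/\kappa)$ hence asymptotically negligible, and the last term is ${\cal O}(c_1)$. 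Collecting these reduces the claim to an inequality of the form $2e^{-c_1 c_2}(1-c_1) + 4c_1 < 1$; since $c_1 < 1/4$ leaves positive slack, one chooses the product $c_1 c_2$ large enough that $e^{-c_1 c_2}$ is dominated. The delicate points are this joint choice of $(c_1, c_2)$ and verifying that $m = {\cal O}(\kappa)$ is large enough to keep the prefactor bounded, since it diverges as $m \to 1$.

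With $\lambda^{\texttt{SVRG}} \leq c < 1$ in hand, attaining $\mathbb{E}[f(\tilde{\mathbf{x}}^S) - f(\mathbf{x}^*)] \leq \epsilon$ requires $c^S[f(\tilde{\mathbf{x}}^0) - f(\mathbf{x}^*)] \leq \epsilon$, i.e. $S = {\cal O}(\ln \frac{1}{\epsilon})$ outer loops, absorbing the initial gap and $\ln(1/c)$ into the constant. Finally I would count the IFO cost of one outer loop: the snapshot gradient $\mathbf{g}^s = \nabla f(\mathbf{x}_0^s)$ costs $n$ oracle calls, while each of the $m$ inner iterations evaluates $\nabla f_{i_k}(\mathbf{x}_k^s)$ and $\nabla f_{i_k}(\mathbf{x}_0^s)$ for $2$ calls, totalling $2m = {\cal O}(\kappa)$. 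Hence each outer loop costs ${\cal O}(n + \kappa)$, and multiplying by $S$ yields the claimed ${\cal O}((n+\kappa)\ln\frac{1}{\epsilon})$.
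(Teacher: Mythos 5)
Your proposal is correct and follows essentially the same route as the paper's proof: the paper simply instantiates your symbolic constants concretely, taking $\eta = 1/(8L)$ and $m = 3/(\mu\eta)+1 = 24\kappa+1$, using $(1-\mu\eta)^{1/(\mu\eta)} \leq 0.4$ to verify $\lambda^{\texttt{SVRG}} \leq 0.5$, and then counting ${\cal O}(\ln\frac{1}{\epsilon})$ outer loops at ${\cal O}(n+\kappa)$ IFO calls each. Your only loose phrase is calling the middle term ``asymptotically negligible''; it is in fact uniformly bounded by the constant $2c_1^2/(1-2c_1)$ for all $\kappa \geq 1$, which the paper's explicit constants absorb automatically.
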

Note that similar to U-Avg, W-Avg incurs lower complexity compared with L-Avg in \citep{tan2016,hu2018diss}.

\subsection{Weighted Averaging for SARAH} 

\begin{figure}[t]
	\centering
	\begin{tabular}{cc}
		\hspace{-0.2cm}
		\includegraphics[width=.40\textwidth]{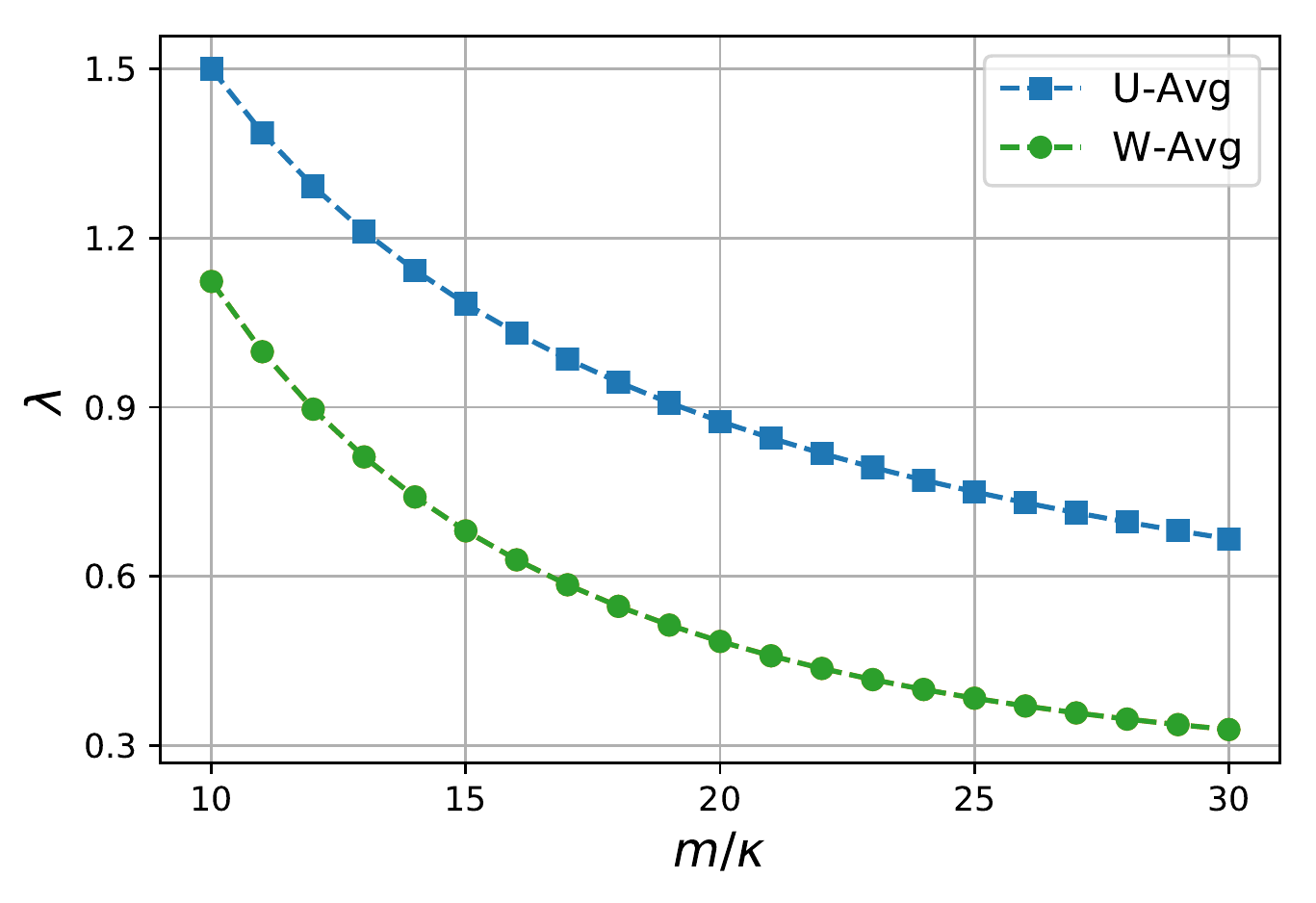}&
		\hspace{-0.1cm}
		\includegraphics[width=.40\textwidth]{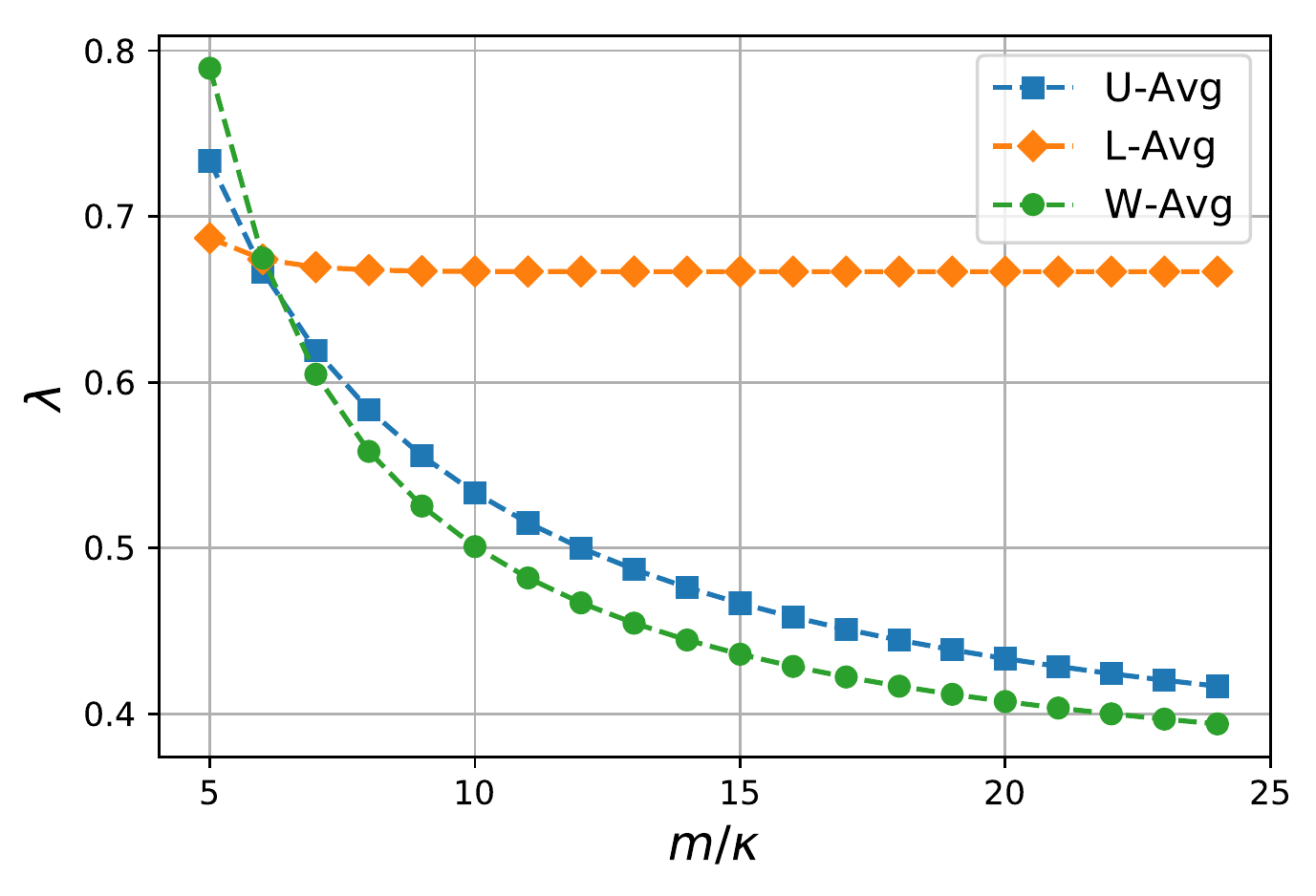}
		\\ (a) SVRG  & (b) SARAH
	\end{tabular}
	\caption{A comparison of the analytical convergence rate for SVRG and SARAH. In both figures we set $\kappa = 10^5$ with $L =1$, $\mu = 10^{-5}$, and the step sizes are selected as: (a) SVRG with $\eta = 0.1/L$; and (b) SARAH with $\eta = 0.5/L$. }
	 \label{fig.rate_svrg_sarah}
\end{figure}

SARAH is challenging to analyze due to the bias present in the estimate $\mathbf{v}_k$, which makes the ES-based treatment of SARAH fundamentally different from that of SVRG. To see this, it is useful to start with the following lemma.

\begin{lemma}\label{lemma.sarah_addition}
For any deterministic $\mathbf{x}$, it holds in SARAH that
	\begin{align*}
		& \mathbb{E}\big[ \langle \mathbf{v}_k - \nabla f(\mathbf{x}_k), \mathbf{x} - \mathbf{x}_k \rangle \big] = \frac{\eta}{2} \sum_{\tau=0}^{k-1} \mathbb{E}\Big[ \| \mathbf{v}_\tau - \nabla f(\mathbf{x}_\tau) \|^2 + \| \mathbf{v}_\tau  \|^2 - \| \nabla f(\mathbf{x}_\tau ) \|^2 \Big].\nonumber
	\end{align*}
\end{lemma}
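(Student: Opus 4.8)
The plan is to track the \emph{bias} of SARAH's gradient estimate, namely $\mathbf{w}_k := \mathbf{v}_k - \nabla f(\mathbf{x}_k)$, and to exploit the fact that although each $\mathbf{v}_k$ is biased, the bias sequence $\{\mathbf{w}_k\}$ is itself a martingale. Since $\mathbf{x}_k,\mathbf{x}_{k-1},\mathbf{v}_{k-1}$ are all $\mathcal{F}_{k-1}$-measurable and only $i_k$ is drawn afresh, the SARAH recursion $\mathbf{v}_k = \nabla f_{i_k}(\mathbf{x}_k) - \nabla f_{i_k}(\mathbf{x}_{k-1}) + \mathbf{v}_{k-1}$ yields $\mathbb{E}[\mathbf{v}_k \mid \mathcal{F}_{k-1}] = \nabla f(\mathbf{x}_k) - \nabla f(\mathbf{x}_{k-1}) + \mathbf{v}_{k-1}$, and therefore
\[
	\mathbb{E}[\mathbf{w}_k \mid \mathcal{F}_{k-1}] = \mathbf{v}_{k-1} - \nabla f(\mathbf{x}_{k-1}) = \mathbf{w}_{k-1},
\]
with the initialization $\mathbf{w}_0 = \mathbf{v}_0 - \nabla f(\mathbf{x}_0) = \mathbf{0}$. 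This single martingale identity is what replaces the unbiasedness that drives the SVRG analysis.

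Next I would derive a one-step recursion for the target quantity $A_k := \mathbb{E}\big[\langle \mathbf{w}_k, \mathbf{x} - \mathbf{x}_k\rangle\big]$. Using the update $\mathbf{x}_k = \mathbf{x}_{k-1} - \eta\mathbf{v}_{k-1}$ to write $\mathbf{x} - \mathbf{x}_k = (\mathbf{x} - \mathbf{x}_{k-1}) + \eta\mathbf{v}_{k-1}$, I split
\[
	\langle \mathbf{w}_k, \mathbf{x} - \mathbf{x}_k\rangle = \langle \mathbf{w}_k, \mathbf{x} - \mathbf{x}_{k-1}\rangle + \eta\langle \mathbf{w}_k, \mathbf{v}_{k-1}\rangle.
\]
Taking the conditional expectation given $\mathcal{F}_{k-1}$ and pulling the $\mathcal{F}_{k-1}$-measurable vectors $\mathbf{x} - \mathbf{x}_{k-1}$ and $\mathbf{v}_{k-1}$ outside the inner products, the martingale identity turns $\mathbf{w}_k$ into $\mathbf{w}_{k-1}$; taking full expectation then gives
\[
	A_k = A_{k-1} + \eta\,\mathbb{E}\big[\langle \mathbf{w}_{k-1}, \mathbf{v}_{k-1}\rangle\big].
\]

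Finally I would reshape the cross term with a polarization identity. Because $\mathbf{w}_{k-1} = \mathbf{v}_{k-1} - \nabla f(\mathbf{x}_{k-1})$, a direct expansion yields
\[
	\langle \mathbf{w}_{k-1}, \mathbf{v}_{k-1}\rangle = \tfrac{1}{2}\big( \|\mathbf{w}_{k-1}\|^2 + \|\mathbf{v}_{k-1}\|^2 - \|\nabla f(\mathbf{x}_{k-1})\|^2 \big),
\]
which, recalling $\|\mathbf{w}_{k-1}\|^2 = \|\mathbf{v}_{k-1} - \nabla f(\mathbf{x}_{k-1})\|^2$, is exactly the summand in the claim evaluated at $\tau = k-1$. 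Substituting this into the recursion and unrolling from the base case $A_0 = 0$ (a consequence of $\mathbf{w}_0 = \mathbf{0}$) telescopes directly to the stated identity. The algebra is routine; the only place demanding care is the measurability bookkeeping in the conditioning step — recognizing that $\mathbf{x}_k$, hence $\nabla f(\mathbf{x}_k)$, is $\mathcal{F}_{k-1}$-measurable so that the bias is a genuine martingale, and that $\mathbf{x}$ being deterministic lets the inner products factor through the conditional expectation. This is precisely where SARAH departs from SVRG: the bias does not vanish but propagates as a martingale, and its accumulated effect is what the right-hand sum quantifies.
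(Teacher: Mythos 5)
Your proof is correct and takes essentially the same route as the paper's: both condition on $\mathcal{F}_{k-1}$ to shift the bias $\mathbf{v}_k - \nabla f(\mathbf{x}_k)$ back to $\mathbf{v}_{k-1} - \nabla f(\mathbf{x}_{k-1})$, split $\mathbf{x} - \mathbf{x}_k$ via the update $\mathbf{x}_k = \mathbf{x}_{k-1} - \eta\mathbf{v}_{k-1}$, apply the polarization identity $2\langle \mathbf{a},\mathbf{b}\rangle = \|\mathbf{a}\|^2 + \|\mathbf{b}\|^2 - \|\mathbf{a}-\mathbf{b}\|^2$, and telescope using $\mathbf{v}_0 = \nabla f(\mathbf{x}_0)$. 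Your explicit martingale framing of the bias sequence is merely a cleaner packaging of the paper's direct unrolling; the computations are identical.
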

Lemma \ref{lemma.sarah_addition} reveals the main difference in the ES-based argument for SARAH, namely that $\mathbb{E}\big[ \langle \mathbf{v}_k - \nabla f(\mathbf{x}_k), \mathbf{x} - \mathbf{x}_k \rangle \big] \neq 0$, while the same inner product for SVRG equals to $0$ in expectation. Reflecting back to \eqref{eq.svrgeg}, the consequence of having a non-zero $\mathbb{E}\big[ \langle \mathbf{v}_k - \nabla f(\mathbf{x}_k), \mathbf{x} - \mathbf{x}_k \rangle \big] $ is that $\mathbb{E}[ \Phi_k (\mathbf{x}) ]$ is not necessarily approaching a lower bound of $f(\mathbf{x})$ as $k \rightarrow \infty$; thus,
\begin{align}\label{eq.saraheg}
	\mathbb{E}\big[ \Phi_k (\mathbf{x}) \big]
	 \leq (1-\delta)^k \big[\Phi_0(\mathbf{x}) - f(\mathbf{x})\big] + f(\mathbf{x}) + C 
\end{align}
where $C$ is a non-zero term that is not present in \eqref{eq.svrgeg} when applied to SVRG; see detailed derivations in Appendix \ref{apdx.eg}.

Interestingly, upon capitalizing on the properties of $\mathbf{v}_k$, the ensuing theorem establishes linear convergence for SARAH with a proper W-Avg vector $\mathbf{p}^s$. 
\begin{theorem}\label{thm.sarah}
\textbf{(SARAH with W-Avg.)}
Under Assumptions \ref{as.1} and \ref{as.4}, define the ES as in \eqref{eq.est_seq} with $\mu_0 = \mu$, $\delta_k = \mu_k \eta, \forall k$, and $\Phi_0^* = f(\mathbf{x}_0)$. With $\delta:= \mu\eta$, select $\eta < 1/L$ and $m$ large enough, so that
	\begin{align*}
		\lambda^{\texttt{SARAH}} & := \bigg[ (1\! -\! \delta)^m -\Big( 1 \!-\! \frac{2\eta L}{1\!+\! \kappa} \Big)^m \bigg] \frac{ L\! +\! \mu}{ c(L\! -\! \mu)} + \frac{ (1\! -\!  \delta)^m }{c \delta} 
		 + \frac{\eta L(m \!-\! 1)}{c(2 \!-\! \eta L)} +  \frac{2 \!-\! 2 \eta L}{2\! -\! \eta L} \frac{1+\kappa}{2 c \eta L}    < 1
	\end{align*}
where $c = m - \frac{1}{\delta} + \frac{(1-\delta)^m}{\delta}$. Setting $p_k = (1 - (1-\delta)^{m-k-1})/c, \forall k = 0,1,\ldots, m-2$, and $p_{m-1} = p_m = 0$, SARAH with this W-Avg satisfy 
	\begin{align*}
	\mathbb{E} \big[ \|\nabla f( \tilde{\mathbf{x}}^s ) \|^2 \big]  \leq \lambda^{\texttt{SARAH}} \mathbb{E} \big[ \|\nabla f( \tilde{\mathbf{x}}^{s-1} ) \|^2 \big].
\end{align*}
\end{theorem}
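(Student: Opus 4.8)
The plan is to push the estimate-sequence (ES) apparatus of Lemma~\ref{lemma.est_seq} through one inner loop, evaluate it at the minimizer $\mathbf{x}^*$, and distill the telescoped bound into a contraction on $\|\nabla f(\cdot)\|^2$. Throughout I drop the superscript $s$ and recall $\mathbf{x}_0=\tilde{\mathbf{x}}^{s-1}$, $\mathbf{v}_0=\nabla f(\mathbf{x}_0)$. With $\mu_0=\mu$, $\delta_k=\mu_k\eta$, $\Phi_0^*=f(\mathbf{x}_0)$, part (i) of Lemma~\ref{lemma.est_seq} gives $\mu_k\equiv\mu$ and $\delta_k\equiv\delta:=\mu\eta$, so the recursion has constant coefficients. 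Writing the ES at $\mathbf{x}=\mathbf{x}^*$ and splitting $\langle\mathbf{v}_{k-1},\mathbf{x}^*-\mathbf{x}_{k-1}\rangle=\langle\nabla f(\mathbf{x}_{k-1}),\mathbf{x}^*-\mathbf{x}_{k-1}\rangle+R_{k-1}$ with $R_{k-1}:=\langle\mathbf{v}_{k-1}-\nabla f(\mathbf{x}_{k-1}),\mathbf{x}^*-\mathbf{x}_{k-1}\rangle$, strong convexity of $f$ (Assumption~\ref{as.3}, implied by Assumption~\ref{as.4}) bounds the deterministic bracket by $f(\mathbf{x}^*)$. Telescoping this upper bound for $\mathbb{E}[\Phi_m(\mathbf{x}^*)]$ against the closed form of $\Phi_m^*$ from part (iii) of Lemma~\ref{lemma.est_seq}, using $\Phi_m^*\le\Phi_m(\mathbf{x}^*)$, and cancelling the $(1-\delta)^m f(\mathbf{x}_0)$ and $f(\mathbf{x}^*)$ terms (via $\delta\sum_{k=0}^{m-1}(1-\delta)^{m-1-k}=1-(1-\delta)^m$) yields the master inequality
\begin{align*}
\delta\sum_{k=0}^{m-1}(1-\delta)^{m-1-k}\mathbb{E}[f(\mathbf{x}_k)-f(\mathbf{x}^*)] &\le \frac{\mu(1-\delta)^m}{2}\mathbb{E}\|\mathbf{x}^*-\mathbf{x}_0\|^2 \\
&\quad + \frac{\mu\eta^2}{2}\sum_{k=0}^{m-1}(1-\delta)^{m-1-k}\mathbb{E}\|\mathbf{v}_k\|^2 + \delta\sum_{k=0}^{m-1}(1-\delta)^{m-1-k}\mathbb{E}[R_k].
\end{align*}

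The heart of the argument is the remainder $\delta\sum_k(1-\delta)^{m-1-k}\mathbb{E}[R_k]$, where SARAH's bias enters and where the averaging weights are born. Expanding each $\mathbb{E}[R_k]$ through Lemma~\ref{lemma.sarah_addition} at $\mathbf{x}=\mathbf{x}^*$ turns it into the double sum $\tfrac{\eta}{2}\sum_{k}(1-\delta)^{m-1-k}\sum_{\tau=0}^{k-1}\mathbb{E}[S_\tau]$ with $S_\tau:=\|\mathbf{v}_\tau-\nabla f(\mathbf{x}_\tau)\|^2+\|\mathbf{v}_\tau\|^2-\|\nabla f(\mathbf{x}_\tau)\|^2$. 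Swapping the summation order, the inner geometric sum collapses to $\sum_{k=\tau+1}^{m-1}(1-\delta)^{m-1-k}=\frac{1-(1-\delta)^{m-1-\tau}}{\delta}=\frac{c}{\delta}\,p_\tau$; that is, the weights $p_\tau=[1-(1-\delta)^{m-\tau-1}]/c$ are exactly what the swap produces, and $c=\sum_{\tau=0}^{m-2}[1-(1-\delta)^{m-1-\tau}]$ is precisely the normalizer making $\mathbf{p}\in\Delta_{m+1}$. Hence the remainder equals $\frac{\eta c}{2}\sum_{\tau=0}^{m-2}p_\tau\mathbb{E}[S_\tau]$, whose $-\|\nabla f(\mathbf{x}_\tau)\|^2$ piece is exactly $-\frac{\eta c}{2}\mathbb{E}\|\nabla f(\tilde{\mathbf{x}}^s)\|^2$. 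Since the left-hand function-gap sum is nonnegative, I drop it and move this term to the left, isolating $\frac{\eta c}{2}\mathbb{E}\|\nabla f(\tilde{\mathbf{x}}^s)\|^2$ and leaving only $\mathbf{x}_0$-anchored quantities together with $\|\mathbf{v}_\tau\|^2$ and mean-square-error terms on the right.

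What remains is to dominate every right-hand term by a multiple of $\|\nabla f(\mathbf{x}_0)\|^2=\|\nabla f(\tilde{\mathbf{x}}^{s-1})\|^2$. The term $\frac{\mu(1-\delta)^m}{2}\|\mathbf{x}^*-\mathbf{x}_0\|^2$ uses $\mu^2\|\mathbf{x}_0-\mathbf{x}^*\|^2\le\|\nabla f(\mathbf{x}_0)\|^2$ and, after dividing by $\frac{\eta c}{2}$, gives the $\frac{(1-\delta)^m}{c\delta}$ summand. The essential auxiliary step, and the only place where the stronger Assumption~\ref{as.4} is used, is a decay bound $\mathbb{E}\|\mathbf{v}_k\|^2\le\big(1-\frac{2\eta L}{1+\kappa}\big)^k\|\nabla f(\mathbf{x}_0)\|^2$, which I would derive from $\mathbf{x}_k-\mathbf{x}_{k-1}=-\eta\mathbf{v}_{k-1}$ and the strong-convexity-plus-smoothness co-coercivity $\langle\nabla f_i(\mathbf{x})-\nabla f_i(\mathbf{y}),\mathbf{x}-\mathbf{y}\rangle\ge\frac{\mu L}{\mu+L}\|\mathbf{x}-\mathbf{y}\|^2+\frac{1}{\mu+L}\|\nabla f_i(\mathbf{x})-\nabla f_i(\mathbf{y})\|^2$; since $\eta<1/L<\frac{2}{\mu+L}$, the gradient-difference residual carries a nonpositive coefficient and is discarded, leaving the one-step contraction with factor $1-\frac{2\eta L}{1+\kappa}$. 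Plugging this into $\frac{\mu\eta^2}{2}\sum(1-\delta)^{m-1-k}\|\mathbf{v}_k\|^2$ and using $\sum_{k=0}^{m-1}a^{m-1-k}b^k=\frac{a^m-b^m}{a-b}$ with $a=1-\delta$, $b=1-\frac{2\eta L}{1+\kappa}$, $a-b=\delta\frac{L-\mu}{L+\mu}$, reproduces the first summand $[(1-\delta)^m-(1-\frac{2\eta L}{1+\kappa})^m]\frac{L+\mu}{c(L-\mu)}$. For the residue $\frac{\eta c}{2}\sum_\tau p_\tau S_\tau$ I bound the MSE by Lemma~\ref{lemma.est_err} (discarding its negative $\|\mathbf{v}_\tau\|^2$ part and using $p_\tau\le1/c$ over the $m-1$ indices) to get $\frac{\eta L(m-1)}{c(2-\eta L)}$, and bound the explicit $\|\mathbf{v}_\tau\|^2$ again by the decay lemma with $p_\tau\le1/c$ and $\sum_\tau b^\tau\le\frac{1}{1-b}=\frac{1+\kappa}{2\eta L}$ to obtain $\frac{2-2\eta L}{2-\eta L}\frac{1+\kappa}{2c\eta L}$. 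Dividing through by $\frac{\eta c}{2}$ and summing the four pieces gives $\lambda^{\texttt{SARAH}}$, with $\eta<1/L$ keeping every coefficient (such as $\frac{\eta L}{2-\eta L}$ and $\frac{2-2\eta L}{2-\eta L}$) well defined and positive.

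The main obstacle I expect is twofold. First, establishing the $\mathbf{v}_k$-decay lemma with the \emph{sharp} rate $1-\frac{2\eta L}{1+\kappa}$ is what drives the improved first summand of $\lambda^{\texttt{SARAH}}$ and is the real reason Assumption~\ref{as.4}, not merely Assumption~\ref{as.3}, must be invoked; a looser contraction would spoil the $\kappa$ dependence. Second, the summation-order swap demands careful bookkeeping: one must confirm that the collapsed geometric coefficient is exactly $\frac{c}{\delta}p_\tau$ and that $c$ normalizes $\mathbf{p}$, so that the $-\|\nabla f(\mathbf{x}_\tau)\|^2$ contribution assembles cleanly into $\mathbb{E}\|\nabla f(\tilde{\mathbf{x}}^s)\|^2$. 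Once those two points are secured, the rest is convexity-inequality substitution and geometric-series algebra, and a final total expectation over the loop index converts the per-loop estimate into the stated $\mathbb{E}\|\nabla f(\tilde{\mathbf{x}}^s)\|^2\le\lambda^{\texttt{SARAH}}\,\mathbb{E}\|\nabla f(\tilde{\mathbf{x}}^{s-1})\|^2$.
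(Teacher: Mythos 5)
Your proposal matches the paper's own proof essentially step for step: the telescoped ES evaluated at $\mathbf{x}^*$ (the paper's Lemma \ref{lemma.est_seq2}), the bias term expanded via Lemma \ref{lemma.sarah_addition}, the summation swap whose collapsed geometric factors $\bigl(1-(1-\delta)^{m-1-\tau}\bigr)/\delta$ define the weights $p_\tau$ and the normalizer $c$, the exact factor kept on the negative $\|\nabla f(\mathbf{x}_\tau)\|^2$ terms versus the $1/\delta$ relaxation on the positive ones, Lemma \ref{lemma.est_err} for the MSE, and the sharp decay bound $\mathbb{E}\|\mathbf{v}_k\|^2 \le (1-\tfrac{2\eta L}{1+\kappa})^k\,\mathbb{E}\|\nabla f(\mathbf{x}_0)\|^2$ (which the paper imports from \citet{nguyen2017} as Lemma \ref{lemma.sarah_v_norm} rather than rederiving via co-coercivity) with the same geometric-difference algebra yielding $\tfrac{L+\mu}{L-\mu}$. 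The only blemish is the phrase ``discarding its negative $\|\mathbf{v}_\tau\|^2$ part'' of Lemma \ref{lemma.est_err}, which contradicts your own final constant $\tfrac{2-2\eta L}{2-\eta L}\tfrac{1+\kappa}{2c\eta L}$ --- that factor arises precisely by \emph{keeping} the $-\|\mathbf{v}_\tau\|^2$ term and merging it with the explicit $+\|\mathbf{v}_\tau\|^2$ in $S_\tau$, as your stated result shows you in fact did.
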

The expression of $\lambda^{\texttt{SARAH}}$ is complicated because we want the upper bound of the convergence rate to be as tight as possible. To demonstrate this with an example, choosing $\eta = 1/(2L)$ and $m = 5 \kappa$, we have $\lambda^{\texttt{SARAH}}\approx 0.8$. Fig. \ref{fig.rate_svrg_sarah}(b) compares SARAH with W-Avg versus SARAH with U-Avg and L-Avg. The advantage of W-Avg is more pronounced as $m$ is chosen larger. 

As far as complexity of SARAH with W-Avg, it is comparable with that of L-Avg or U-Avg, as asserted next.   

\begin{corollary}\label{coro.sarah}
	Choosing $m = {\cal O}(\kappa)$ and other parameters as in Theorem \ref{thm.sarah}, the complexity of SARAH with W-Avg to find $\tilde{\mathbf{x}}^s$ satisfying $\mathbb{E} [ \| \nabla f(\tilde{\mathbf{x}}^s)\|^2] \leq \epsilon $, is ${\cal O}\big( (n+\kappa) \ln\frac{1}{\epsilon} \big)$.
\end{corollary}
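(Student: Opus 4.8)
The plan is to obtain the complexity bound directly from the linear-convergence guarantee of Theorem \ref{thm.sarah}, by (i) certifying that the contraction factor $\lambda^{\texttt{SARAH}}$ is a constant strictly below one under the prescribed parameter scaling, (ii) unrolling the per-outer-loop recursion to count the number $S$ of outer loops, and (iii) tallying the IFO calls consumed per outer loop. Multiplying the last two quantities then yields the claim.

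First I would substitute the scaling $\eta = {\cal O}(1/L)$ and $m = {\cal O}(\kappa)$ into $\lambda^{\texttt{SARAH}}$ and argue that its four summands are each ${\cal O}(1)$ and, more importantly, that their sum can be driven below one. Writing $\eta = b/L$ and $m = a\kappa$ for absolute constants $a,b$, we get $\delta = \mu\eta = b/\kappa$, so $\delta m = ab = \Theta(1)$ and $(1-\delta)^m \to e^{-ab}$. Consequently $c = m - \tfrac{1}{\delta} + \tfrac{(1-\delta)^m}{\delta} = \kappa\big(a - \tfrac{1}{b} + \tfrac{e^{-ab}}{b}\big) = \Theta(\kappa)$, which is positive since $(1-e^{-ab})/(ab) < 1$; this positivity also guarantees the averaging weights $p_k$ are valid probabilities. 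With $c = \Theta(\kappa)$ and $\delta = \Theta(1/\kappa)$, the first bracketed term is $\Theta(1/\kappa)$ (the difference of exponentials $(1-\delta)^m - (1-\tfrac{2\eta L}{1+\kappa})^m \to e^{-ab}-e^{-2ab}$ is a positive constant, and $\tfrac{L+\mu}{L-\mu}\to 1$), while the remaining three terms are each $\Theta(1)$. Hence $\lambda^{\texttt{SARAH}}$ is a $\kappa$-independent constant, and choosing $a,b$ appropriately (e.g. $b=1/2$, $a=5$, as in the text, giving $\lambda^{\texttt{SARAH}}\approx 0.8$) places it in $(0,1)$.

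Next I would iterate the bound of Theorem \ref{thm.sarah} across outer loops to obtain $\mathbb{E}\big[\|\nabla f(\tilde{\mathbf{x}}^S)\|^2\big] \leq (\lambda^{\texttt{SARAH}})^S \|\nabla f(\tilde{\mathbf{x}}^0)\|^2$. Requiring the right-hand side to fall below $\epsilon$ and solving for $S$ gives $S = {\cal O}(\ln\tfrac{1}{\epsilon})$, because $\lambda^{\texttt{SARAH}}$ is a constant below one. It then remains to count IFO calls per outer loop: the snapshot gradient $\mathbf{v}_0^s = \nabla f(\mathbf{x}_0^s)$ costs $n$ calls, and each of the $m-1$ inner iterations issues two incremental-gradient evaluations (at $\mathbf{x}_k^s$ and $\mathbf{x}_{k-1}^s$), for ${\cal O}(m) = {\cal O}(\kappa)$ calls; the per-outer-loop cost is thus ${\cal O}(n+\kappa)$. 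Multiplying by $S$ yields the asserted ${\cal O}\big((n+\kappa)\ln\tfrac{1}{\epsilon}\big)$ complexity.

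The main obstacle is the first step — verifying that $\lambda^{\texttt{SARAH}}$ is a genuine constant below one rather than merely ${\cal O}(1)$. The delicate point is the near-cancellation in $c = m - \tfrac{1}{\delta} + \tfrac{(1-\delta)^m}{\delta}$: the two leading $\Theta(\kappa)$ contributions $m$ and $1/\delta$ almost annihilate each other, so one must confirm that the residual is a strictly positive $\Theta(\kappa)$ quantity (equivalently $a > (1-e^{-ab})/b$) before the $\Theta(1)$ estimates of the individual summands become meaningful. Once $c = \Theta(\kappa)$ is pinned down, the remainder is routine asymptotic bookkeeping of the four terms.
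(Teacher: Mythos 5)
Your proposal is correct and takes essentially the same route as the paper's proof: instantiate Theorem \ref{thm.sarah} with $\eta=\Theta(1/L)$, $m=\Theta(\kappa)$ (the paper fixes $\eta=1/(2L)$, $m=6\kappa$, giving $\delta=1/(2\kappa)$, $c\geq 4\kappa$, and $\lambda^{\texttt{SARAH}}\leq 0.75$), conclude that ${\cal O}\big(\ln\frac{1}{\epsilon}\big)$ outer loops suffice, and multiply by the ${\cal O}(n+\kappa)$ IFO cost per outer loop. Your explicit check that $c=m-\frac{1}{\delta}+\frac{(1-\delta)^m}{\delta}$ remains a strictly positive $\Theta(\kappa)$ quantity despite the near-cancellation is handled in the paper only implicitly, via its concrete constant choices.
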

A few remarks are now in order on our analytical findings: i) most existing ES-based proofs use 
$\mathbb{E}[f(\tilde{\mathbf{x}}^s) - f(\mathbf{x}^*)]$ as optimality metric, while Theorem \ref{thm.sarah} and Corollary \ref{coro.sarah} rely on $\mathbb{E}[\| \nabla f(\tilde{\mathbf{x}}^s)\|^2]$; ii) the analysis method still holds when Assumption \ref{as.4} is weakened to Assumption \ref{as.3}, at the price of having  worse $\kappa$-dependence of the complexity; that is, ${\cal O}\big( (n+\kappa^2) \ln\frac{1}{\epsilon}\big)$, which is of the same order as L-Avg under Assumptions \ref{as.1} -- \ref{as.3} \citep{li2019l2s,liuclass}.


\subsection{Averaging Is More Than A `Proof Trick'}\label{sec.avg}

\begin{wrapfigure}{l}{0.45\textwidth}
\vspace{-0.3cm}
	\centering
	\includegraphics[height=3.8cm]{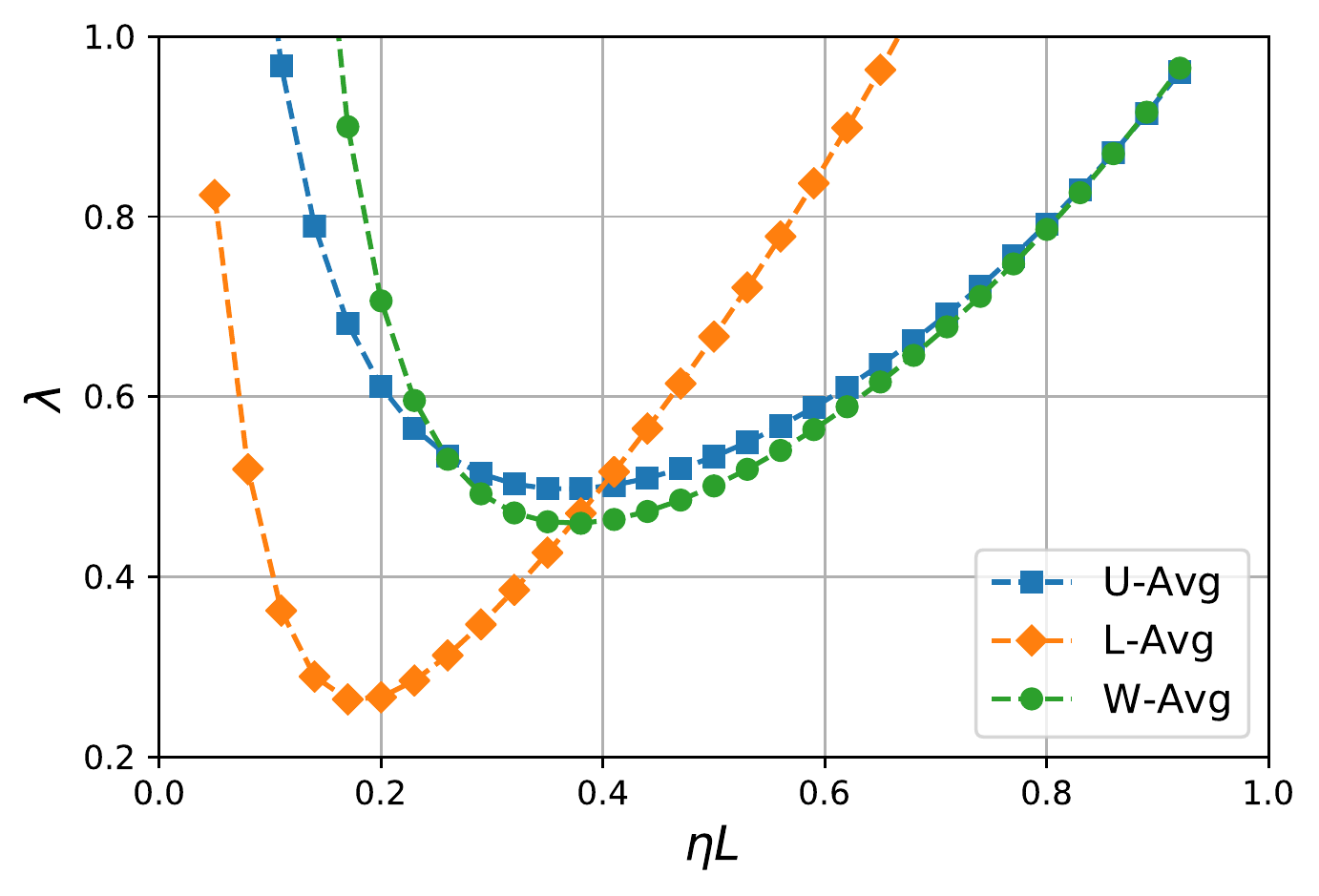}
	\vspace{-0.6cm}
	\caption{SARAH's analytical convergence with different averaging options ($\kappa = 10^5$, $L =1$, $\mu = 10^{-5}$, and fixed $m = 10\kappa$).}  
	\label{fig.sarah_theo1}
	\vspace{-0.3cm}
\end{wrapfigure}

Existing forms of averaging such as U-Avg and W-Avg, are typically considered as `proof tricks' for simplifying the theoretical analysis \citep{johnson2013,tan2016,nguyen2017,li2019l2s}. In this subsection, we contend that averaging can distinctly affect performance, and should be adapted to other parameters. We will take SARAH with $\eta = {\cal O}(1/L)$ and $m = {\cal O}(\kappa)$ as an example rather than SVRG since such parameter choices guarantee convergence regardless of the averaging employed. (For SVRG with L-Avg on the other hand, the step size has to be chosen differently with W-Avg or U-Avg.)

We will first look at the convergence rate of SARAH across different averaging options. Fixing $m = {\cal O}(\kappa)$ and changing $\eta$, the theoretical convergence rate is plotted in Fig. \ref{fig.sarah_theo1}. It is observed that with smaller step sizes, L-Avg enjoys faster convergence, while larger step sizes tend to favor W-Avg and U-Avg instead.

\begin{figure*}[t]
	\centering
	\begin{tabular}{ccc}
		\hspace{-0.2cm}
		\includegraphics[width=.32\textwidth]{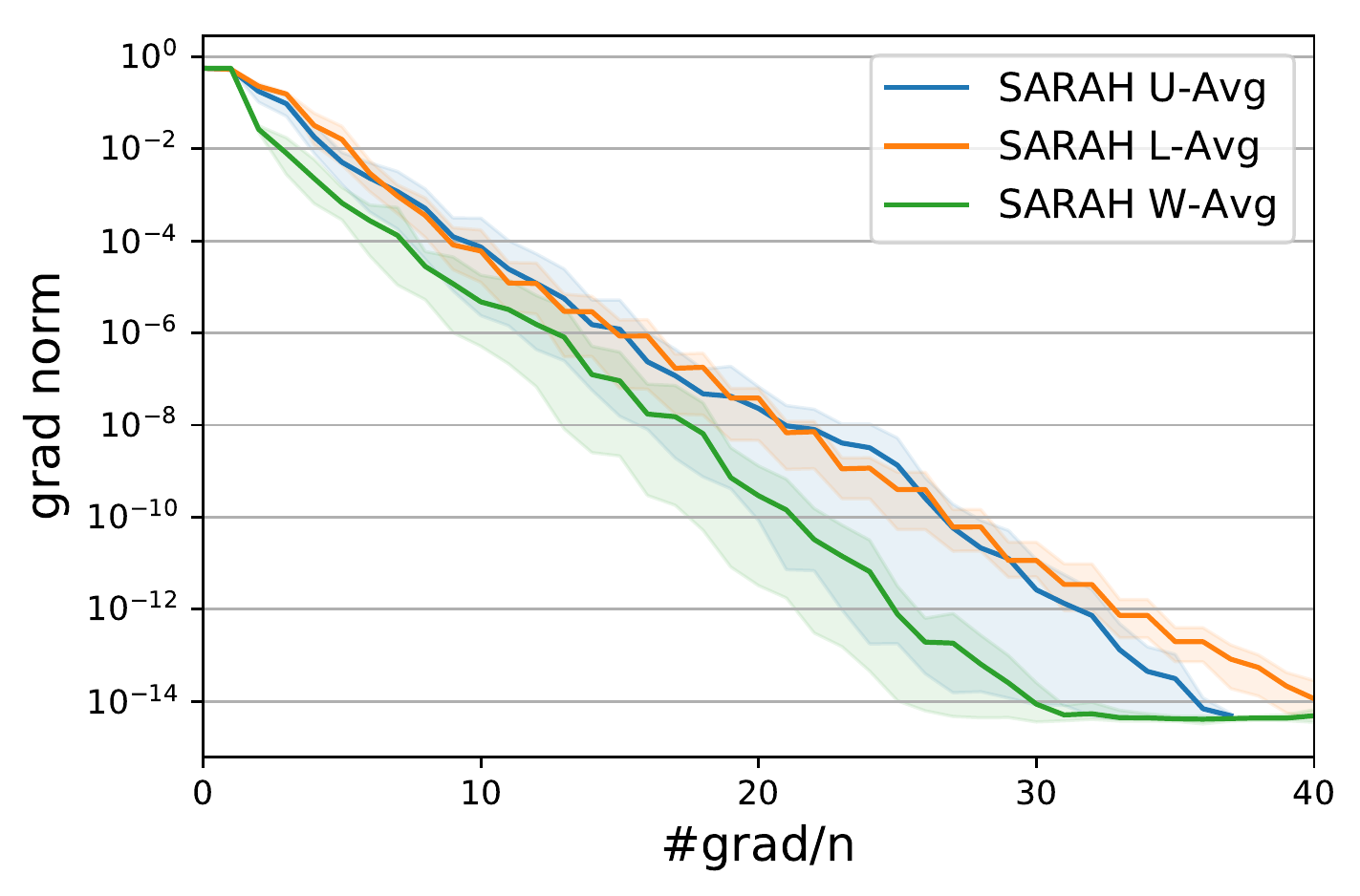}&
		\hspace{-0.2cm}
		\includegraphics[width=.32\textwidth]{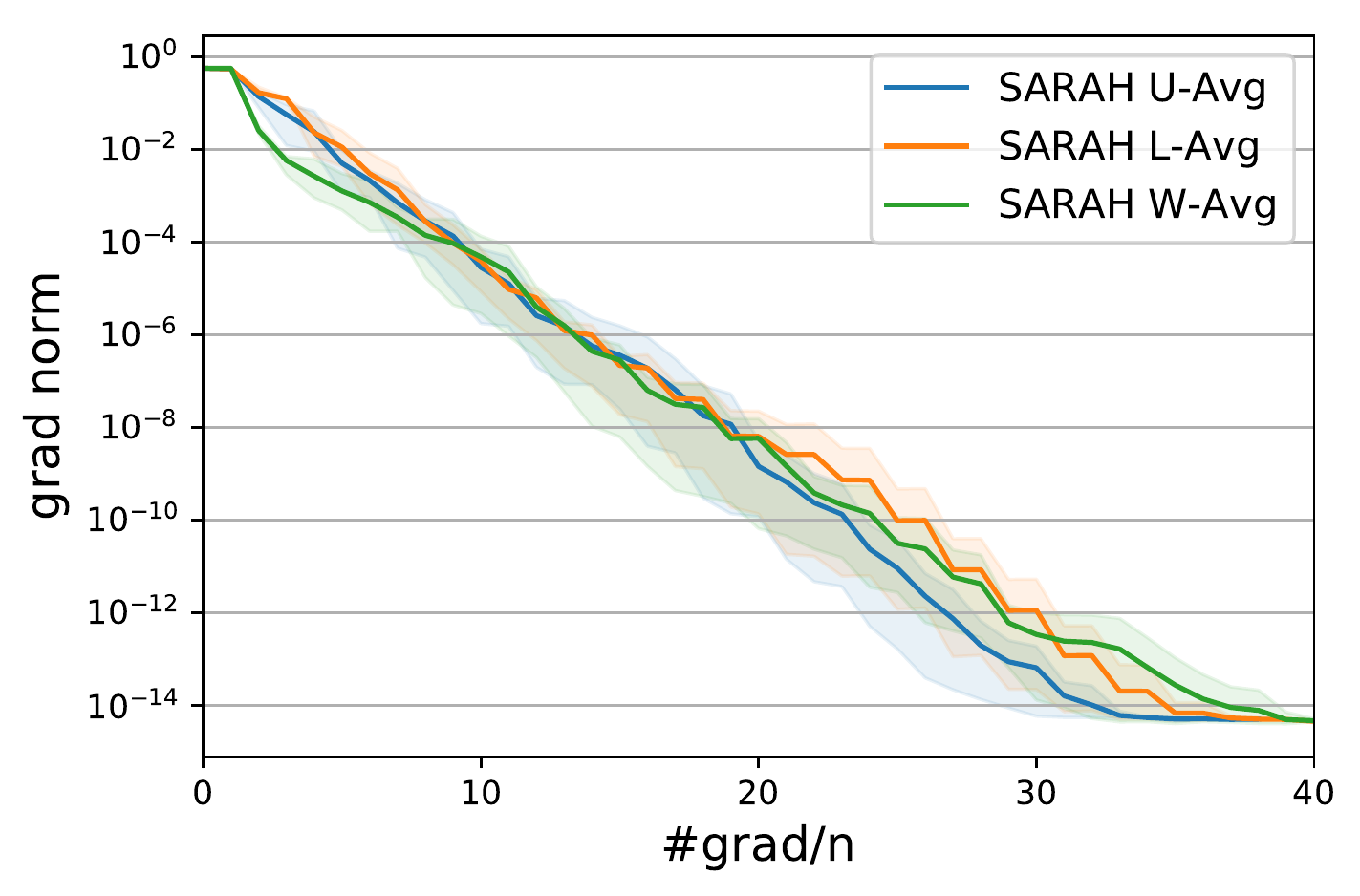}&
		\hspace{-0.3cm}
		\includegraphics[width=.32\textwidth]{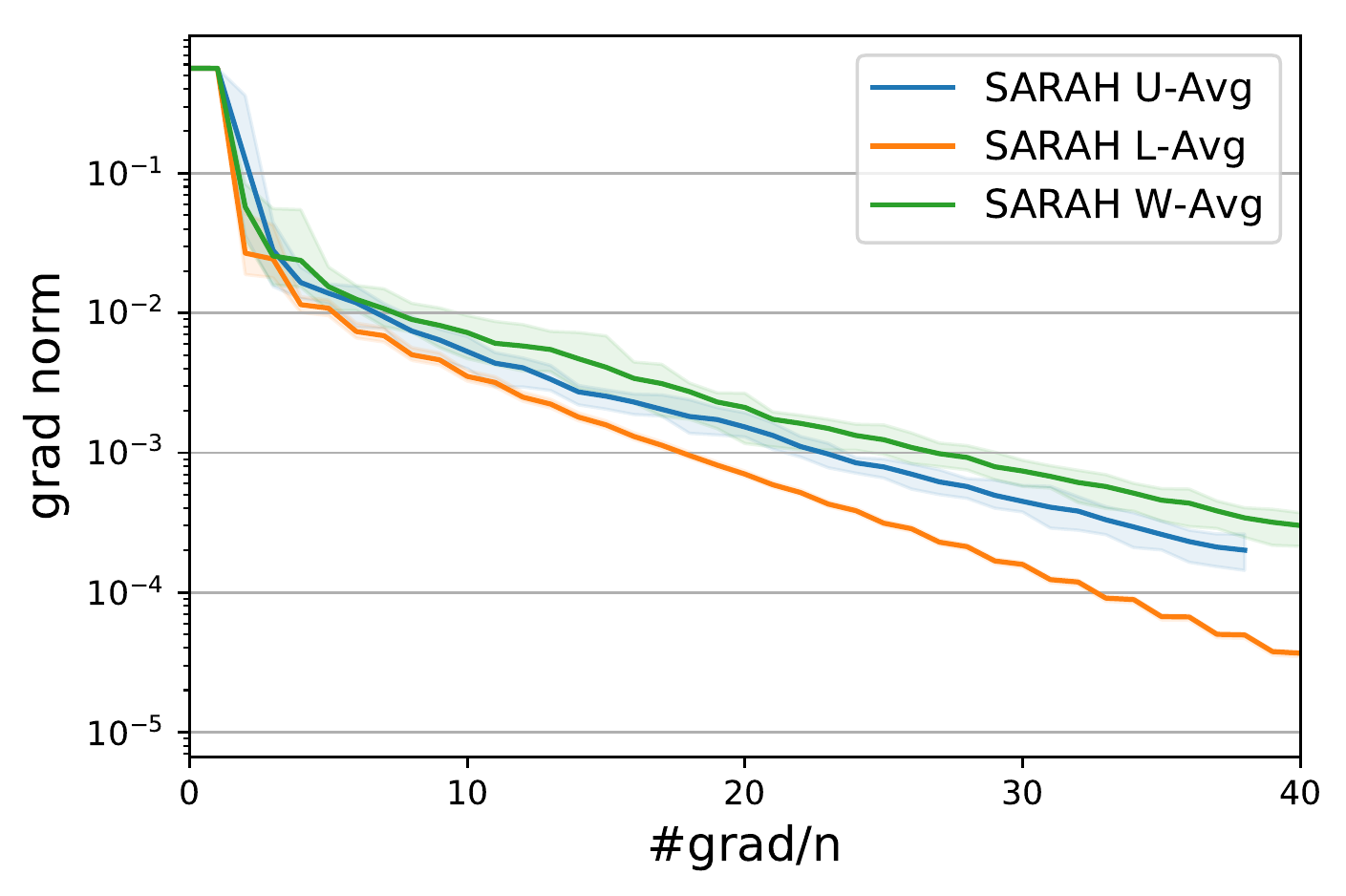}
		\\ (a) $\eta = 0.9/L$ & (b) $\eta = 0.6/L$ & (c) $\eta = 0.06/L$ 
	\end{tabular}
	\caption{Comparing SARAH with different types of averaging on dataset \textit{w7a} ($\mu = 0.005$ and $m = 5\kappa$ in all tests). }
	 \label{fig.w7a}
\end{figure*}

Next, we will demonstrate empirically that the type of averaging indeed matters. Consider binary classification using the regularized logistic loss function
\begin{equation}\label{eq.test}
	f(\mathbf{x}) =\frac{1}{n} \sum_{i \in [n]} \ln \big[1+ \exp(- b_i \langle \mathbf{a}_i, \mathbf{x} \rangle ) \big] + \frac{\mu}{2} \| \mathbf{x} \|^2
\end{equation}
where $(\mathbf{a}_i, b_i)$ is the (feature, label) pair of datum $i$. Clearly, \eqref{eq.test} is an instance of the cost in \eqref{eq.prob} with $f_i(\mathbf{x}) =  \ln \big[1+ \exp(- b_i \langle \mathbf{a}_i, \mathbf{x} \rangle ) \big] + \frac{\mu}{2} \| \mathbf{x} \|^2$; and it can be readily verified that Assumptions \ref{as.1} and \ref{as.4} are satisfied in this case.

SARAH with L-Avg, U-Avg and W-Avg are tested with fixed (moderate) $m = {\cal O}(\kappa)$ but different step size choices on the dataset \textit{w7a}; see also Appendix \ref{appdx.avg_tests} for additional tests with datasets \textit{a9a} and \textit{diabetes}.  Fig. \ref{fig.w7a}(a) shows that for a large step size $\eta = 0.9/L$, W-Avg outperforms U-Avg as well as L-Avg by almost two orders at the $30$th sample pass. For a medium step size $\eta = 0.6/L$, W-Avg and L-Avg perform comparably, while both are outperformed by U-Avg. When $\eta$ is chosen small, L-Avg is clearly the winner. In short, the performance of averaging options varies with the step sizes. This is intuitively reasonable because the MSE of $\mathbf{v}_k$: i) scales with $\eta$ (cf. Lemma \ref{lemma.est_err}); and ii) tends to increase with $k$ as $\mathbb{E}[\|\mathbf{v}_k\|^2]$ decreases linearly (see Lemma \ref{lemma.sarah_v_norm} in Appendix \ref{appdx.pf_sarah}, and the MSE bound in Lemma \ref{lemma.est_err}). As a result, when both $\eta$ and $k$ are large, the MSE of $\mathbf{v}_k$ tends to be large too. Iterates with gradient estimates having high MSE can jeopardize the convergence. This explains the inferior performance of L-Avg in Figs. \ref{fig.w7a}(a) and \ref{fig.w7a}(b). On the other hand, when $\eta$ is chosen small, the MSE tends to be small as well; hence, working with L-Avg does not compromise convergence, while in expectation W-Avg and U-Avg compute full gradient more frequently than L-Avg. These two reasons explain the improved performance of L-Avg in Fig. \ref{fig.w7a}(c).

When we fix $\eta$ and change $m$, as depicted in Fig. \ref{fig.rate_svrg_sarah}(b), the analytical convergence rate of W-Avg improves over that of U-Avg and L-Avg when $m$ is large. This is because the MSE of $\mathbf{v}_k$ increases with $k$. W-Avg and U-Avg ensure better performance through ``early ending,'' by reducing the number of updates that utilize $\mathbf{v}_k$ with large MSE.

In sum, the choice of averaging scheme should be adapted with $\eta$ and $m$ to optimize performance. For example, the proposed W-Avg for SARAH favors the regime where either $\eta$ or $m$ is chosen large, as dictated by the convergence rates and corroborated by numerical tests.

\section{Tune-Free Variance Reduction}
This section copes with variance reduction without tuning. In particular, i) Barzilai-Borwein (BB) step size, ii) averaging schemes, and iii) a time varying inner loop length are adopted for the best empirical performance.

\subsection{Recap of BB Step Sizes}
Aiming to develop `tune-free' SVRG and SARAH, we will first adopt the BB scheme to obtain suitable step sizes automatically~\citep{tan2016}. In a nutshell, BB monitors progress of previous outer loops, and chooses the step size of outer loop $s$ accordingly via
\begin{equation}\label{eq.bb_stepsize}
	\eta^s = \frac{1}{\theta_\kappa} \frac{\| \tilde{\mathbf{x}}^{s-1} - \tilde{\mathbf{x}}^{s-2} \|^2}{ \big\langle \tilde{\mathbf{x}}^{s-1} - \tilde{\mathbf{x}}^{s-2}, \nabla f(\tilde{\mathbf{x}}^{s-1}) -  \nabla f(\tilde{\mathbf{x}}^{s-2}) \big\rangle}
\end{equation}
where $\theta_\kappa$ is a $\kappa$-dependent parameter to be specified later. Note that $\nabla f(\tilde{\mathbf{x}}^{s-1})$ and $\nabla f(\tilde{\mathbf{x}}^{s-2})$ are computed at the outer loops $s$ and $s-1$, respectively; hence, the implementation overhead of BB step sizes only includes almost negligible memory to store $\tilde{\mathbf{x}}^{s-2}$ and $\nabla f(\tilde{\mathbf{x}}^{s-2})$.

BB step sizes for SVRG with L-Avg have relied on $\theta_\kappa = m = {\cal O}(\kappa^2)$~\citep{tan2016}. Such a choice of parameters offers provable convergence at complexity ${\cal O}\big( (n + \kappa^2) \ln \frac{1}{\epsilon} \big)$, but has not been effective in our simulations for two reasons: i) step size $\eta^s$ depends on $m$, which means that tuning is still required for step sizes; and, ii) the optimal $m$ of ${\cal O}(\kappa)$ with best empirical performance significantly deviates from the theoretically suggested ${\cal O}(\kappa^2)$; see also Fig. \ref{fig.m_choices}(a). Other BB based variance reduction methods introduce extra parameters to be tuned in additional to $m$; e.g., some scalars in \citep{liuclass} and the mini-batch sizes in \citep{yang2019}. This prompts us to design more practical BB methods -- how to choose $m$ with minimal tuning is also of major practical value. 

\begin{figure}[t]
	\centering
	\begin{tabular}{cc}
		\hspace{-0.2cm}
		\includegraphics[width=.40\textwidth]{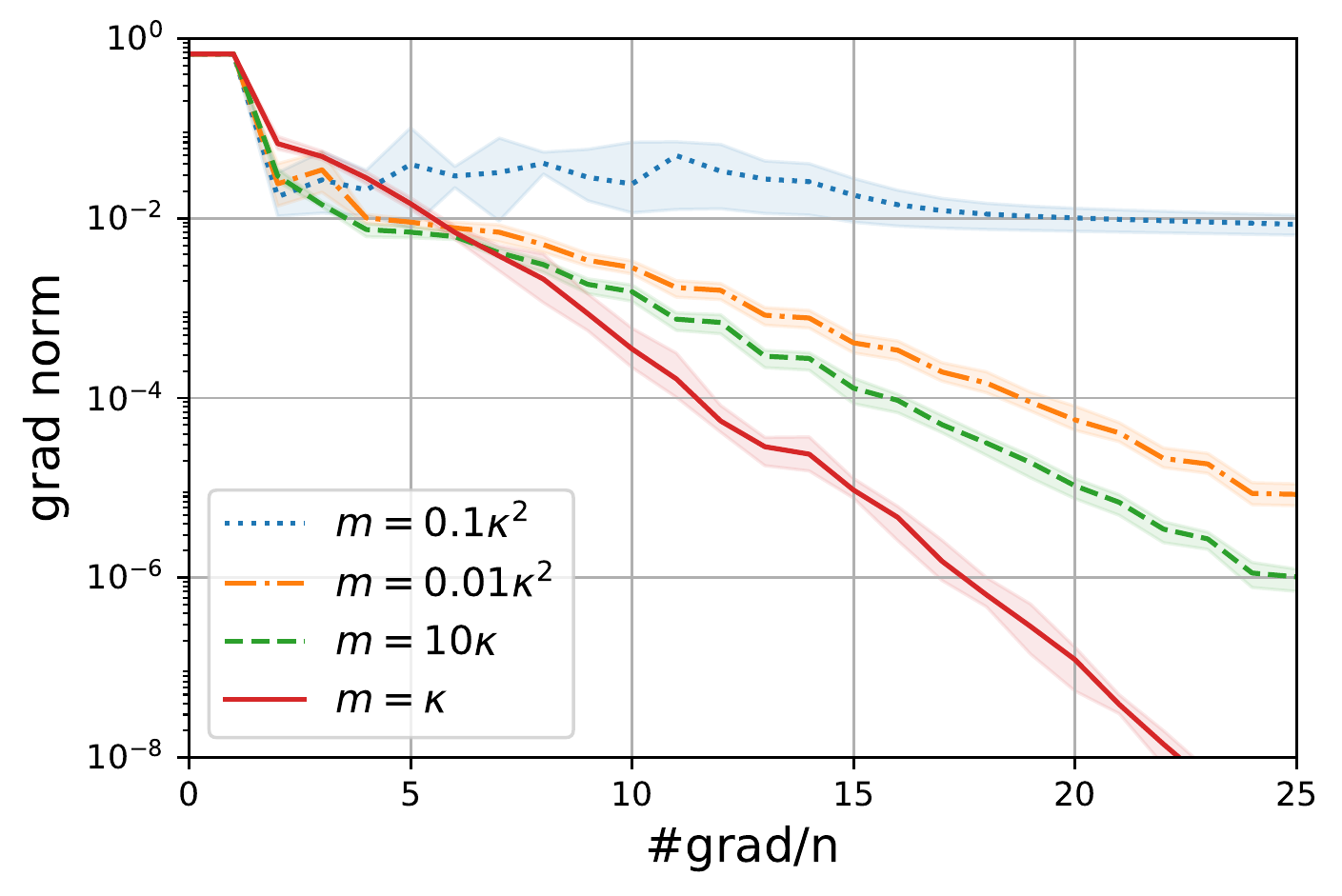}&
		\hspace{-0.1cm}
		\includegraphics[width=.40\textwidth]{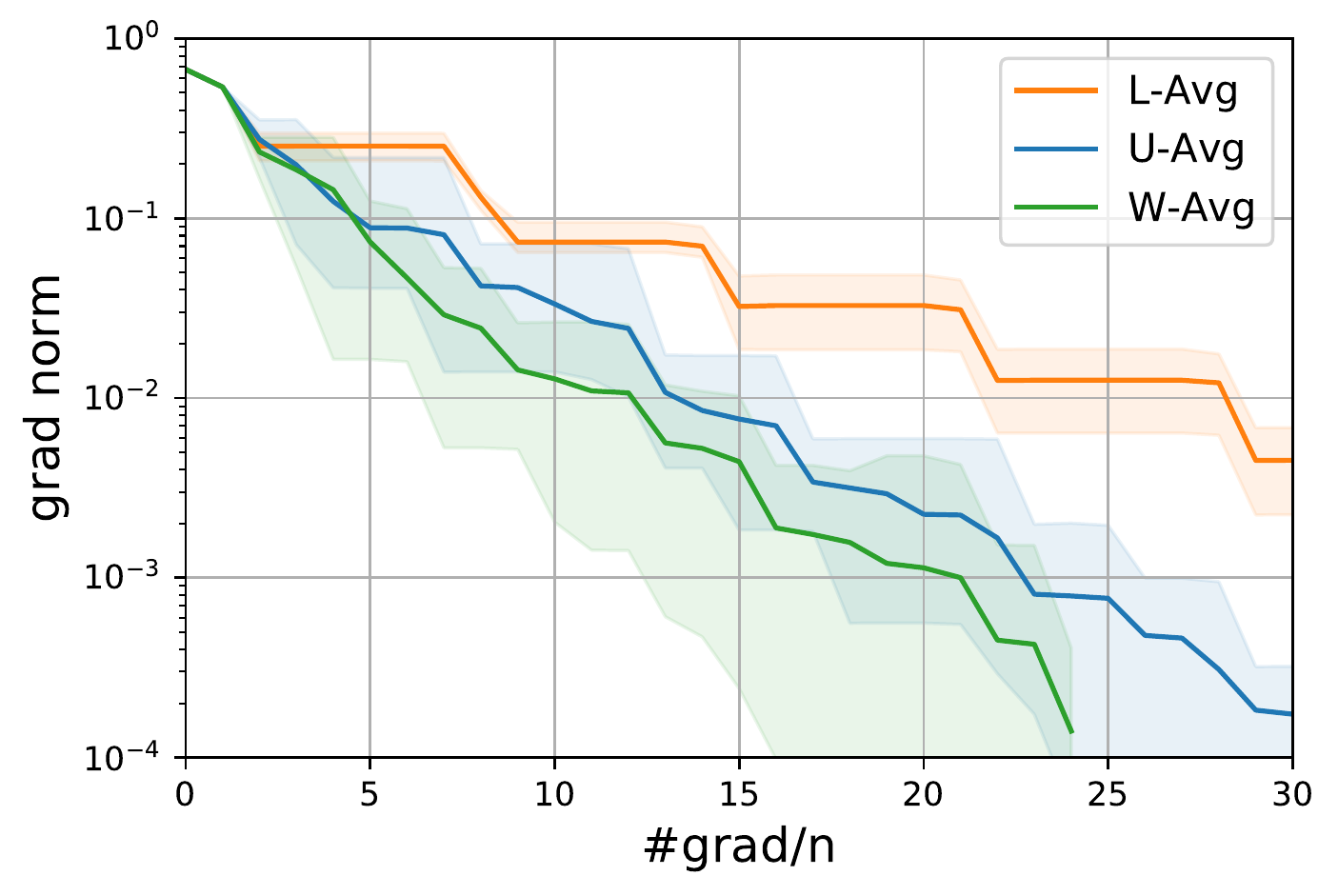}
		\\ (a)  & (b)
	\end{tabular}
	\caption{(a) Performance of BB-SVRG \citep{tan2016} under different choices of $m$. (b) Performance of BB-SARAH with different averaging schemes. Both tests use dataset \textit{a9a} with $\kappa = 1,388$.}
	 \label{fig.m_choices} 
\end{figure}

\subsection{Averaging for BB Step Sizes}
We start with a fixed choice of $m$ to theoretically investigate different types of averaging for the BB step sizes. The final `tune-free' implementation of SVRG and SARAH will rely on the analysis of this subsection.

\begin{proposition}\label{prop.bbsvrg}
\textbf{(BB-SVRG)} Under Assumptions \ref{as.1} -- \ref{as.3}, if we choose $m = {\cal O}(\kappa^2)$ and $\theta_\kappa = {\cal O}(\kappa)$ (but with $\theta_\kappa > 4\kappa$), then BB-SVRG with U-Avg and W-avg can find $\tilde{\mathbf{x}}^s$ with $\mathbb{E}\big[ f(\tilde{\mathbf{x}}^s) - f(\mathbf{x}^*) \big] \leq \epsilon$ using ${\cal O}\big( (n+\kappa^2)\ln\frac{1}{\epsilon} \big)$ IFO calls.
\end{proposition}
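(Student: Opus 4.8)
The plan is to treat BB-SVRG as the fixed-step SVRG of Theorem~\ref{thm.svrg} (for W-Avg) and the classical SVRG bound \citep{johnson2013} (for U-Avg) run with a step size that, although random, is tightly bracketed and frozen within each outer loop. First I would bound the BB step size in \eqref{eq.bb_stepsize}. With $\mathbf{u}:=\tilde{\mathbf{x}}^{s-1}-\tilde{\mathbf{x}}^{s-2}$, Assumptions~\ref{as.1} and \ref{as.3} give $\mu\|\mathbf{u}\|^2 \le \langle \mathbf{u}, \nabla f(\tilde{\mathbf{x}}^{s-1})-\nabla f(\tilde{\mathbf{x}}^{s-2})\rangle \le L\|\mathbf{u}\|^2$, so $\eta^s$ is sandwiched deterministically as $\frac{1}{\theta_\kappa L}\le \eta^s \le \frac{1}{\theta_\kappa\mu}$. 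The precondition $\theta_\kappa>4\kappa$ then forces $\eta^s<\frac{1}{4L}$, exactly the requirement of Theorem~\ref{thm.svrg}, while the lower bound gives $\eta^s={\cal O}(\frac{1}{\kappa L})$ from below, i.e. $\eta^s=\Omega(\frac{1}{\kappa L})$.

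Next, since $\eta^s$ is determined entirely by $\tilde{\mathbf{x}}^{s-1},\tilde{\mathbf{x}}^{s-2}$, it is a constant once we condition on the history $\mathcal{F}^{s-1}$ up to the start of outer loop $s$. Within that outer loop the step is fixed, so Theorem~\ref{thm.svrg} (W-Avg) and the Johnson--Zhang one-loop rate (U-Avg) apply verbatim in conditional expectation, yielding $\mathbb{E}[f(\tilde{\mathbf{x}}^s)-f(\mathbf{x}^*)\mid\mathcal{F}^{s-1}]\le \lambda(\eta^s)\,[f(\tilde{\mathbf{x}}^{s-1})-f(\mathbf{x}^*)]$, where $\lambda$ is $\lambda^{\texttt{SVRG}}$ for W-Avg and the analogous factor for U-Avg. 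Taking total expectation reduces everything to controlling $\lambda(\eta^s)$ uniformly.

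The hard part will be showing a uniform contraction $\lambda(\eta)\le\rho<1$ over the entire admissible interval $\eta\in[\frac{1}{\theta_\kappa L},\frac{1}{\theta_\kappa\mu}]$, and this is precisely what dictates $m={\cal O}(\kappa^2)$ rather than ${\cal O}(\kappa)$. The terms of $\lambda$ that decrease in $\eta$ (those carrying $(1-\mu\eta)^m$ together with the prefactor $1/[1-(1-\mu\eta)^{m-1}]$ for W-Avg, or the term $1/(\mu\eta m)$ for U-Avg) are worst at the smallest step $\eta_{\min}=\frac{1}{\theta_\kappa L}$, where $\mu\eta_{\min}=\Theta(1/\kappa^2)$; making $(1-\mu\eta)^m$ (resp. $1/(\mu\eta m)$) a constant below $1$ and keeping $1-(1-\mu\eta)^{m-1}$ bounded away from $0$ requires $m\,\mu\eta_{\min}=\Omega(1)$, i.e. $m=\Omega(\kappa^2)$. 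The only term increasing in $\eta$, namely $\frac{2L\eta}{1-2L\eta}$, is worst at $\eta_{\max}=\frac{1}{\theta_\kappa\mu}$, where $2L\eta_{\max}=2\kappa/\theta_\kappa$; taking the constant hidden in $\theta_\kappa={\cal O}(\kappa)$ large enough makes this a small constant. Checking monotonicity of each piece in $\eta$ and evaluating at the appropriate endpoint then produces a single $\rho<1$ valid across the interval; I expect the bookkeeping that combines the shared prefactor with the three bracketed terms of $\lambda^{\texttt{SVRG}}$ to be the most delicate step.

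Finally, with $\lambda(\eta^s)\le\rho$ the recursion unrolls to $\mathbb{E}[f(\tilde{\mathbf{x}}^s)-f(\mathbf{x}^*)]\le\rho^{\,s}[f(\tilde{\mathbf{x}}^0)-f(\mathbf{x}^*)]$ (after a fixed-step warm start for the first loop, where $\tilde{\mathbf{x}}^{-1}$ is unavailable), so $S={\cal O}(\ln\frac{1}{\epsilon})$ outer loops reach accuracy $\epsilon$. For the complexity count, each outer loop spends ${\cal O}(n)$ on the snapshot gradient $\mathbf{g}^s=\nabla f(\tilde{\mathbf{x}}^{s-1})$, which doubles as the gradient needed by \eqref{eq.bb_stepsize} so that BB adds no IFO overhead, plus ${\cal O}(m)={\cal O}(\kappa^2)$ calls in the inner loop, for a total of ${\cal O}\big((n+\kappa^2)\ln\frac{1}{\epsilon}\big)$ IFO calls.
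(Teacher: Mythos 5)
Your proposal is correct and follows essentially the same route as the paper's proof in Appendix~\ref{appdx.bb_pf}: derive the deterministic bracket $\frac{1}{\theta_\kappa L}\le \eta^s \le \frac{1}{\theta_\kappa \mu}$ exactly as in the derivation of \eqref{eq.eta_bounds}, plug the per-loop rates (Theorem~\ref{thm.svrg} for W-Avg, the \citep{johnson2013} rate for U-Avg) into this bracket, and bound each monotone factor of $\lambda^s$ at its worst endpoint (the $(1-\mu\eta)^m$-type factors at $\eta_{\min}$, the $\eta L$-type factors at $\eta_{\max}$), which is precisely the mixed-endpoint bookkeeping the paper carries out before invoking $m={\cal O}(\kappa^2)$, $\theta_\kappa>4\kappa$, and the Corollary~\ref{coro.svrg}-style complexity count. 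Your explicit conditioning on $\mathcal{F}^{s-1}$ and the warm-start remark for the first outer loop are small rigor refinements the paper leaves implicit, not a different argument.
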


Similar to BB-SVRG, the ensuing result asserts that for BB-SARAH, W-Avg, U-Avg and L-Avg have identical order of complexity.
\begin{proposition}\label{prop.bbsarah}
\textbf{(BB-SARAH)} Under Assumptions \ref{as.1} and \ref{as.4}, if we choose $m = {\cal O}(\kappa^2)$ and $\theta_\kappa = {\cal O}(\kappa)$, then BB-SARAH finds a solution with $\mathbb{E}\big[ \| \nabla f(\tilde{\mathbf{x}}^s) \|^2 \big] \leq \epsilon$ using ${\cal O}\big( (n+\kappa^2)\ln\frac{1}{\epsilon} \big)$ IFO calls, when one of these conditions holds: i) either U-Avg with $\theta_\kappa > \kappa$; or ii) L-Avg with $\theta_\kappa > 3/2\kappa$; or, iii) W-Avg with $\theta_\kappa > \kappa$.
\end{proposition}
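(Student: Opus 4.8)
The plan is to reduce the analysis of BB-SARAH to the per-inner-loop contraction results already available for SARAH with a fixed step size (Theorem \ref{thm.sarah} for W-Avg, and the analogous U-Avg and L-Avg rates recalled in Section \ref{sec.recap}), by showing that every BB step size $\eta^s$ produced by \eqref{eq.bb_stepsize} lands in an interval over which the corresponding per-loop rate stays uniformly below one. The first step is to bound $\eta^s$ deterministically. Writing $\mathbf{u} = \tilde{\mathbf{x}}^{s-1} - \tilde{\mathbf{x}}^{s-2}$ and using $\mu$-strong convexity of $f$ together with $L$-Lipschitz continuity of $\nabla f$, namely $\mu \| \mathbf{u}\|^2 \leq \langle \mathbf{u}, \nabla f(\tilde{\mathbf{x}}^{s-1}) - \nabla f(\tilde{\mathbf{x}}^{s-2}) \rangle \leq L \|\mathbf{u}\|^2$, the ratio in \eqref{eq.bb_stepsize} is squeezed to yield
\[
	\frac{1}{\theta_\kappa L} \leq \eta^s \leq \frac{1}{\theta_\kappa \mu}.
\]
With $\theta_\kappa = {\cal O}(\kappa)$ this pins $\eta^s$ inside an interval whose endpoints are of order $1/(\kappa L)$ and $1/L$, respectively.

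Next I would translate the three $\theta_\kappa$ thresholds into the step-size admissibility conditions demanded by each averaging scheme. The upper bound $\eta^s \leq 1/(\theta_\kappa \mu)$ combined with $\theta_\kappa > \kappa$ (equivalently $\theta_\kappa \mu > L$) gives $\eta^s < 1/L$, which is exactly the requirement of Theorem \ref{thm.sarah} (W-Avg) and of the U-Avg rate; the larger threshold $\theta_\kappa > 3\kappa/2$ for L-Avg arises because its fixed-step bound has a tighter step-size ceiling. Feeding the admissible interval for $\eta^s$ into the expression for $\lambda^{\texttt{SARAH}}$ (and into its U-Avg/L-Avg counterparts), I would verify that $\lambda^s := \lambda^{\texttt{SARAH}}(\eta^s, m) < 1$ for every outer loop $s$, so that each loop contracts the metric $\mathbb{E}[\|\nabla f(\tilde{\mathbf{x}}^s)\|^2]$.

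The main obstacle is the lower end of the step-size interval. When the curvature probed by BB is near $L$, the step can be as small as $\eta^s = 1/(\theta_\kappa L) = {\cal O}(1/(\kappa L))$, so that $\delta^s = \mu \eta^s = {\cal O}(1/\kappa^2)$. For the decaying factor $(1-\delta^s)^m$ appearing in $\lambda^{\texttt{SARAH}}$ to be pushed below a fixed constant, one needs $m \delta^s = \Omega(1)$, i.e. $m = \Omega(\kappa^2)$; this is precisely what forces the choice $m = {\cal O}(\kappa^2)$ and makes the contraction hold uniformly over the whole admissible interval. Care is then needed for the remaining, non-exponential terms of $\lambda^{\texttt{SARAH}}$: since $c = m - 1/\delta + (1-\delta)^m/\delta \approx m$ when $m = {\cal O}(\kappa^2) \gg \kappa$, the term $\eta L(m-1)/(c(2-\eta L))$ collapses to roughly $\eta L/(2-\eta L) < 1$, while the term $(2-2\eta L)(1+\kappa)/((2-\eta L)\,2c\eta L)$ becomes a constant of order $\kappa^2/m$ in the worst case $\eta L = 1/\theta_\kappa$. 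Confirming that the supremum of $\lambda^s$ over the interval is bounded away from one via a crude worst-case bound on each of the four terms is where the bulk of the calculation lives; the same mechanism governs U-Avg and L-Avg, each with its own fixed-step rate whose small-$\eta$ regime likewise demands $m = {\cal O}(\kappa^2)$.

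Finally, chaining the per-loop contractions gives $\mathbb{E}[\|\nabla f(\tilde{\mathbf{x}}^S)\|^2] \leq \lambda^S \, \mathbb{E}[\|\nabla f(\tilde{\mathbf{x}}^0)\|^2]$, where $\lambda := \sup_s \lambda^s < 1$ is a constant independent of $S$, so $S = {\cal O}(\ln\frac{1}{\epsilon})$ outer loops suffice to reach $\mathbb{E}[\|\nabla f(\tilde{\mathbf{x}}^S)\|^2] \leq \epsilon$. Each outer loop costs one full-gradient evaluation ($n$ IFO calls) plus $m = {\cal O}(\kappa^2)$ inner iterations at ${\cal O}(1)$ calls apiece, hence ${\cal O}(n + \kappa^2)$ per loop; multiplying by $S$ yields the advertised ${\cal O}\big((n+\kappa^2)\ln\frac{1}{\epsilon}\big)$ total complexity for all three averaging variants.
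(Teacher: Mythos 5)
Your proposal is correct and follows essentially the same route as the paper: the pathwise sandwich $\frac{1}{\theta_\kappa L} \leq \eta^s \leq \frac{1}{\theta_\kappa \mu}$ obtained from $\langle \nabla f(\mathbf{x}) - \nabla f(\mathbf{y}), \mathbf{x}-\mathbf{y}\rangle \geq \mu\|\mathbf{x}-\mathbf{y}\|^2$ and Cauchy--Schwarz with Lipschitz continuity, substitution of the worst-case endpoints into the fixed-step per-loop rates for each averaging scheme, the observation that the small-step regime $\delta^s = \mu\eta^s = \Theta\big(\frac{1}{\kappa\theta_\kappa}\big)$ forces $m = {\cal O}(\kappa^2)$ for the geometric factors and the $\frac{1+\kappa}{2c\eta^s L}$-type terms to stay controlled, and chaining the uniform contractions to get ${\cal O}\big((n+\kappa^2)\ln\frac{1}{\epsilon}\big)$. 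The one step you take off the shelf that is not actually available is the L-Avg per-loop rate: the rate of \citep{li2019l2s} recalled in Section \ref{sec.recap} is derived under Assumption \ref{as.3}, so the paper first re-derives it under Assumption \ref{as.4} using Lemma \ref{lemma.sarah_v_norm}, obtaining $\lambda^s = \frac{2\eta^s L}{2-\eta^s L} + 2(1+\eta^s L)\big(1-\frac{2\eta^s L}{1+\kappa}\big)^m$; your guessed mechanism for the threshold is nonetheless the right one, since the leading term is below one exactly when $\eta^s L < 2/3$, which under the worst case $\eta^s L \leq \kappa/\theta_\kappa$ is precisely $\theta_\kappa > 3\kappa/2$ --- so this is a missing derivation rather than a step that would fail.
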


The price paid for having automatically tuned step sizes is a worse dependence of the complexity on $\kappa$, compared with the bounds in Corollaries \ref{coro.svrg} and \ref{coro.sarah}. The cause of the worse dependence on $\kappa$ is that one has to choose a large $m$ at the order of ${\cal O}(\kappa^2)$. However, such an automatic tuning of the step size comes almost as a ``free lunch'' when problem \eqref{eq.prob} is well conditioned, or, in the big data regime, e.g., ${\kappa^2 \approx n}$ or ${\kappa^2 \ll n}$, since the dominant term in complexity is ${\cal O}(n\ln\frac{1}{\epsilon})$ for both SVRG and BB-SVRG. On the other hand, it is prudent to stress that with $\kappa^2 \gg n$, the BB step sizes slow down convergence.

Given the same order of complexity, the empirical performance of BB-SARAH with different averaging types is showcased in Fig. \ref{fig.m_choices}(b) with the parameters chosen as in Proposition \ref{prop.bbsarah}. It is observed that W-Avg converges most rapidly, while U-Avg outperforms L-Avg. This confirms our theoretical insight, that is, W-Avg and U-Avg are more suitable when $m$ is chosen large enough.

\subsection{Tune-Free Variance Reduction}

Next, the ultimate format of the almost tune-free variance reduction is presented using SARAH as an example. We will discuss how to choose the iteration number of inner loops and averaging schemes for BB step sizes.


\textbf{Adaptive inner loop length.}
It is observed that the BB step size can change over a wide range of values (see Appendix \ref{appdx.bb_pf} for derivations), 
\begin{align}\label{eq.eta_bounds}
	\frac{1}{\theta_\kappa L} \leq \eta^s \leq \frac{1}{\theta_\kappa \mu}.
\end{align} 
Given $\theta_\kappa = {\cal O}(\kappa)$, $\eta^s$ can vary from ${\cal O}(\mu/L^2)$ to ${\cal O}(1/L)$. Such a wide range of $\eta^s$ blocks the possibility to find a single $m$ suitable for both small and large $\eta^s$ at the same time. From a theoretical perspective, choosing $m = {\cal O}(\kappa^2)$ in both Propositions \ref{prop.bbsvrg} and \ref{prop.bbsarah} is mainly for coping with the small step sizes $\eta^s = {\cal O}(1/(L \theta_\kappa))$. But such a choice is too pessimistic for large ones $\eta^s= {\cal O}(1/(\mu \theta_\kappa))$. In fact, choosing $m = {\cal O}(\kappa)$ for $\eta^s = {\cal O}(1/L)$ is good enough, as suggested by Corollaries \ref{coro.svrg} and \ref{coro.sarah}. These observations motivate us to design an $m^s$ that changes dynamically per outer loop $s$.

Reflecting on the convergence of SARAH, it is sufficient to set the inner loop length $m^s$ according to the $\eta^s$ used. To highlight the rationale behind our choice of $m^s$, let us consider BB-SARAH with U-Avg as an example that features convergence rate $\lambda^s = \frac{1}{\mu \eta^s m^s} + \frac{\eta^s L}{2 - \eta^s L}$ \citep{nguyen2017}. Set $\theta_\kappa > \kappa$ as in Proposition \ref{prop.bbsarah} so that the second term of $\lambda^s$ is always less than $1$. With a large step size $\eta^s = {\cal O}(1/L)$, and by simply choosing $m^s = {\cal O}\big(1/(\mu \eta^s) \big)$, one can ensure a convergent iteration having e.g., $\lambda^s<1$. With a small step size $\eta^s = {\cal O}\big(1/(\kappa L)\big)$ though, choosing $m^s = {\cal O}\big(1/(\mu \eta^s) \big)$ also leads to $\lambda^s <1$. These considerations prompt us to adopt a time-varying inner loop length adjusted by $\eta^s$ in \eqref{eq.bb_stepsize} as
\begin{align}\label{eq.dyna_bb}
	 m^s = \frac{c}{\mu \eta^s}\;.
\end{align}
Such choices of $\eta^s$ and $m^s$ at first glance do not lead to a tune-free algorithm directly, because one has to find an optimal $\theta_\kappa$ and $c$ through tuning. Fortunately, there are simple choices for both $c$ and $\theta_\kappa$. In Propositions \ref{prop.bbsvrg} and \ref{prop.bbsarah}, the smallest selected $\theta_\kappa$ for SVRG and SARAH with different types of averaging turns out to be a reliable choice; while choosing $c=1$ has been good enough throughout our numerical experiments. Although the selection of these parameters violates slightly the theoretical guarantee, its merits lie in the simplicity. And in our experiments, no divergence has been observed by these parameter selections. 

\textbf{Averaging schemes.} As discussed in subsection \ref{sec.avg}, W-Avg gains in performance when either $m^s$ or $\eta^s$ is large. Since $m^s$ and $\eta^s$ are inversely proportional (cf. \eqref{eq.dyna_bb}), it is clear that one of the two suffices to be large; and for this reason, we will rely on W-Avg for BB-SARAH.

Extensions regarding almost tune-free variance reduction for (non)convex problems can be found in our technical note \citep{li2019adaptive}.

\section{Numerical Tests}\label{sec.tests}
\begin{figure*}[t]
	\centering
	\begin{tabular}{ccc}
		\hspace{-0.2cm}
		\includegraphics[width=.32\textwidth]{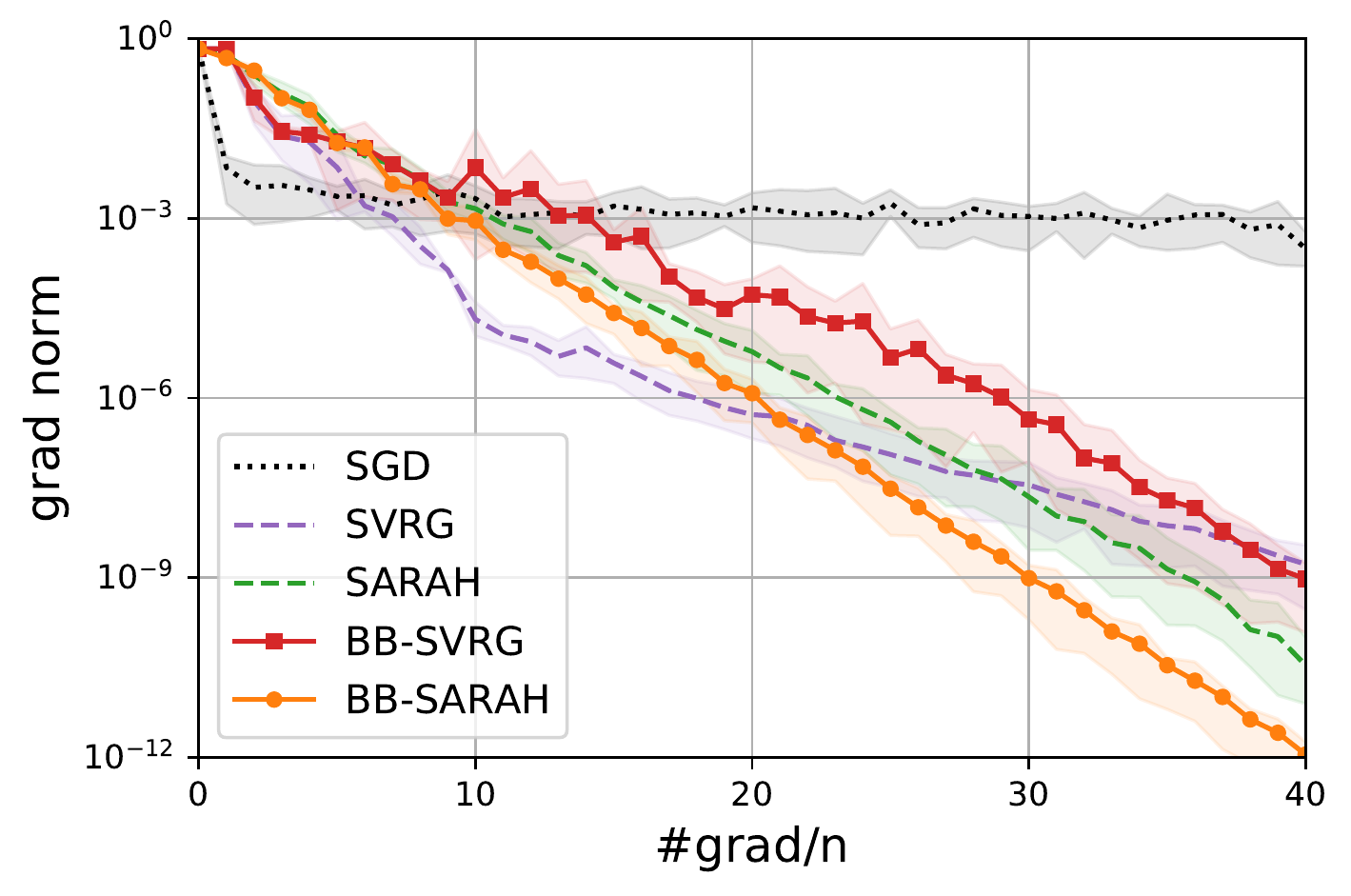}&
		\hspace{-0.2cm}
		\includegraphics[width=.32\textwidth]{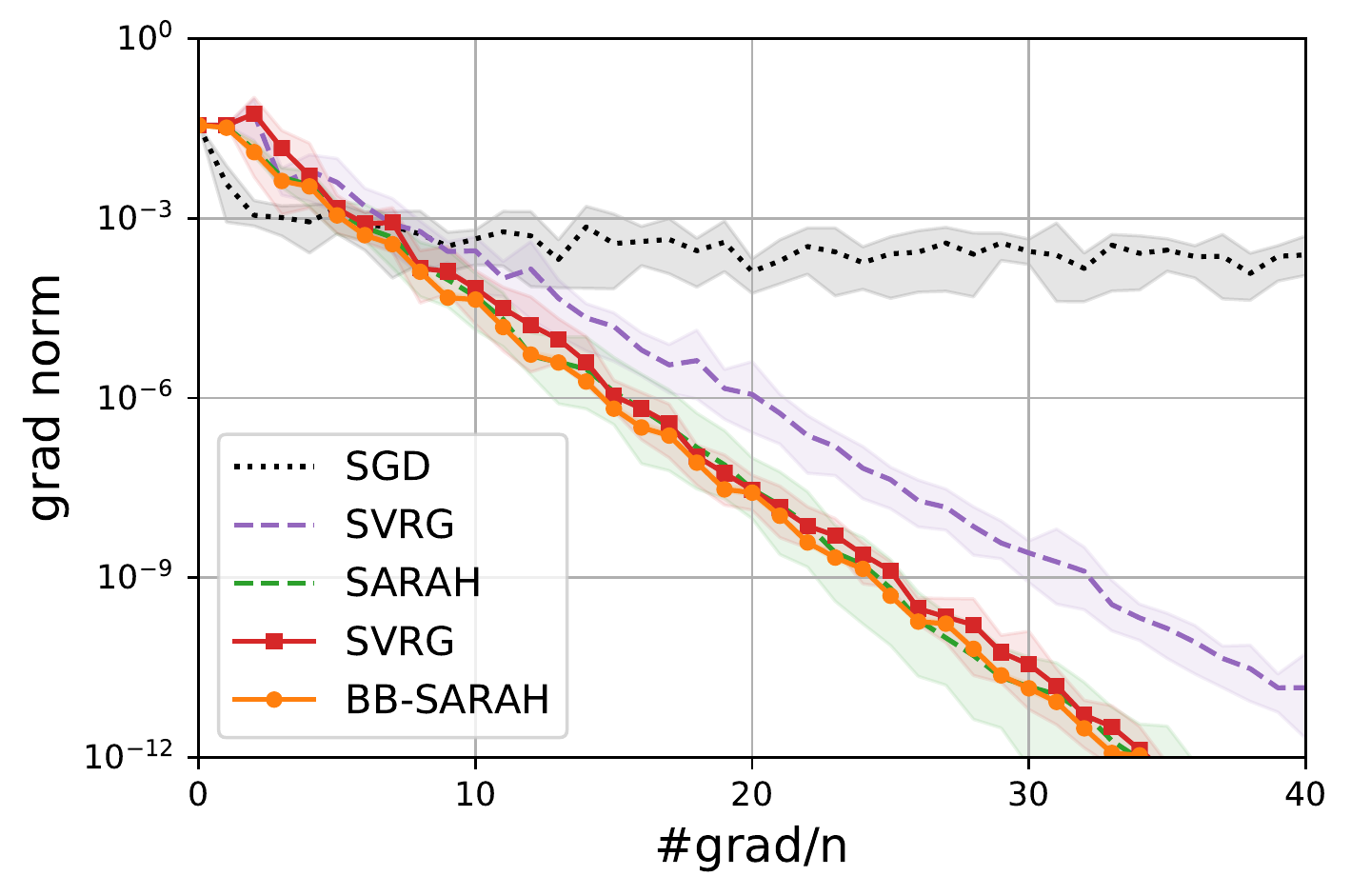}&
		\hspace{-0.3cm}
		\includegraphics[width=.32\textwidth]{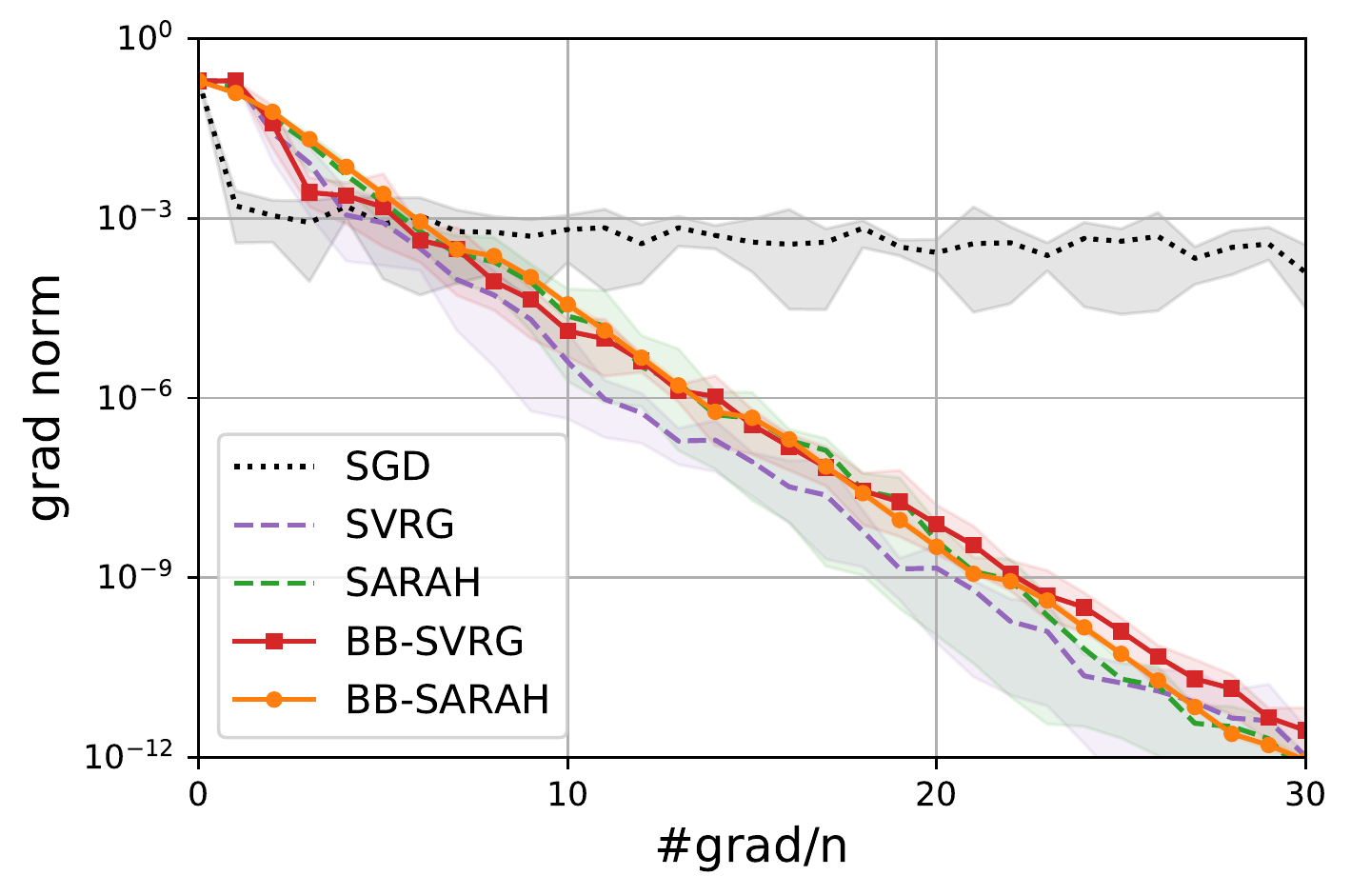}
		\\ (a) \textit{a9a}  & (b) \textit{rcv1} & (c) \textit{real-sim}
	\end{tabular}
	\caption{Tests of BB-SVRG and BB-SARAH on different datasets.}
	 \label{fig.bb}
\end{figure*}

To assess performance, the proposed tune-free BB-SVRG and BB-SARAH are applied to binary classification via regularized logistic regression (cf. \eqref{eq.test}) using the datasets \textit{a9a}, \textit{rcv1.binary}, and \textit{real-sim} from LIBSVM\footnote{Online available at \url{https://www.csie.ntu.edu.tw/~cjlin/libsvmtools/datasets/binary.html}.}. Details regarding the datasets, the $\mu$ values used, and implementation details are deferred to Appendix \ref{appdx.final_tests}. 

For comparison, the selected benchmarks are SGD, SVRG with U-Avg, and SARAH with U-Avg. The step size for SGD is $\eta = 0.05/(L(n_{e}+1))$, where $n_e$ is the index of epochs. For SVRG and SARAH, we fix $m=5\kappa$, and tune for the best step sizes. For BB-SVRG, we choose $\eta^s$ and $m^s$ as \eqref{eq.dyna_bb} with $\theta_\kappa = 4\kappa$ (as in Proposition \ref{prop.bbsvrg}) and $c = 1$. While we choose $\theta_\kappa = \kappa$ (as in Proposition \ref{prop.bbsarah}) and $c = 1$ for BB-SARAH. W-Avg is adopted for both BB-SVRG and BB-SARAH.

The results are showcased in Fig. \ref{fig.bb}. We also tested BB-SVRG with parameters chosen as \citep[Thm. 3.8]{tan2016}. However it only slightly outperforms SGD and hence is not plotted here (see the blue line in Fig. \ref{fig.m_choices}(a) as a reference). On dataset \textit{a9a}, BB-SARAH outperforms tuned SARAH. BB-SVRG is worse than SVRG initially, but has similar performance around the $40$th sample pass on the x-axis. On dataset \textit{rcv1} however, BB-SARAH, BB-SVRG and SARAH have similar performance, improving over SVRG. On dataset \textit{real-sim}, BB-SARAH performs almost identical to SARAH. BB-SVRG exhibits comparable performance with SVRG.

\section{Conclusions}
Almost tune-free SVRG and SARAH were developed in this work. Besides the BB step size for eliminating the tuning for step size, the key insights are that both i) averaging, as well as ii) the number of inner loop iterations should be adjusted according to the BB step size. Specific major findings include: i) estimate sequence based provably linear convergence of SVRG and SARAH, which enabled new types of averaging for efficient variance reduction; ii) theoretical guarantees of BB-SVRG and BB-SARAH with different types of averaging; and, iii) implementable tune-free variance reduction algorithms. The efficacy of the tune-free BB-SVRG and BB-SARAH were corroborated numerically.
\bibliographystyle{plainnat}
\bibliography{myabrv,datactr}

\newpage
\onecolumn
\appendix
\begin{center}
{\Huge  \textbf{Appendix} }
\end{center}

\section{Properties of ES}
\subsection{Proof of Lemma \ref{lemma.est_seq}}
	i) By definition $\Phi_0(\mathbf{x})$ is $\mu_0$-strongly convex; and by checking Hession one can find that $\Phi_k(\mathbf{x})$ is 
	$\mu_k$-strongly convex with $\mu_k = (1-\delta_k) \mu_{k-1} + \delta_k \mu$.
	
	ii) Clearly, $\mathbf{x}_0$ minimizes $\Phi_0(\mathbf{x})$, and $\Phi_k(\mathbf{x})$ is quadratic. Arguing by induction, suppose that $\mathbf{x}_{k-1}$ minimizes $\Phi_{k-1}(\mathbf{x})$, to obtain
	\begin{align*}
		\Phi_{k-1} (\mathbf{x}) = \Phi_{k-1}^* + \frac{\mu_{k-1}}{2} \| \mathbf{x} - \mathbf{x}_{k-1} \|^2 ~~~\Rightarrow~~~ \nabla \Phi_{k-1} (\mathbf{x}) = \mu_{k-1} ( \mathbf{x} -  \mathbf{x}_{k-1} ).
	\end{align*}
By definition of $ \Phi_k(\mathbf{x})$, we also have
	\begin{align}\label{eq.gphi_k}
		\nabla \Phi_k(\mathbf{x}) & = (1- \delta_k) \nabla \Phi_{k-1} (\mathbf{x}) + \delta_k  \mathbf{v}_{k-1} + \mu  \delta_k ( \mathbf{x} -  \mathbf{x}_{k-1} ) \nonumber \\
		& = (1- \delta_k) \mu_{k-1} ( \mathbf{x} -  \mathbf{x}_{k-1} ) + \delta_k  \mathbf{v}_{k-1} + \mu  \delta_k ( \mathbf{x} -  \mathbf{x}_{k-1} ).
	\end{align}
	Using $\mu_k = (1-\delta_k) \mu_{k-1} + \delta_k \mu$ and setting $\nabla \Phi_k(\mathbf{x})= {\bf 0}$, we find that $\mathbf{x}_k$ minimizes $\Phi_k(\mathbf{x})$ when $\delta_k = \eta \mu_k$.
	
	iii) Since $\mathbf{x}_{k-1}$ minimizes $\Phi_{k-1}(\mathbf{x})$, using the definition of $ \Phi_k(\mathbf{x})$ we can write
	\begin{align}\label{eq.phikk}
		\Phi_k (\mathbf{x}_{k-1}) =  (1- \delta_k)\Phi_{k-1}^* + \delta_k  f(\mathbf{x}_{k-1}).
	\end{align}
On the other hand, we also have $\Phi_k (\mathbf{x}_{k-1}) = \Phi_k^* + \frac{\mu_k}{2} \| \mathbf{x}_{k-1} - \mathbf{x}_k \|^2 $. Comparing this with \eqref{eq.phikk} and using that $\mathbf{x}_k = \mathbf{x}_{k-1} -\eta \mathbf{v}_{k-1}$, completes the proof of this property.

\subsection{Derivations of \eqref{eq.svrgeg} and \eqref{eq.saraheg}}\label{apdx.eg}
To verify \eqref{eq.svrgeg}, proceed as follows 
\begin{align}
	& ~~~~\mathbb{E}\big[ \Phi_k (\mathbf{x}) \big] = (1- \delta)\mathbb{E}\big[ \Phi_{k-1} (\mathbf{x}) \big]
	+ \delta\mathbb{E} \bigg[  f(\mathbf{x}_{k-1}) + \langle  \mathbf{v}_{k-1}, \mathbf{x} - \mathbf{x}_{k-1}  \rangle + \frac{\mu}{2} \| \mathbf{x} -  \mathbf{x}_{k-1} \|^2	\bigg] \nonumber \\
	&= (1- \delta)\mathbb{E}\big[ \Phi_{k-1} (\mathbf{x}) \big]
	+ \delta\mathbb{E} \bigg[  f(\mathbf{x}_{k-1}) + \langle  \nabla f(\mathbf{x}_{k-1}), \mathbf{x} - \mathbf{x}_{k-1}  \rangle + \frac{\mu}{2} \| \mathbf{x} -  \mathbf{x}_{k-1} \|^2	\bigg] \nonumber \\
	& \leq (1- \delta)\mathbb{E}\big[ \Phi_{k-1} (\mathbf{x}) \big]
	+ \delta f(\mathbf{x}) \leq (1-\delta)^k \big[\Phi_0(\mathbf{x}) - f(\mathbf{x})\big] + f(\mathbf{x}) \leq (1-\delta)^k \big[\Phi_0(\mathbf{x}) - f(\mathbf{x}^*)\big] + f(\mathbf{x}).
\end{align}

And in order to derive \eqref{eq.saraheg}, follow the next steps 
\begin{align*}
	& ~~~~\mathbb{E}\big[ \Phi_k (\mathbf{x}) \big] = (1- \delta)\mathbb{E}\big[ \Phi_{k-1} (\mathbf{x}) \big]
	+ \delta\mathbb{E} \bigg[  f(\mathbf{x}_{k-1}) + \langle  \mathbf{v}_{k-1}, \mathbf{x} - \mathbf{x}_{k-1}  \rangle + \frac{\mu}{2} \| \mathbf{x} -  \mathbf{x}_{k-1} \|^2	\bigg] \nonumber \\
	& \leq (1- \delta)\mathbb{E}\big[ \Phi_{k-1} (\mathbf{x}) \big]
	+ \delta f(\mathbf{x}) + \delta \mathbb{E}\big[ \langle \mathbf{v}_{k-1} - \nabla f(\mathbf{x}_{k-1}), \mathbf{x} - \mathbf{x}_{k-1}\rangle \big] \nonumber \\
	& \leq (1-\delta)^k \big[\Phi_0(\mathbf{x}) - f(\mathbf{x})\big] + f(\mathbf{x}) + \underbrace{\delta \sum_{\tau =0}^{k-1} (1 - \delta)^\tau \mathbb{E}\big[ \langle \mathbf{v}_{k-1-\tau} - \nabla f(\mathbf{x}_{k-1-\tau}), \mathbf{x} - \mathbf{x}_{k-1-\tau}\rangle \big] }_{:=C;~~ C \neq 0, ~\text{an extra term compared with SVRG}}.
\end{align*}

\subsection{A Key Lemma}
The next lemma plays a major role in our analysis. 
\begin{lemma}\label{lemma.est_seq2}
If we choose $\mu_0 = \mu$, $\delta_k = \mu_k \eta$, and $\Phi_0^* = f(\mathbf{x}_0)$ in the ES defined in \eqref{eq.est_seq}, we then find that: i) $\mu_k = \mu, \forall k$; ii) $\delta:= \delta_k = \mu\eta$; and iii) the following inequality holds
	\begin{align*}
		& ~~ \delta \sum_{\tau = 1}^{k-1}  (1 - \delta)^{k-\tau-1} \big[   f(\mathbf{x}_\tau) -  f(\mathbf{x}^*)  \big] + (1 - \delta)^{k-1} \big[   f(\mathbf{x}_0) -  f(\mathbf{x}^*)  \big] \nonumber \\
		& \leq (1 - \delta)^k \big[ \Phi_0(\mathbf{x}^*) - f(\mathbf{x}^*) \big] +  \frac{ \mu \eta^2 }{2}  \sum_{\tau = 1}^k  (1 - \delta)^{k-\tau} \| \mathbf{v}_{\tau-1} \|^2  + \delta \sum_{\tau=1}^k (1 - \delta)^{k-\tau} \zeta_{\tau - 1}
	\end{align*}
	where $\zeta_{k-1}:= \langle  \mathbf{v}_{k-1} - \nabla f(\mathbf{x}_{k-1}), \mathbf{x}^* - \mathbf{x}_{k-1}  \rangle$.
\end{lemma}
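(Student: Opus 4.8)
The plan is to treat parts i)--ii) as immediate consequences of Lemma~\ref{lemma.est_seq} and to reserve the real work for the inequality in iii). For i), since $\mu_0=\mu$, the recursion $\mu_k=(1-\delta_k)\mu_{k-1}+\delta_k\mu$ from Lemma~\ref{lemma.est_seq}(i) gives $\mu_k=\mu$ by a one-line induction. Part ii) is then immediate: $\delta_k=\mu_k\eta=\mu\eta$ is constant, so I write $\delta:=\mu\eta$ and $1-\delta_k=1-\delta$ throughout.

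For iii) I would produce two independent expressions for $\Phi_k^*$ and glue them together. First, specializing the recursion $\Phi_k^*=(1-\delta)\Phi_{k-1}^*+\delta f(\mathbf{x}_{k-1})-\frac{\mu\eta^2}{2}\|\mathbf{v}_{k-1}\|^2$ from Lemma~\ref{lemma.est_seq}(iii) and unrolling it with $\Phi_0^*=f(\mathbf{x}_0)$, I obtain the closed form
\[
\Phi_k^*=(1-\delta)^k f(\mathbf{x}_0)+\delta\sum_{\tau=1}^k(1-\delta)^{k-\tau}f(\mathbf{x}_{\tau-1})-\frac{\mu\eta^2}{2}\sum_{\tau=1}^k(1-\delta)^{k-\tau}\|\mathbf{v}_{\tau-1}\|^2 .
\]
Reindexing the $f$-sum and merging the two $f(\mathbf{x}_0)$ contributions collapses the $\mathbf{x}_0$ coefficient to $(1-\delta)^{k-1}$, and I would verify that the weights multiplying $\{f(\mathbf{x}_j)\}_{j=0}^{k-1}$ sum to one. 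Subtracting $f(\mathbf{x}^*)$ then reveals that the left-hand side of iii) equals precisely $\Phi_k^*-f(\mathbf{x}^*)+\frac{\mu\eta^2}{2}\sum_{\tau=1}^k(1-\delta)^{k-\tau}\|\mathbf{v}_{\tau-1}\|^2$; the $\|\mathbf{v}\|^2$ correction is engineered to cancel the identical term on the right, so it only remains to establish $\Phi_k^*-f(\mathbf{x}^*)\le(1-\delta)^k[\Phi_0(\mathbf{x}^*)-f(\mathbf{x}^*)]+\delta\sum_{\tau=1}^k(1-\delta)^{k-\tau}\zeta_{\tau-1}$.

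The second expression supplies exactly this bound. Using $\Phi_k^*\le\Phi_k(\mathbf{x}^*)$ and evaluating the defining recursion at $\mathbf{x}^*$, the bracketed term becomes $f(\mathbf{x}_{k-1})+\langle\mathbf{v}_{k-1},\mathbf{x}^*-\mathbf{x}_{k-1}\rangle+\frac{\mu}{2}\|\mathbf{x}^*-\mathbf{x}_{k-1}\|^2$. I would split $\mathbf{v}_{k-1}=\nabla f(\mathbf{x}_{k-1})+(\mathbf{v}_{k-1}-\nabla f(\mathbf{x}_{k-1}))$; the $\nabla f$ part is dominated by $f(\mathbf{x}^*)$ via the $\mu$-strong convexity inequality (Assumption~\ref{as.3}, applied with $\mathbf{y}=\mathbf{x}_{k-1}$ and $\mathbf{x}=\mathbf{x}^*$), while the remaining inner product is exactly $\zeta_{k-1}$. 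This yields $\Phi_k(\mathbf{x}^*)\le(1-\delta)\Phi_{k-1}(\mathbf{x}^*)+\delta f(\mathbf{x}^*)+\delta\zeta_{k-1}$; subtracting $f(\mathbf{x}^*)$, writing $b_k:=\Phi_k(\mathbf{x}^*)-f(\mathbf{x}^*)$, and unrolling the contraction $b_k\le(1-\delta)b_{k-1}+\delta\zeta_{k-1}$ delivers the required bound and closes the argument.

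The step I expect to be most delicate is the bookkeeping in the first expression: confirming that the geometric weights from the telescoped $\Phi_k^*$ reassemble into the almost-uniform weights $(1-\delta)^{k-\tau-1}$ of the left-hand side (including the boundary weight $(1-\delta)^{k-1}$ on $f(\mathbf{x}_0)$) and that they sum to one. The conceptual subtlety, by contrast, is keeping the term $\zeta_{k-1}$ explicit rather than discarding it: for SVRG this inner product vanishes in expectation, but carrying it verbatim is precisely what makes the lemma applicable to SARAH's biased $\mathbf{v}_k$, where $\zeta_{k-1}$ is later controlled through Lemma~\ref{lemma.sarah_addition}.
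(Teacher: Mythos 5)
Your proposal is correct and takes essentially the same route as the paper: both arguments rest on unrolling the recursion $\Phi_k^* = (1-\delta)\Phi_{k-1}^* + \delta f(\mathbf{x}_{k-1}) - \frac{\mu\eta^2}{2}\|\mathbf{v}_{k-1}\|^2$ from Lemma~\ref{lemma.est_seq}(iii) with $\Phi_0^* = f(\mathbf{x}_0)$, and on the bound $\Phi_k^* \leq \Phi_k(\mathbf{x}^*)$ obtained by splitting $\mathbf{v}_{k-1}$ into $\nabla f(\mathbf{x}_{k-1})$ plus bias, invoking strong convexity, and carrying $\zeta_{k-1}$ explicitly through the contraction $b_k \leq (1-\delta)b_{k-1} + \delta\zeta_{k-1}$. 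The only divergence is bookkeeping: the paper routes through the auxiliary recursion for $f(\mathbf{x}_k) - \Phi_k^*$ with increments $\xi_k = f(\mathbf{x}_k) - f(\mathbf{x}_{k-1}) + \frac{\mu\eta^2}{2}\|\mathbf{v}_{k-1}\|^2$ and must then telescope and eliminate $f(\mathbf{x}_k)$, whereas you unroll $\Phi_k^*$ directly into a convex combination of $\{ f(\mathbf{x}_j) \}_{j=0}^{k-1}$ --- an algebraically equivalent, in fact slightly tidier, arrangement.
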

\begin{proof}
Since i) and ii) are straightforward to verify, we will prove iii). Using property iii) in Lemma \ref{lemma.est_seq}, we find
	\begin{align}\label{eq.svrg_pf_step1}
		f(\mathbf{x}_k) - \Phi_k^* & = f(\mathbf{x}_k) - (1 - \delta_k )\Phi_{k-1}^* - \delta_k f(\mathbf{x}_{k-1}) + \frac{ \mu_k \eta^2 }{2} \| \mathbf{v}_{k-1} \|^2	\nonumber \\
		& = f(\mathbf{x}_k) - \Phi_{k-1}^* +  \delta_k \big(\Phi_{k-1}^* -f(\mathbf{x}_{k-1})\big) + \frac{ \mu_k \eta^2 }{2} \| \mathbf{v}_{k-1} \|^2	\nonumber \\
		& = f(\mathbf{x}_k) -  f(\mathbf{x}_{k-1}) +  f(\mathbf{x}_{k-1}) - \Phi_{k-1}^* +  \delta_k \big(\Phi_{k-1}^* -f(\mathbf{x}_{k-1})\big) + \frac{ \mu_k \eta^2 }{2} \| \mathbf{v}_{k-1} \|^2	\nonumber \\
		& = (1 - \delta_k) \big[ f(\mathbf{x}_{k-1}) - \Phi_{k-1}^*  \big] + \xi_k
	\end{align}
where $\xi_k$ is defined as
	\begin{align*}
		\xi_k:= f(\mathbf{x}_k) -  f(\mathbf{x}_{k-1}) +  \frac{ \mu_k \eta^2 }{2} \| \mathbf{v}_{k-1} \|^2. 
	\end{align*}
Upon expanding $ f(\mathbf{x}_{k-1}) - \Phi_{k-1}^* $ in \eqref{eq.svrg_pf_step1}, we have
	\begin{align}\label{eq.svrg_pf_step2}
		f(\mathbf{x}_k) - \Phi_k^* & = (1 - \delta_k) \big[ f(\mathbf{x}_{k-1}) - \Phi_{k-1}^*  \big] + \xi_k \nonumber \\
		& = \Big[ \prod_{\tau = 1}^k (1 - \delta_\tau) \Big] [f(\mathbf{x}_0) - \Phi_0^*] + \sum_{\tau = 1}^k \xi_\tau \Big[\prod_{j= \tau+1}^k (1 - \delta_j) \Big]
	\end{align}
from which we deduce that  
	\begin{align}\label{eq.phik}
		\Phi_k^* & \leq \Phi_k(\mathbf{x}^*) = (1- \delta_k)\Phi_{k-1} (\mathbf{x}^*) + \delta_k \bigg[  f(\mathbf{x}_{k-1}) + \langle  \mathbf{v}_{k-1}, \mathbf{x}^* - \mathbf{x}_{k-1}  \rangle + \frac{\mu}{2} \| \mathbf{x}^* -  \mathbf{x}_{k-1} \|^2	\bigg] \nonumber \\
		&  \stackrel{(a)}{=} (1- \delta_k)\Phi_{k-1} (\mathbf{x}^*) + \delta_k \bigg[  f(\mathbf{x}_{k-1}) + \langle  \nabla f(\mathbf{x}_{k-1}), \mathbf{x}^* - \mathbf{x}_{k-1}  \rangle + \frac{\mu}{2} \| \mathbf{x}^* -  \mathbf{x}_{k-1} \|^2	+ \zeta_{k-1} \bigg] \nonumber \\
		&  \stackrel{(b)}{\leq} (1- \delta_k)\Phi_{k-1} (\mathbf{x}^*) + \delta_k  f(\mathbf{x}^*) 	+  \delta_k \zeta_{k-1} \nonumber \\
		& \leq \Big[ \prod_{\tau = 1}^k (1 - \delta_\tau) \Big] \Phi_0(\mathbf{x}^*) + \sum_{\tau = 1}^k \delta_\tau f(\mathbf{x}^*) \Big[\prod_{j= \tau+1}^k (1 - \delta_j) \Big]+ \sum_{\tau = 1}^k \delta_\tau \zeta_{\tau-1} \Big[\prod_{j= \tau+1}^k (1 - \delta_j) \Big]
	\end{align}
	where in (a) the $\zeta_{k-1}$ is defined as 
	\begin{align*}
		\zeta_{k-1}:= \langle  \mathbf{v}_{k-1} - \nabla f(\mathbf{x}_{k-1}), \mathbf{x}^* - \mathbf{x}_{k-1}  \rangle;
	\end{align*}
and (b) follows from the strongly convexity of $f$. Then, using \eqref{eq.svrg_pf_step2}, we have
	\begin{align*}
		f(\mathbf{x}_k) & - f(\mathbf{x}^*)  =   \Phi_k^* - f(\mathbf{x}^*) + \Big[ \prod_{\tau = 1}^k (1 - \delta_\tau) \Big] [f(\mathbf{x}_0) - \Phi_0^*] + \sum_{\tau = 1}^k \xi_\tau \Big[\prod_{j= \tau+1}^k (1 - \delta_j) \Big]		\nonumber \\
		& \stackrel{(c)}{\leq} \Big[ \prod_{\tau = 1}^k (1 - \delta_\tau) \Big] \Phi_0(\mathbf{x}^*) + \sum_{\tau = 1}^k \delta_\tau f(\mathbf{x}^*) \Big[\prod_{j= \tau+1}^k (1 - \delta_j) \Big]+ \sum_{\tau = 1}^k \delta_\tau \zeta_{\tau-1} \Big[\prod_{j= \tau+1}^k (1 - \delta_j) \Big] \nonumber \\
		&~~~~~~~~~~~~ - f(\mathbf{x}^*) + \Big[ \prod_{\tau = 1}^k (1 - \delta_\tau) \Big] [f(\mathbf{x}_0) - \Phi_0^*] + \sum_{\tau = 1}^k \xi_\tau \Big[\prod_{j= \tau+1}^k (1 - \delta_j) \Big]	
	\end{align*}
where (c) is due to \eqref{eq.phik}. Choosing $\mu_0 = \mu$ (hence $\mu_k = \mu, \delta_k = \mu \eta:= \delta,~ \forall k$) and $\Phi_0^* = f(\mathbf{x}_0)$, we arrive at
	\begin{align}\label{eq.goal1}
		f(\mathbf{x}_k) & - f(\mathbf{x}^*)  \leq  (1 - \delta)^k \big[ \Phi_0(\mathbf{x}^*) - f(\mathbf{x}^*) \big] + \sum_{\tau = 1}^k  (1 - \delta)^{k-\tau} \big( \xi_\tau  + \delta \zeta_{\tau-1} \big).
	\end{align}
	Now consider that
	\begin{align}\label{eq.xi_t}
		& ~~~~ \sum_{\tau = 1}^k  (1 - \delta)^{k-\tau}  \xi_\tau  = \sum_{\tau = 1}^k  (1 - \delta)^{k-\tau} \Big[   f(\mathbf{x}_\tau) -  f(\mathbf{x}_{\tau-1}) +  \frac{ \mu \eta^2 }{2} \| \mathbf{v}_{\tau-1} \|^2  \Big]  	 \nonumber \\
		& =  f(\mathbf{x}_k) + \sum_{\tau = 1}^{k-1}  (1 - \delta)^{k-\tau}    f(\mathbf{x}_\tau) - \sum_{\tau = 1}^{k-1}  (1 - \delta)^{k-\tau-1}   f(\mathbf{x}_\tau)  -(1 - \delta)^{k-1} f(\mathbf{x}_0) + \frac{ \mu \eta^2 }{2}  \sum_{\tau = 1}^k  (1 - \delta)^{k-\tau} \| \mathbf{v}_{\tau-1} \|^2  \nonumber \\
		& = - \delta \sum_{\tau = 1}^{k-1}  (1 - \delta)^{k-\tau-1}   f(\mathbf{x}_\tau) + f(\mathbf{x}_k) - (1 - \delta)^{k-1} f(\mathbf{x}_0) +  \frac{ \mu \eta^2 }{2}  \sum_{\tau = 1}^k  (1 - \delta)^{k-\tau} \| \mathbf{v}_{\tau-1} \|^2.
	\end{align}
Because $\delta \sum_{\tau = 1}^{k-1}  (1 - \delta)^{k-\tau-1} + (1 - \delta)^{k-1} = 1$, we can write $f(\mathbf{x}^*) = [ \delta \sum_{\tau = 1}^{k-1}  (1 - \delta)^{k-\tau-1} + (1 - \delta)^{k-1} ] f(\mathbf{x}^*)$. Using the latter, plugging \eqref{eq.xi_t} into \eqref{eq.goal1}, and eliminating $f(\mathbf{x}_k)$, we obtain 
	\begin{align}\label{eq.ck}
		& ~~ \delta \sum_{\tau = 1}^{k-1}  (1 - \delta)^{k-\tau-1} \big[   f(\mathbf{x}_\tau) -  f(\mathbf{x}^*)  \big] + (1 - \delta)^{k-1} \big[   f(\mathbf{x}_0) -  f(\mathbf{x}^*)  \big] \nonumber \\
		& \leq (1 - \delta)^k \big[ \Phi_0(\mathbf{x}^*) - f(\mathbf{x}^*) \big] +  \frac{ \mu \eta^2 }{2}  \sum_{\tau = 1}^k  (1 - \delta)^{k-\tau} \| \mathbf{v}_{\tau-1} \|^2  + \delta \sum_{\tau=1}^k (1 - \delta)^{k-\tau} \zeta_{\tau - 1}
	\end{align}
which completes the proof.
\end{proof}

\section{Proofs for SVRG and SARAH}
\subsection{Proof for SVRG (Theorem \ref{thm.svrg} and Corollary \ref{coro.svrg})}

\textbf{Proof of Theorem \ref{thm.svrg}}
\begin{proof}
Since the choices of $\mu_0$, $\Phi_0^*$, and $\delta_k$ coincide with those in Lemma \ref{lemma.est_seq2}, we can directly apply Lemma \ref{lemma.est_seq2} to find
\begin{align}\label{eq.svrg_base}
		& ~~ \delta \sum_{\tau = 1}^{k-1}  (1 - \delta)^{k-\tau-1} \big[   f(\mathbf{x}_\tau) -  f(\mathbf{x}^*)  \big] + (1 - \delta)^{k-1} \big[   f(\mathbf{x}_0) -  f(\mathbf{x}^*)  \big] \nonumber \\
		& \leq (1 - \delta)^k \big[ \Phi_0(\mathbf{x}^*) - f(\mathbf{x}^*) \big] +  \frac{ \mu \eta^2 }{2}  \sum_{\tau = 1}^k  (1 - \delta)^{k-\tau} \| \mathbf{v}_{\tau-1} \|^2  + \delta \sum_{\tau=1}^k (1 - \delta)^{k-\tau} \zeta_{\tau - 1}
\end{align}
where $\zeta_{k-1}:= \langle  \mathbf{v}_{k-1} - \nabla f(\mathbf{x}_{k-1}), \mathbf{x}^* - \mathbf{x}_{k-1}  \rangle $. Upon defining the $\sigma$-algebra ${\cal F}_{k-1} = \sigma(i_0, i_1, \ldots,i_{k-1})$, and using that $\mathbf{v}_k$ is an unbiased estimate of $\nabla f(\mathbf{x}_k)$, it follows readily that 
\begin{align*}
	\mathbb{E}[\zeta_k| {\cal F}_{k-1}] = \mathbb{E}\big[\mathbf{v}_k - \nabla f(\mathbf{x}_k), \mathbf{x}^* - \mathbf{x}_k  \rangle| {\cal F}_{k-1}\big] = 0
\end{align*}
which further implies
\begin{align}\label{eq.svrg_cancel}
	\mathbb{E}[\zeta_k] = 0.
\end{align}
Now taking expectation on both sides of \eqref{eq.svrg_base} and using \eqref{eq.svrg_cancel}, we have
\begin{align}\label{eq.23}
	 & \delta \sum_{\tau = 1}^{k-1}  (1 - \delta)^{k-\tau-1} \mathbb{E} \big[   f(\mathbf{x}_\tau) -  f(\mathbf{x}^*)  \big] + (1 - \delta)^{k-1} \mathbb{E} \big[   f(\mathbf{x}_0) -  f(\mathbf{x}^*)  \big]   \\
	& \leq (1 - \delta)^k \mathbb{E}\big[ \Phi_0(\mathbf{x}^*) - f(\mathbf{x}^*) \big] +  \frac{ \mu \eta^2 }{2}  \sum_{\tau = 1}^k  (1 - \delta)^{k-\tau} \mathbb{E} \big[ \| \mathbf{v}_{\tau-1} \|^2 \big] \nonumber \\
	&  \stackrel{(a)}{\leq} (1 - \delta)^k \mathbb{E}\big[ \Phi_0(\mathbf{x}^*) - f(\mathbf{x}^*) \big] +  2 \mu L \eta^2   \sum_{\tau = 0}^{k-1}  (1 - \delta)^{k-\tau-1} \mathbb{E} \big[ f(\mathbf{x}_{\tau}) - f(\mathbf{x}^*) + f(\mathbf{x}_{0}) - f(\mathbf{x}^*) \big]    \nonumber \\
	& \stackrel{(b)}{\leq}(1 - \delta)^k \mathbb{E}\big[ \Phi_0(\mathbf{x}^*) - f(\mathbf{x}^*) \big] +  2 \mu L \eta^2   \sum_{\tau = 0}^{k-1}  (1 - \delta)^{k-\tau-1} \mathbb{E} \big[ f(\mathbf{x}_{\tau}) - f(\mathbf{x}^*) \big] +  \frac{2 \mu L \eta^2 }{\delta} \mathbb{E}\big[ f(\mathbf{x}_{0}) - f(\mathbf{x}^*)\big] \nonumber
\end{align}
where in (a) we used Lemma \ref{lemma.est_err} to $\mathbb{E} [ \| \mathbf{v}_{\tau-1} \|^2 ]$; and (b) holds because $\sum_{\tau = 0}^{k-1}  (1 - \delta)^{k-\tau-1} \leq 1/\delta$. Note that we can use $\Phi_0(\mathbf{x}^*) = f(\mathbf{x}_0)+\frac{\mu}{2}\| \mathbf{x}_0 - \mathbf{x}^* \|^2$ together with $(1-\delta)^{k-1} > (1-\delta)^k$, to eliminate $(1 - \delta)^{k-1} \mathbb{E}[  f(\mathbf{x}_0) -  f(\mathbf{x}^*)  ] $ on the LHS of \eqref{eq.23}. Rearranging the terms, we arrive at
\begin{align}
	& ~~~ (\delta - 2 \mu L \eta^2) \sum_{\tau = 1}^{k-1}  (1 - \delta)^{k-\tau-1} \mathbb{E} \big[   f(\mathbf{x}_\tau) -  f(\mathbf{x}^*)  \big]   \nonumber \\
	& \leq \frac{\mu}{2}(1 - \delta)^k \mathbb{E}\big[ \| \mathbf{x}_0 - \mathbf{x}^* \|^2 \big] +  2 \mu L \eta^2   (1 - \delta)^{k-1} \mathbb{E} \big[ f(\mathbf{x}_0) - f(\mathbf{x}^*) \big] +  \frac{2 \mu L \eta^2 }{\delta} \mathbb{E}\big[ f(\mathbf{x}_{0}) - f(\mathbf{x}^*)\big] \nonumber \\
	& \leq \bigg[ (1 - \delta)^k + 2 \mu L \eta^2   (1 - \delta)^{k-1} +  \frac{2 \mu L \eta^2 }{\delta} \bigg]  \mathbb{E}\big[ f(\mathbf{x}_{0}) - f(\mathbf{x}^*)\big] 
\end{align}
where the last inequality is due to $\frac{\mu}{2}\| \mathbf{x} - \mathbf{x}^* \|\leq f(\mathbf{x}) - f(\mathbf{x}^*)$. Now choosing $\eta< 1/ 2L$ so that $\delta - 2 \mu L\eta^2 >0$, we have 
\begin{align*}
	\sum_{\tau = 1}^{k-1}  (1 - \delta)^{k-\tau-1} \mathbb{E} \big[   f(\mathbf{x}_\tau) -  f(\mathbf{x}^*)  \big]   \leq \bigg[ \frac{(1 - \delta)^k}{ \delta - 2 \mu L \eta^2 } + \frac{2 \mu L \eta^2   (1 - \delta)^{k-1}}{\delta - 2 \mu L \eta^2 } +  \frac{2 \mu L \eta^2 }{\delta(\delta - 2 \mu L \eta^2) } \bigg]  \mathbb{E}\big[ f(\mathbf{x}_{0}) - f(\mathbf{x}^*)\big].
\end{align*}
With $p_0 = p_m = 0$, and $p_k = (1 - \delta)^{m-k -1}/q, k = 1,2,\ldots, m-1$, where $q = [1 - (1 - \delta)^{m-1}]/\delta$ (with $\delta = \mu\eta$), we find  
\begin{align}
	& \mathbb{E} \big[ f(\tilde{\mathbf{x}}^{s}) - f(\mathbf{x}^*) \big] = \sum_{\tau = 1}^{m-1}  \frac{(1 - \delta)^{m-\tau-1}}{q} \mathbb{E} \big[   f(\mathbf{x}_\tau) -  f(\mathbf{x}^*)  \big] \nonumber \\
	& \leq \frac{1}{q}\bigg[ \frac{(1 - \delta)^m}{ \delta - 2 \mu L \eta^2 } + \frac{2 \mu L \eta^2   (1 - \delta)^{m-1}}{\delta - 2 \mu L \eta^2 } +  \frac{2 \mu L \eta^2 }{\delta(\delta - 2 \mu L \eta^2) } \bigg]  \mathbb{E}\big[ f(\tilde{\mathbf{x}}^{s-1}) - f(\mathbf{x}^*)\big] \nonumber \\
	& = \underbrace{\frac{1}{1 - (1 - \mu \eta)^{m-1}} \bigg[ \frac{(1 - \mu \eta)^m}{ 1 - 2 \eta L } + \frac{2 \mu L \eta^2   (1 - \mu \eta )^{m-1}}{1 - 2  L \eta } +  \frac{2  L \eta }{1 - 2 L \eta } \bigg]}_{:= \lambda^{\texttt{SVRG}}}  \mathbb{E}\big[ f(\tilde{\mathbf{x}}^{s-1}) - f(\mathbf{x}^*)\big].
\end{align}
Thus, so long as we choose a large enough $m$ and $\eta < 1/(4L)$, we have $\lambda^{\texttt{SVRG}}<1$, that is, SVRG converges linearly.
\end{proof}

\noindent\textbf{Proof of Corollary \ref{coro.svrg}}
\begin{proof}
Choose $\eta = 1/(8L)$ and $m = \frac{3}{\mu \eta} +1 = 24 \kappa +1 \geq 25$. We have that 
\begin{align*}
	(1 - \mu \eta)^{\frac{1}{\mu\eta}}	\leq 0.4 ~~~\Rightarrow~~~ (1 - \mu \eta)^{m} \leq (0.4)^3 
\end{align*}
(Actually $(1 - \mu \eta)^{\frac{1}{\mu\eta}} \approx 1/e$ when $\mu \eta$ is small enough). Using the value of $\eta$ and $m$, it can be verified that $\lambda^{\texttt{SVRG}} \leq 0.5$. This implies that ${\cal O}\big(\ln\frac{1}{\epsilon} \big)$ outer loops are needed for an $\epsilon$-accurate solution. And since $m = {\cal O}(\kappa)$, the overall complexity is ${\cal O}\big( (n+\kappa)\ln\frac{1}{\epsilon} \big)$.
\end{proof}

\subsection{Proofs for SARAH (Lemma \ref{lemma.sarah_addition}, Theorem \ref{thm.sarah} and Corollary \ref{coro.sarah})}\label{appdx.pf_sarah}
\textbf{Proof of Lemma \ref{lemma.sarah_addition}}
\begin{proof}
	Let ${\cal F}_{k-1} = \sigma(i_1, i_2,\ldots,i_{k-1})$, then for any $\mathbf{x}$ we have
	\begin{align*}
		\mathbb{E} \big[ \langle \mathbf{v}_k - \nabla f(\mathbf{x}_k),~ & \mathbf{x} - \mathbf{x}_k \rangle  | {\cal F}_{k-1}\big]	 = \mathbb{E} \big[ \langle \nabla f_{i_k}(\mathbf{x}_k) - \nabla f_{i_k}(\mathbf{x}_{k-1}) + \mathbf{v}_{k-1} - \nabla f(\mathbf{x}_k), ~\mathbf{x} - \mathbf{x}_k \rangle  | {\cal F}_{k-1}\big]  \\
		& = \langle  \mathbf{v}_{k-1} - \nabla f(\mathbf{x}_{k-1}) ,~ \mathbf{x} - \mathbf{x}_k \rangle = \langle  \mathbf{v}_{k-1} - \nabla f(\mathbf{x}_{k-1}) ,~ \mathbf{x} - \mathbf{x}_{k-1} + \mathbf{x}_{k-1} - \mathbf{x}_k \rangle \nonumber \\
		& = \langle  \mathbf{v}_{k-1} - \nabla f(\mathbf{x}_{k-1}) ,~ \mathbf{x} - \mathbf{x}_{k-1} \rangle + \eta  \langle  \mathbf{v}_{k-1} - \nabla f(\mathbf{x}_{k-1}) ,~  \mathbf{v}_{k-1} \rangle \nonumber \\
		& = \langle  \mathbf{v}_{k-1} - \nabla f(\mathbf{x}_{k-1}) ,~ \mathbf{x} - \mathbf{x}_{k-1} \rangle + \frac{\eta}{2} \Big[  \|\mathbf{v}_{k-1}  \|^2  + \|  \mathbf{v}_{k-1} - \nabla f(\mathbf{x}_{k-1}) \|^2 -  \| \nabla f(\mathbf{x}_{k-1}) \|^2 \Big] \nonumber 
	\end{align*}
	where the last equation is because $2\langle\mathbf{a}, \mathbf{b} \rangle = \| \mathbf{a}\|^2 + \|\mathbf{b}\|^2 - \|\mathbf{a}- \mathbf{b} \|^2$. Since $\mathbf{v}_0 = \nabla f(\mathbf{x}_0)$, we have $\langle \mathbf{v}_0 - \nabla f(\mathbf{x}_0),  \mathbf{x} - \mathbf{x}_0 \rangle = 0$. And the proof is completed, after taking expectation and unrolling $ \langle  \mathbf{v}_{k-1} - \nabla f(\mathbf{x}_{k-1}) , \mathbf{x} - \mathbf{x}_{k-1} \rangle$. 
\end{proof}

In order to prove Theorem \ref{thm.sarah}, we need to borrow the following result from \citep{nguyen2017}.

\begin{lemma}\label{lemma.sarah_v_norm}
	\citep[Theorem 1b]{nguyen2017} If Assumptions \ref{as.1} and \ref{as.4} hold, 
	with $\eta \leq 2/(\mu+L)$, SARAH guarantees
	\begin{align*}
		\mathbb{E}\big[ \| \mathbf{v}_k \|^2 \big] \leq \bigg( 1 - \frac{2 \eta L}{1 + \kappa}\bigg)^k \mathbb{E} \big[ \| \nabla f(\mathbf{x}_0) \|^2 \big].
	\end{align*}
\end{lemma}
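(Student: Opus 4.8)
The plan is to establish a one-step geometric contraction $\mathbb{E}\big[\|\mathbf{v}_k\|^2 \mid {\cal F}_{k-1}\big] \leq \big(1 - \frac{2\eta L}{1+\kappa}\big)\|\mathbf{v}_{k-1}\|^2$ and then iterate it. First I would exploit the SARAH recursion $\mathbf{v}_k = \mathbf{v}_{k-1} + \nabla f_{i_k}(\mathbf{x}_k) - \nabla f_{i_k}(\mathbf{x}_{k-1})$ and expand the square as $\|\mathbf{v}_k\|^2 = \|\mathbf{v}_{k-1}\|^2 + 2\langle \mathbf{v}_{k-1}, \nabla f_{i_k}(\mathbf{x}_k) - \nabla f_{i_k}(\mathbf{x}_{k-1})\rangle + \|\nabla f_{i_k}(\mathbf{x}_k) - \nabla f_{i_k}(\mathbf{x}_{k-1})\|^2$. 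Conditioning on ${\cal F}_{k-1}$, under which $\mathbf{x}_{k-1}$, $\mathbf{x}_k$ and $\mathbf{v}_{k-1}$ are all measurable, and using $\mathbb{E}[\nabla f_{i_k}(\cdot)\mid{\cal F}_{k-1}] = \nabla f(\cdot)$, the cross term becomes $2\langle \mathbf{v}_{k-1}, \nabla f(\mathbf{x}_k) - \nabla f(\mathbf{x}_{k-1})\rangle$, while the final term averages to $\frac{1}{n}\sum_i\|\nabla f_i(\mathbf{x}_k) - \nabla f_i(\mathbf{x}_{k-1})\|^2$.

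The crucial manipulation is to substitute $\mathbf{v}_{k-1} = -\eta^{-1}(\mathbf{x}_k - \mathbf{x}_{k-1})$ into the cross term and then regroup the cross and quadratic terms \emph{at the per-sample level} before bounding, giving $\frac{1}{n}\sum_i\big[-\frac{2}{\eta}\langle \mathbf{x}_k - \mathbf{x}_{k-1}, \nabla f_i(\mathbf{x}_k) - \nabla f_i(\mathbf{x}_{k-1})\rangle + \|\nabla f_i(\mathbf{x}_k) - \nabla f_i(\mathbf{x}_{k-1})\|^2\big]$. To each summand I would apply the standard co-coercivity inequality for a $\mu$-strongly convex and $L$-smooth function, namely $\langle \mathbf{x}-\mathbf{y}, \nabla f_i(\mathbf{x}) - \nabla f_i(\mathbf{y})\rangle \geq \frac{\mu L}{\mu+L}\|\mathbf{x}-\mathbf{y}\|^2 + \frac{1}{\mu+L}\|\nabla f_i(\mathbf{x}) - \nabla f_i(\mathbf{y})\|^2$, which is valid by Assumptions \ref{as.1} and \ref{as.4}. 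After substitution the coefficient multiplying $\|\nabla f_i(\mathbf{x}_k) - \nabla f_i(\mathbf{x}_{k-1})\|^2$ equals $1 - \frac{2}{\eta(\mu+L)}$, which the step-size restriction $\eta \leq 2/(\mu+L)$ renders nonpositive; hence those terms may be discarded. What remains is $-\frac{2\mu L}{\eta(\mu+L)}\|\mathbf{x}_k - \mathbf{x}_{k-1}\|^2$, and using $\|\mathbf{x}_k - \mathbf{x}_{k-1}\|^2 = \eta^2\|\mathbf{v}_{k-1}\|^2$ together with $\frac{\mu L}{\mu+L} = \frac{L}{1+\kappa}$ yields exactly the contraction factor $1 - \frac{2\eta L}{1+\kappa}$.

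Finally I would take total expectation of the one-step bound, invoke the tower property, and unroll the recursion $k$ times; since $\mathbf{v}_0 = \nabla f(\mathbf{x}_0)$ the base term is $\mathbb{E}[\|\nabla f(\mathbf{x}_0)\|^2]$, which delivers the stated claim. The main obstacle I anticipate is precisely the grouping in the crucial step: a naive route that applies co-coercivity to the aggregate $f$ and separately bounds $\frac{1}{n}\sum_i\|\nabla f_i(\mathbf{x}_k) - \nabla f_i(\mathbf{x}_{k-1})\|^2$ by $L^2\eta^2\|\mathbf{v}_{k-1}\|^2$ leaves a spurious additive $+L^2\eta^2$ factor that destroys the contraction. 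The fix is to keep the cross and quadratic terms together inside the per-$i$ sum, so that the step-size condition annihilates the gradient-difference contribution rather than splitting it off.
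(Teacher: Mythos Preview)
Your proposal is correct. The paper does not supply its own proof of this lemma---it is quoted verbatim from \citet[Theorem~1b]{nguyen2017}---and the argument you outline (expand $\|\mathbf{v}_k\|^2$ from the SARAH recursion, condition on ${\cal F}_{k-1}$, regroup the cross and quadratic terms per sample, apply the strong-convexity/smoothness co-coercivity inequality to each $f_i$, drop the nonpositive gradient-difference term under $\eta\leq 2/(\mu+L)$, and unroll) is precisely the standard proof from that reference.
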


\noindent\textbf{Proof of Theorem \ref{thm.sarah}.}
\begin{proof}
With the choices of $\mu_0$, $\Phi_0^*$ and $\delta_k$ as in Lemma \ref{lemma.est_seq2}, we can directly apply Lemma \ref{lemma.est_seq2} to confirm that
	\begin{align*}
		& ~~ (1-\delta)^{k-1} \big[ f(\mathbf{x}_0) - f(\mathbf{x}^*) \big] + \delta \sum_{\tau=1}^{k-1} (1 - \delta)^{k-\tau-1}	\big[ f(\mathbf{x}_\tau) - f(\mathbf{x}^*) \big]  \\
		& \leq (1- \delta)^k \big[ \Phi_0(\mathbf{x}^*) - f(\mathbf{x}^*) \big]  + \frac{\mu \eta^2}{2} \sum_{\tau=1}^k (1-\delta)^{k- \tau} \| \mathbf{v}_{\tau-1} \|^2 + \sum_{\tau = 1}^k \delta (1-\delta)^{k- \tau} \langle  \mathbf{v}_{\tau-1} - \nabla f(\mathbf{x}_{\tau-1}), \mathbf{x}^* - \mathbf{x}_{\tau-1}  \rangle \nonumber \\
		& = (1- \delta)^k \big[ \Phi_0(\mathbf{x}^*) - f(\mathbf{x}^*) \big]  + \frac{\mu \eta^2}{2} \sum_{\tau=1}^k (1-\delta)^{k- \tau} \| \mathbf{v}_{\tau-1} \|^2 + \sum_{\tau = 2}^k \delta (1-\delta)^{k- \tau} \langle  \mathbf{v}_{\tau-1} - \nabla f(\mathbf{x}_{\tau-1}), \mathbf{x}^* - \mathbf{x}_{\tau-1}  \rangle \nonumber
	\end{align*}
where the last equation holds because $\mathbf{v}_0 = \nabla f(\mathbf{x}_0)$. Since $\Phi_0(\mathbf{x}^*) = f(\mathbf{x}_0)+\frac{\mu}{2}\| \mathbf{x}_0 - \mathbf{x}^* \|^2 \leq f(\mathbf{x}_0) + \frac{1}{2\mu} \| \nabla f(\mathbf{x}_0) \|^2$ and $(1-\delta)^{k-1} > (1-\delta)^k$, we can eliminate $(1 - \delta)^{k-1} \mathbb{E}[  f(\mathbf{x}_0) -  f(\mathbf{x}^*)  ] $ on the LHS, to obtain the inequality
	\begin{align*}
		& ~~ \delta \sum_{\tau=1}^{k-1} (1 - \delta)^{k-\tau-1}	\big[ f(\mathbf{x}_\tau) - f(\mathbf{x}^*) \big]  \\
		& \leq \frac{ (1- \delta)^k }{2\mu} \| \nabla f(\mathbf{x}_0) \|^2  + \frac{\mu \eta^2}{2} \sum_{\tau=1}^k (1-\delta)^{k- \tau} \| \mathbf{v}_{\tau-1} \|^2 + \sum_{\tau = 2}^k \delta (1-\delta)^{k- \tau} \langle  \mathbf{v}_{\tau-1} - \nabla f(\mathbf{x}_{\tau-1}), \mathbf{x}^* - \mathbf{x}_{\tau-1}  \rangle \nonumber.
	\end{align*}
	Taking expectation on both sides, we arrive at
	\begin{align}\label{eq.thm_sarah_eq1}
		& ~~0\leq \delta \sum_{\tau=1}^{k-1} (1 - \delta)^{k-\tau-1}	\mathbb{E}\big[ f(\mathbf{x}_\tau) - f(\mathbf{x}^*) \big]  \\
		& \leq \frac{ (1- \delta)^k }{2\mu} \mathbb{E} \big[ \| \nabla f(\mathbf{x}_0) \|^2 \big]+   \frac{\mu \eta^2}{2} \sum_{\tau=1}^k (1-\delta)^{k- \tau} \mathbb{E}\big[ \| \mathbf{v}_{\tau-1} \|^2 \big] + \sum_{\tau = 2}^k \delta (1-\delta)^{k- \tau}\mathbb{E} \big[ \langle  \mathbf{v}_{\tau-1} - \nabla f(\mathbf{x}_{\tau-1}), \mathbf{x}^* - \mathbf{x}_{\tau-1}  \rangle \big] \nonumber \nonumber \\
		& = \frac{ (1 \!-\! \delta)^k }{2\mu} \mathbb{E} \big[ \| \nabla f(\mathbf{x}_0) \|^2 \big]+  \frac{\mu \eta^2}{2} \sum_{\tau=1}^k (1\!-\!\delta)^{k- \tau} \mathbb{E}\big[ \| \mathbf{v}_{\tau\!-\!1} \|^2 \big] + \sum_{\tau = 1}^{k-1} \delta (1 \!-\! \delta)^{k\! - \!1 \!- \! \tau}\mathbb{E} \big[ \langle  \mathbf{v}_{\tau} - \nabla f(\mathbf{x}_{\tau}), \mathbf{x}^* - \mathbf{x}_{\tau}  \rangle \big] \nonumber \\
		& \leq \frac{ (1 \!-\! \delta)^k }{2\mu} \mathbb{E} \big[ \| \nabla f(\mathbf{x}_0) \|^2 \big]+   \frac{\mu \eta^2}{2} \! \sum_{\tau=1}^k (1\!-\! \delta)^{k \!-\! \tau} \mathbb{E}\big[ \| \mathbf{v}_{\tau-1} \|^2 \big] \nonumber \\
		& ~~~~~~~~~~~~~~~~~~~~~~~~~ +\frac{\delta \eta}{2} \sum_{\tau = 1}^{k-1} (1\!-\! \delta)^{k\!-\! 1 \!-\! \tau} \sum_{j=0}^{\tau-1} \mathbb{E}\Big[ \| \mathbf{v}_j - \nabla f(\mathbf{x}_j) \|^2 + \| \mathbf{v}_j  \|^2 - \| \nabla f(\mathbf{x}_j ) \|^2 \Big] \nonumber
	\end{align}
where for the last inequality we used Lemma \ref{lemma.sarah_addition}. Changing the summation order in the last term of the RHS of \eqref{eq.thm_sarah_eq1}, yields 
	\begin{align}\label{eq.thm_sarah_eq2}
		& ~~~ \frac{\delta \eta}{2} \sum_{\tau = 1}^{k-1} (1-\delta)^{k-1- \tau} \sum_{j=0}^{\tau-1} \mathbb{E}\Big[ \| \mathbf{v}_j - \nabla f(\mathbf{x}_j) \|^2 + \| \mathbf{v}_j  \|^2 - \| \nabla f(\mathbf{x}_j ) \|^2 \Big]  \nonumber \\ 
		& =  \frac{\delta \eta}{2} \sum_{\tau = 0}^{k-2} \mathbb{E}\Big[ \| \mathbf{v}_\tau - \nabla f(\mathbf{x}_\tau) \|^2 + \| \mathbf{v}_\tau  \|^2 - \| \nabla f(\mathbf{x}_\tau ) \|^2 \Big] \bigg[ \sum_{j=0}^{k-\tau -2} (1-\delta)^\tau \bigg]
		\nonumber \\
		& \leq \frac{\eta}{2} \sum_{\tau = 0}^{k-2} \bigg( \mathbb{E} \big[  \| \mathbf{v}_\tau - \nabla f(\mathbf{x}_\tau) \|^2 \big] +  \mathbb{E} \big[  \| \mathbf{v}_\tau  \|^2 \big] \bigg) - \frac{\eta}{2} \sum_{\tau = 0}^{k-2} \big( 1 - (1 - \delta)^{k-\tau-1} \big) \mathbb{E}\big[ \| \nabla f(\mathbf{x}_\tau) \|^2 \big].
	\end{align}
Now plugging \eqref{eq.thm_sarah_eq2} into \eqref{eq.thm_sarah_eq1}, and rearranging the terms, we find
	\begin{align*}
		& ~~ \frac{\eta}{2} \sum_{\tau = 0}^{k-2} \big( 1 - (1 - \delta)^{k-1-\tau} \big) \mathbb{E}\big[ \| \nabla f(\mathbf{x}_\tau) \|^2 \big] \nonumber \\
		& \leq \frac{ (1 \!-\! \delta)^k }{2\mu} \mathbb{E} \big[ \| \nabla f(\mathbf{x}_0) \|^2 \big]+ \frac{\mu \eta^2}{2} \sum_{\tau=1}^k (1-\delta)^{k- \tau} \mathbb{E}\big[ \| \mathbf{v}_{\tau-1} \|^2 \big] + \frac{\eta}{2} \sum_{\tau = 0}^{k-2} \bigg( \mathbb{E} \big[  \| \mathbf{v}_\tau - \nabla f(\mathbf{x}_\tau) \|^2 \big] +  \mathbb{E} \big[  \| \mathbf{v}_\tau  \|^2 \big] \bigg).
	\end{align*}
Dividing both sides by $\eta/2$ (and recalling that $\delta = \mu\eta$), we arrive at
	\begin{align}\label{eq.number}
		&  \sum_{\tau = 0}^{k-2} \big( 1 - (1 - \delta)^{k-\tau-1} \big) \mathbb{E}\big[ \| \nabla f(\mathbf{x}_\tau) \|^2 \big] \\
		&  \leq  \frac{ (1 \!-\! \delta)^k }{\mu \eta} \mathbb{E} \big[ \| \nabla f(\mathbf{x}_0) \|^2 \big] + \delta \sum_{\tau=1}^k (1-\delta)^{k- \tau} \mathbb{E}\big[ \| \mathbf{v}_{\tau-1} \|^2 \big] +  \sum_{\tau = 0}^{k-2} \bigg( \mathbb{E} \big[  \| \mathbf{v}_\tau - \nabla f(\mathbf{x}_\tau) \|^2 \big] +  \mathbb{E} \big[  \| \mathbf{v}_\tau  \|^2 \big] \bigg) \nonumber \\
		& \stackrel{(a)}{\leq} \frac{ (1 \!-\! \delta)^k }{\mu \eta} \mathbb{E} \big[ \| \nabla f(\mathbf{x}_0) \|^2 \big] +  \delta \sum_{\tau=1}^k (1\!-\!\delta)^{k- \tau} \mathbb{E}\big[ \| \mathbf{v}_{\tau-1} \|^2 \big] + \frac{\eta L(k\!-\!1)}{2 - \eta L} \mathbb{E}\big[ \| \nabla f(\mathbf{x}_0) \|^2 \big] +  \frac{2 \!-\! 2 \eta L}{2 \!-\! \eta L} 
		 \sum_{\tau = 0}^{k-2}  \mathbb{E} \big[  \| \mathbf{v}_\tau  \|^2 \big] \nonumber \\
		 & \stackrel{(b)}{\leq}\frac{ (1 \!-\! \delta)^k }{\mu \eta} \mathbb{E} \big[ \| \nabla f(\mathbf{x}_0) \|^2 \big]\! +\!  \delta \sum_{\tau=1}^k (1\!-\!\delta)^{k \!-\! \tau} \mathbb{E}\big[ \| \mathbf{v}_{\tau\!-\!1} \|^2 \big] \!+\! \frac{\eta L(k\!-\!1)}{2 - \eta L}\mathbb{E}\big[ \| \nabla f(\mathbf{x}_0) \|^2\big]\! + \! \frac{2\! -\! 2 \eta L}{2\! -\! \eta L} \frac{1\!+\!\kappa}{2 \eta L}  \mathbb{E}\big[ \| \nabla f( \mathbf{x}_0)  \|^2 \big] \nonumber \\
		 & \stackrel{(c)}{\leq}  \frac{ (1 -  \delta)^k }{\mu \eta} \mathbb{E} \big[ \| \nabla f(\mathbf{x}_0) \|^2 \big] + \bigg[ (1 - \delta)^k -\Big( 1 - \frac{2\eta L}{1 + \kappa} \Big)^k \bigg]\frac{ L + \mu}{ L - \mu} \mathbb{E}\big[\| \nabla f (\mathbf{x}_0) \|^2 \big]\nonumber \\
		 & ~~~~~~~~~~~~~~~~~~~~~~~~~~~~~~~~~~~~~~~~~~~~~~~~~~~~ + \frac{\eta L(k-1)}{2 - \eta L} \mathbb{E}\big[ \| \nabla f(\mathbf{x}_0) \|^2 \big] +  \frac{2 - 2 \eta L}{2 - \eta L} \frac{1+\kappa}{2 L \eta}    \mathbb{E}\big[\| \nabla f( \mathbf{x}_0)  \|^2 \big] \nonumber
	\end{align}
where in (a) we applied Lemma \ref{lemma.est_err} to deal with $\mathbb{E}[\| \nabla f(\mathbf{x}_\tau) - \mathbf{v}_\tau \|^2]$; in (b) we chose $\eta<1/L$ and used Lemma \ref{lemma.sarah_v_norm} to handle $\mathbb{E}[\|\mathbf{v}_\tau\|^2]$ in the last term; and the derivation of (c) is as follows. First, notice that $2\eta L/(1+\kappa)> \mu \eta = \delta$, which implies that $1 - \delta > 1 - [2\eta L/(1+\kappa)]$. Then, leveraging Lemma \ref{lemma.sarah_v_norm}, we have
	\begin{align*}
		& \delta \sum_{\tau=1}^k (1-\delta)^{k - \tau} \mathbb{E}\big[ \| \mathbf{v}_{\tau -1} \|^2 \big] \leq \delta \sum_{\tau=1}^k (1-\delta)^{k - \tau}  \bigg( 1 - \frac{2 \eta L}{1 + \kappa}\bigg)^{\tau-1} \mathbb{E} \big[ \| \nabla f(\mathbf{x}_0) \|^2 \big] \nonumber \\
		& = \bigg[ (1 - \delta)^k -\Big( 1 - \frac{2\eta L}{1 + \kappa} \Big)^k \bigg]\frac{ L + \mu}{ L - \mu} \mathbb{E}\big[\| \nabla f (\mathbf{x}_0) \|^2 \big].
	\end{align*}
To proceed, define
	\begin{align*}
		c:=  \sum_{\tau = 0}^{m-2} \big( 1 - (1 - \delta)^{m-\tau-1} \big) = (m-1) - \frac{ (1 - \delta) - (1 - \delta)^{m}}{ \delta} = m - \frac{1}{\delta} + \frac{(1-\delta)^m}{\delta}.
	\end{align*}
and select $m$ large enough so that $c>0$. Upon setting $p_k = (1 - (1-\delta)^{m-k-1})/c, \forall k = 0,1,\ldots, m-2$, and $p_{m-1} = p_m = 0$, we have
	\begin{align*}
		&  \mathbb{E} \big[ \| \nabla f(\tilde{\mathbf{x}}^s) \| \big] = \frac{1}{c}\sum_{\tau = 0}^{m-2} \big( 1 - (1 - \delta)^{m-\tau-1} \big) \mathbb{E}\big[ \| \nabla f(\mathbf{x}_\tau) \|^2 \big] \nonumber \\
		& \leq \underbrace{\bigg[ \frac{ (1\! -\!  \delta)^m }{c \mu \eta} + \bigg( (1\! -\! \delta)^m -\Big( 1 \!-\! \frac{2\eta L}{1\!+\! \kappa} \Big)^m \bigg)\frac{ L\! +\! \mu}{ c(L\! -\! \mu)} 
		 + \frac{\eta L(m \!-\! 1)}{c(2 \!-\! \eta L)} +  \frac{2 \!-\! 2 \eta L}{2\! -\! \eta L} \frac{1+\kappa}{2 c \eta L}  \bigg] }_{:= \lambda^{\texttt{SARAH}}}\mathbb{E}\big[\| \nabla f( \tilde{\mathbf{x}}^{s-1})  \|^2 \big]. \nonumber
	\end{align*}
Selecting $\eta<1/L$ and $m$ large enough to let $\lambda^{\texttt{SARAH}}<1$ establishes SARAH's linear convergence. For example, choosing $\eta = 1/(2L)$ and $m = 5 \kappa$, we have $\lambda^{\texttt{SARAH}}\approx 0.8$.
\end{proof}

\noindent\textbf{Proof of Corollary \ref{coro.sarah}}
\begin{proof}
If we choose $\eta = 1/(2L)$ and $m = 6\kappa = 3/(\mu\eta)$, we have $\delta = 1/(2\kappa)$ and $c \geq 4\kappa$, which implies that
\begin{align*}
	(1 - \mu \eta)^{\frac{1}{\mu\eta}}	\leq 0.4
\end{align*}
(Actually $(1 - \mu \eta)^{\frac{1}{\mu\eta}} \approx 1/e$ when $\mu \eta$ small enough). Using the value of $\eta$ and $m$, it can be verified that $\lambda^{\texttt{SVRG}} \leq 0.75$. This implies that ${\cal O}\big(\ln\frac{1}{\epsilon} \big)$ outer loops are needed for an $\epsilon$-accurate solution. And since $m = {\cal O}(\kappa)$, the overall complexity is ${\cal O}\big( (n+\kappa)\ln\frac{1}{\epsilon} \big)$.
\end{proof}

\section{Proofs for BB-SVRG and BB-SARAH}\label{appdx.bb_pf}

\noindent\textbf{Derivation of \eqref{eq.eta_bounds}:} 
It is clear that
\begin{align*}
	\eta^s = \frac{1}{\theta_\kappa} \frac{\| \tilde{\mathbf{x}}^{s-1} - \tilde{\mathbf{x}}^{s-2} \|^2}{ \big\langle \tilde{\mathbf{x}}^{s-1} - \tilde{\mathbf{x}}^{s-2}, \nabla f(\tilde{\mathbf{x}}^{s-1}) -  \nabla f(\tilde{\mathbf{x}}^{s-2}) \big\rangle} \leq \frac{1}{\theta_\kappa} \frac{\| \tilde{\mathbf{x}}^{s-1} - \tilde{\mathbf{x}}^{s-2} \|^2}{ \mu \| \tilde{\mathbf{x}}^{s-1} - \tilde{\mathbf{x}}^{s-2} \|^2} = \frac{1}{\theta_\kappa \mu}
\end{align*}
where the inequality follows since under Assumption \ref{as.3} (or \ref{as.4}) $\langle \nabla f (\mathbf{x}) -  \nabla f (\mathbf{y}), \mathbf{x} - \mathbf{y} \rangle \geq \mu \|\mathbf{x} - \mathbf{y} \|^2$ \citep[Theorem 2.1.9]{nesterov2004}. On the other hand, we have
\begin{align*}
	\eta^s \geq \frac{1}{\theta_\kappa} \frac{\| \tilde{\mathbf{x}}^{s-1} - \tilde{\mathbf{x}}^{s-2} \|^2}{ \|\tilde{\mathbf{x}}^{s-1} - \tilde{\mathbf{x}}^{s-2} \| \| \nabla f(\tilde{\mathbf{x}}^{s-1}) -  \nabla f(\tilde{\mathbf{x}}^{s-2}) \big\|} \geq \frac{1}{\theta_\kappa L}
\end{align*}
where the first inequality follows from the Cauchy-Schwarz inequality; and the second inequality is due to Assumption \ref{as.1}.

\subsection{Proof for Proposition \ref{prop.bbsvrg}}
For BB-SVRG, the step size $\eta^s$ changes across different inner loops. Since $\eta^s$ influences convergence, we will use $\lambda^s$ to denote the convergence rate of the inner loop $s$, that is, $\mathbb{E}[f(\tilde{\mathbf{x}}^s) - f(\mathbf{x}^*)] \leq \lambda^s \mathbb{E}[f(\tilde{\mathbf{x}}^{s-1}) - f(\mathbf{x}^*)]$.

\noindent\textbf{BB-SVRG with U-Avg:}
\begin{proof}
	From \citep{johnson2013}, we have the convergence rate is 
	\begin{align*}
		\lambda^s & = \frac{1}{\mu \eta^s(1-2\eta^s L) m} + \frac{2 \eta^s L}{1- 2\eta^s L} \stackrel{(a)}{\leq} \frac{\kappa \theta_\kappa}{m(1 - 2\kappa/\theta_\kappa)} + \frac{2 \kappa/\theta_\kappa}{ 1 - 2\kappa/\theta_\kappa}
	\end{align*}
	where (a) is due to \eqref{eq.eta_bounds}. Hence, by choosing $\theta_\kappa > 4 \kappa$ with  $\theta_\kappa = {\cal O} (\kappa)$ and $m = {\cal O}(\kappa^2)$ such that $\lambda^s <1 $, and using similar arguments as in the proof of Corollary \ref{coro.svrg}, one can readily verify that the complexity is ${\cal O}\big((n+\kappa^2)\ln\frac{1}{\epsilon} \big)$.
\end{proof}

\noindent\textbf{BB-SVRG with W-Avg:}
\begin{proof}
It follows from Theorem \ref{thm.svrg} and \eqref{eq.eta_bounds} that the convergence rate satisfies
	\begin{align*}
		\lambda^s & = \frac{1}{1 - (1 - \mu \eta^s)^{m-1}} \bigg[ \frac{(1 - \mu \eta^s)^m}{ 1 - 2 \eta^s L } + \frac{2 \mu L (\eta^s)^2   (1 - \mu \eta )^{m-1}}{1 - 2  L \eta^s } +  \frac{2  L \eta^s }{1 - 2 L \eta^s } \bigg]  \nonumber \\	
		& \leq \frac{1}{1 - \big(1 - \frac{1}{\kappa \theta_\kappa}\big)^{m-1}} \bigg[ \frac{\big(1 - \frac{1}{\kappa \theta_\kappa}\big)^m}{1-2\kappa/\theta_\kappa} + \frac{\frac{2\kappa}{(\theta_\kappa)^2} \big(1 - \frac{1}{\kappa \theta_\kappa}\big)^{m-1}}{1-2\kappa/\theta_\kappa} + \frac{2 \kappa/\theta_\kappa}{ 1 - 2\kappa/\theta_\kappa} \bigg]
	\end{align*}
	where the inequality is due to \eqref{eq.eta_bounds}. Hence, by choosing $\theta_\kappa > 4 \kappa$ with  $\theta_\kappa = {\cal O} (\kappa)$ and $m = {\cal O}(\kappa^2)$ so that $\lambda^s <1 $, and using similar arguments as in the proof of Corollary \ref{coro.svrg}, one can establish that the complexity is ${\cal O}\big((n+\kappa^2)\ln\frac{1}{\epsilon} \big)$.
\end{proof}

\subsection{Proof for Proposition \ref{prop.bbsarah}}
Also for BB-SARAH, the step size $\eta^s$ changes across different inner loops. Since here too $\eta^s$ affects convergence, we will use $\lambda^s$ to denote the convergence rate of the inner loop $s$; that is, $\mathbb{E}[\| f(\tilde{\mathbf{x}}^s)\|^2] \leq \lambda^s \mathbb{E}[ \| f(\tilde{\mathbf{x}}^{s-1})\|^2]$.

\noindent\textbf{BB-SARAH with U-Avg:}
\begin{proof}
We have from \citep{nguyen2017} that the convergence rate is 
	\begin{align*}
		\lambda^s & = \frac{1}{\mu \eta^s m} + \frac{\eta^s L}{2 - \eta^s L} \stackrel{(a)}{\leq} \frac{\kappa \theta_\kappa}{m} + \frac{ \kappa/\theta_\kappa}{ 2 - \kappa/\theta_\kappa}
	\end{align*}
where (a) is due to \eqref{eq.eta_bounds}. Hence, by choosing $\theta_\kappa > \kappa$ with  $\theta_\kappa = {\cal O} (\kappa)$ and $m = {\cal O}(\kappa^2)$ so that $\lambda^s <1 $, and using arguments similar to those in the proof of Corollary \ref{coro.sarah}, one can establish that the complexity is ${\cal O}\big((n+\kappa^2)\ln\frac{1}{\epsilon} \big)$.
\end{proof}

\noindent\textbf{BB-SARAH with L-Avg:}
\begin{proof}
Since the derivation in \citep{li2019l2s} relies on Assumption \ref{as.3}, we will first establish the convergence rate under Assumption \ref{as.4}. The proof proceeds along the lines of \citep{li2019l2s},
except for the use of Lemma \ref{lemma.sarah_v_norm} to bound $\mathbb{E}[\| \mathbf{v}_t^s \|]^2$. After a simple derivation, one can have the convergence rate
	\begin{align*}
		\lambda^s & = \frac{2 \eta^s L}{2 - \eta^s L} + 2(1 + \eta^s L) \bigg( 1 - \frac{2\eta^s L}{1 + \kappa} \bigg)^m.
	\end{align*}
	Then using \eqref{eq.eta_bounds} to upper bound $\lambda^s$, we have
	\begin{align*}
		\lambda^s  \leq \frac{ 2\kappa/\theta_\kappa}{ 2 - \kappa/\theta_\kappa} + 2 (1 + \kappa/\theta_\kappa)\bigg( 1 - \frac{2}{(1 + \kappa)\theta_\kappa} \bigg)^m .
	\end{align*}
Hence, by choosing $\theta_\kappa > 3\kappa/2$ with  $\theta_\kappa = {\cal O} (\kappa)$ and $m = {\cal O}(\kappa^2)$ so that $\lambda^s <1 $, and using arguments similar to those in the proof of Corollary \ref{coro.sarah}, one can verify that the complexity is ${\cal O}\big((n+\kappa^2)\ln\frac{1}{\epsilon} \big)$.
\end{proof}

\noindent\textbf{BB-SARAH with W-Avg:}
\begin{proof}
	From Theorem \ref{thm.sarah}, the convergence rate is 
	\begin{align*}
		\lambda^s & = \frac{ (1\! -\!  \mu \eta^s)^m }{c \mu \eta^s} + \bigg[ (1\! -\! \mu \eta^s)^m -\Big( 1 \!-\! \frac{2\eta^s L}{1\!+\! \kappa} \Big)^m \bigg] \frac{ L\! +\! \mu}{ c(L\! -\! \mu)} 
		 + \frac{\eta^s L(m \!-\! 1)}{c(2 \!-\! \eta^s L)} +  \frac{2 \!-\! 2 \eta^s L}{2\! -\! \eta^s L} \frac{1+\kappa}{2 c \eta^s L}  \nonumber \\
		 & \leq \frac{\kappa \theta_\kappa \big(1 - \frac{1}{\kappa \theta_\kappa}\big)^m}{c} + \big(1 - \frac{1}{\kappa \theta_\kappa}\big)^m \frac{L+\mu}{c(L-\mu)} + \frac{(m-1) \kappa/\theta_\kappa}{c(2 - \kappa/\theta_\kappa)} + \frac{2}{2-\kappa/\theta_\kappa} \frac{(1+\kappa)\theta_\kappa}{2c}
	\end{align*}
where $c = m - \frac{1}{\mu\eta^s} + \frac{(1-\mu\eta^s)^m}{\mu\eta^s} \geq  m - \frac{1}{\mu\eta^s} \geq m - \kappa \theta_\kappa$. With $\theta_\kappa = {\cal O} (\kappa)$ and $m = {\cal O}(\kappa^2)$ so that $c = {\cal O}(\kappa^2)$, we find that $\lambda^s <1 $. In addition, since $\eta^s < 1/L$ is still needed to guarantee convergence (cf. Theorem \ref{thm.sarah}), one must have $\theta_\kappa > \kappa$.	
\end{proof}

\section{More on Numerical Experiments}\label{apdx.simulation}

\subsection{More Numerical Tests of Section \ref{sec.avg}}\label{appdx.avg_tests}

This subsection presents additional numerical tests to support that averaging is not merely a `proof trick.' Specifically, experiments with SARAH under different types of averaging on datasets \textit{a9a} and \textit{diabetes} are showcased in Fig. \ref{fig.diff_sarah_avg}. Similar to the performance of SARAH on dataset \textit{w7a}, W-Avg is better when the step size is chosen large, while a smaller step size favors L-Avg.

\begin{figure*}[t]
	\centering
	\begin{tabular}{ccc}
	\hspace{-0.2cm}
		\includegraphics[width=5cm]{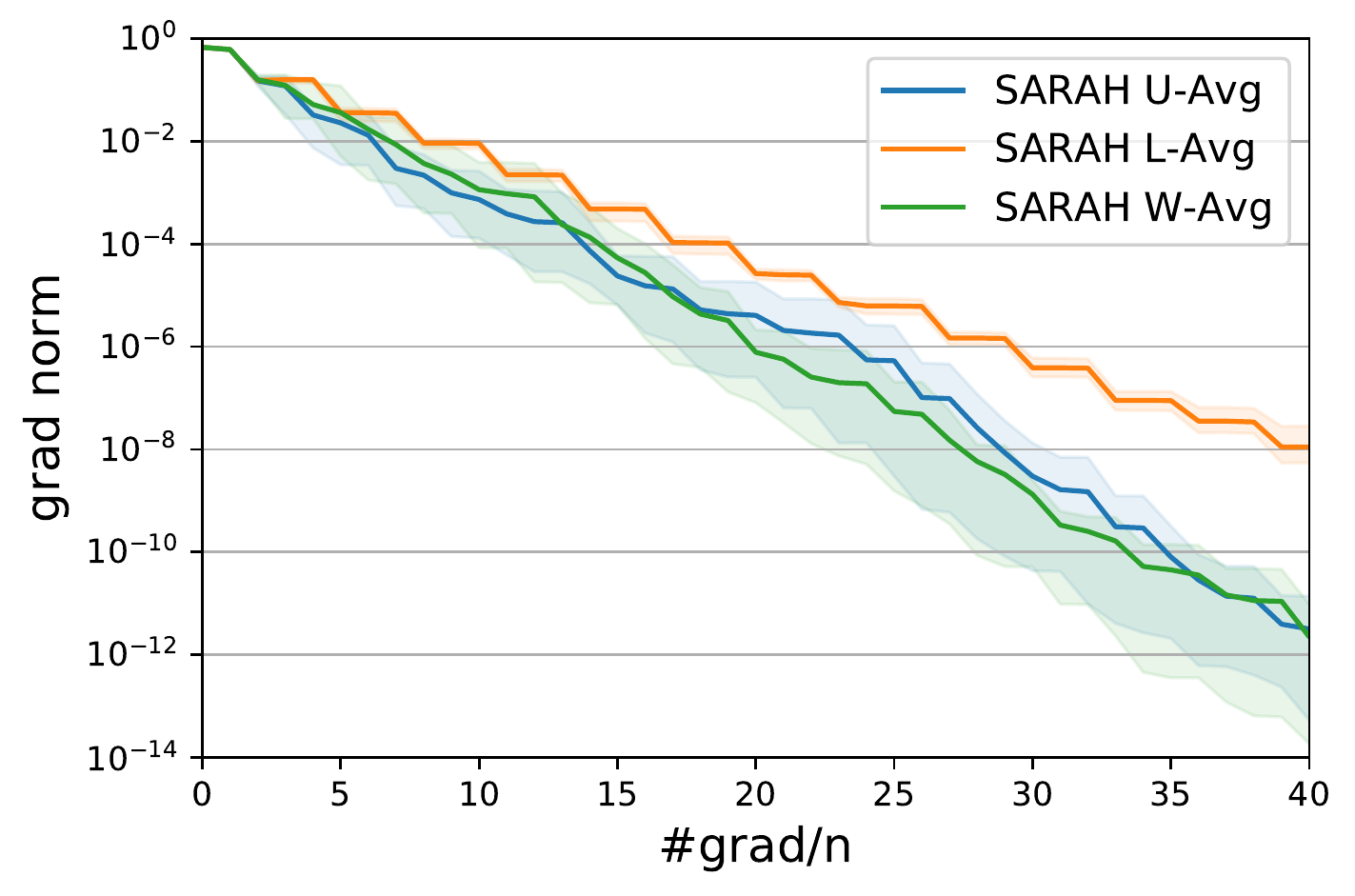}&
		\includegraphics[width=5cm]{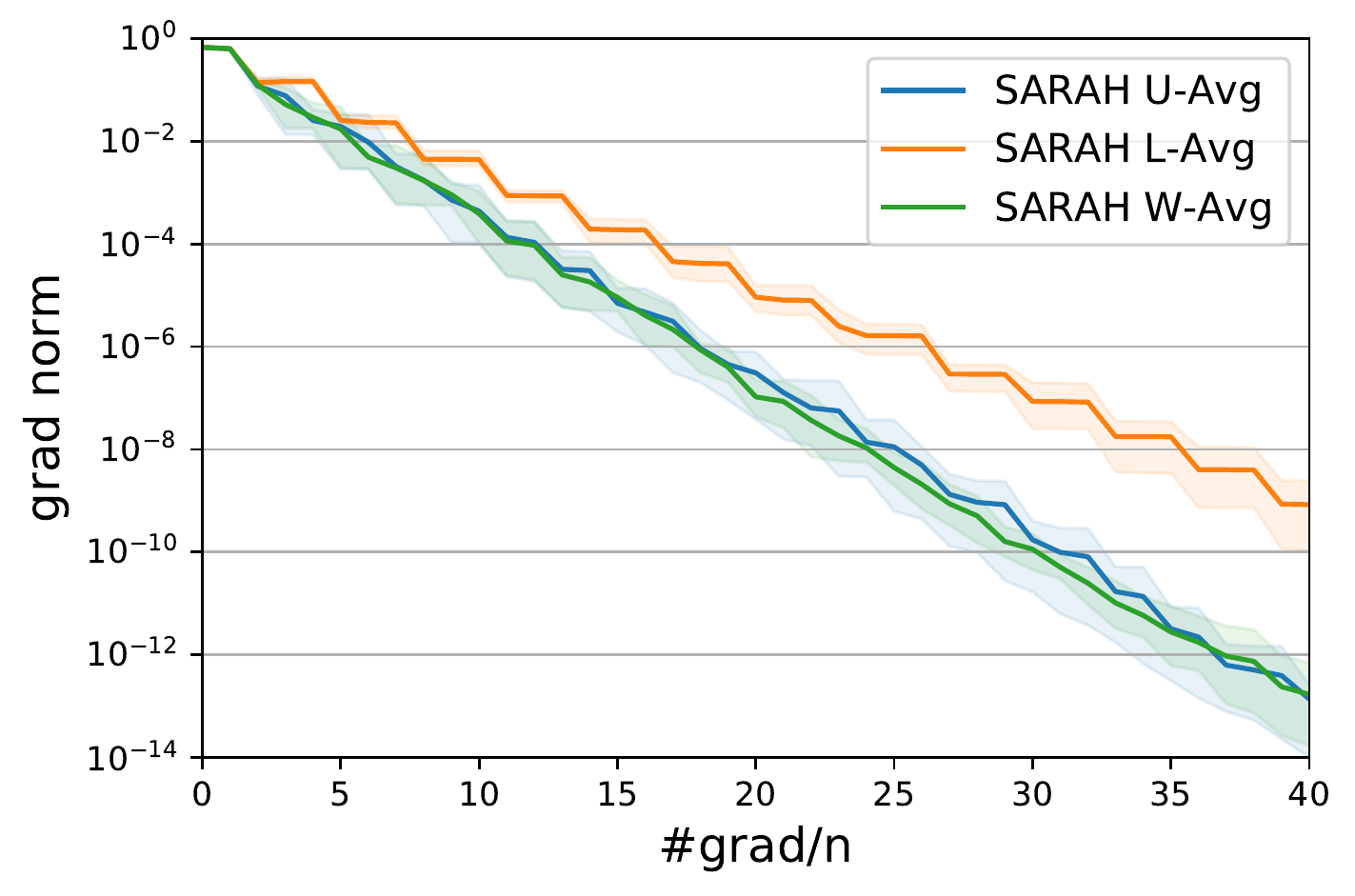}&
		\includegraphics[width=5cm]{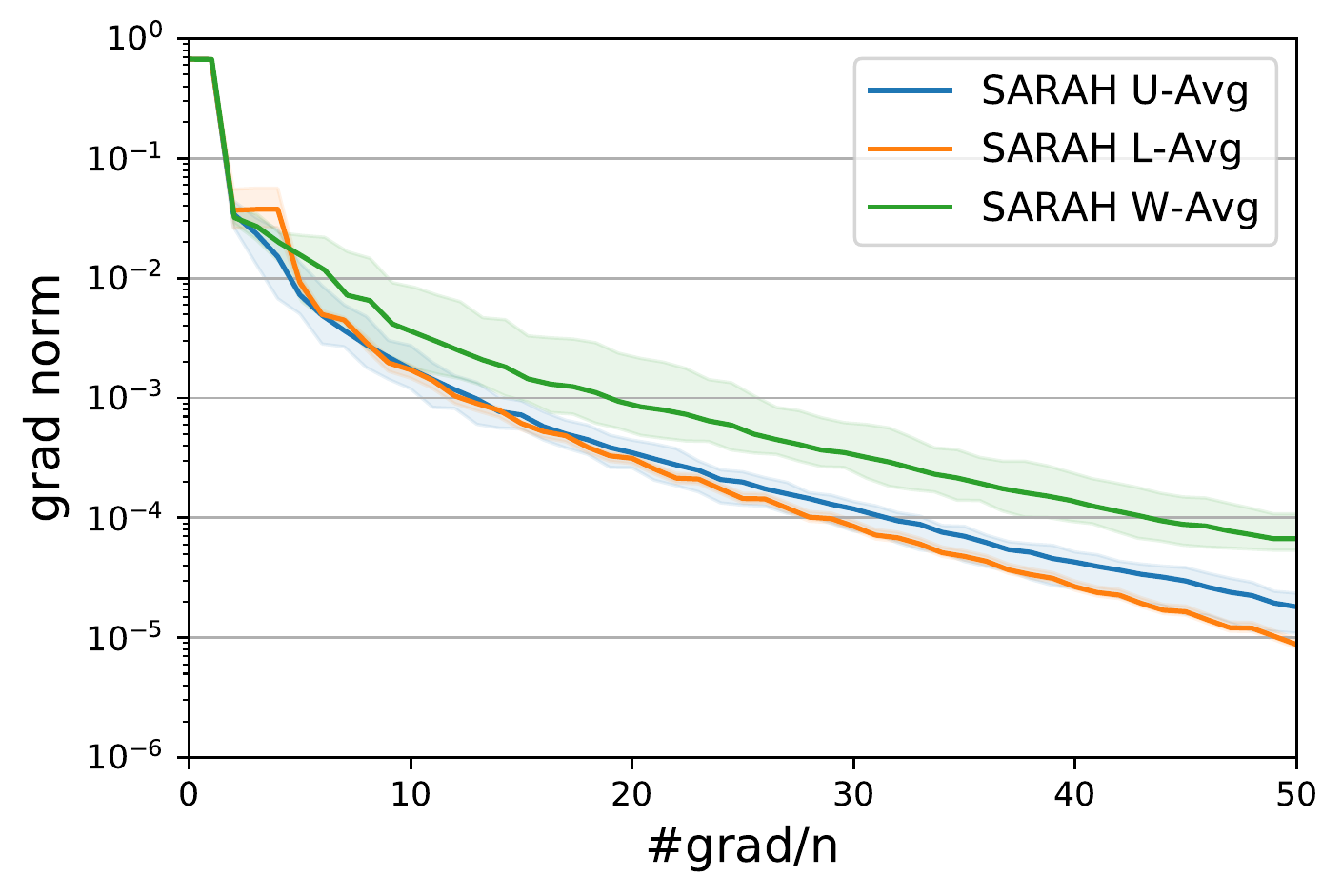}
		\\ (a) $\eta = 0.9/L$ & (b) $\eta = 0.6/L$ & (c) $\eta = 0.06/L$ \\
		\includegraphics[width=.30\textwidth]{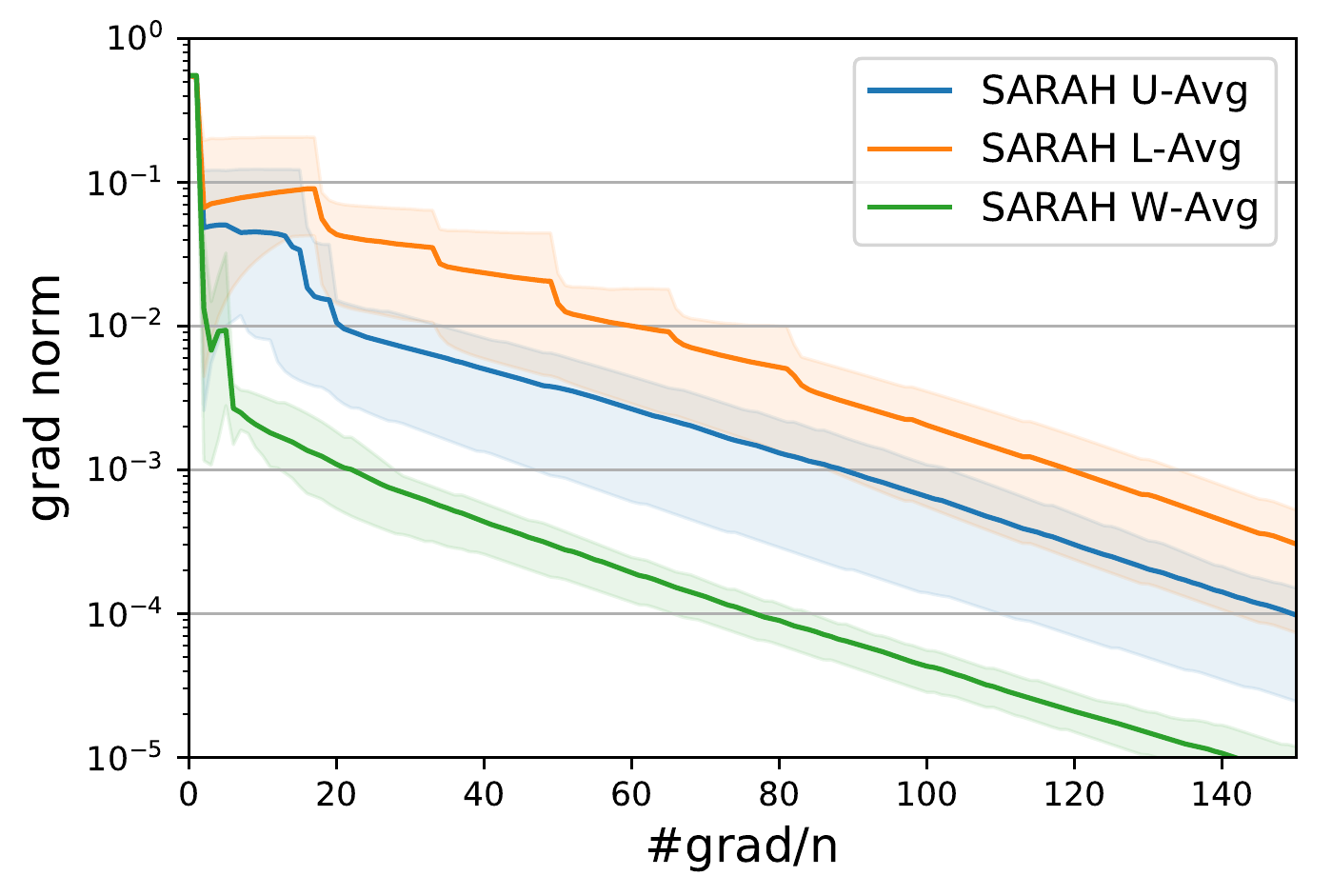}&
		\hspace{-0.2cm}
		\includegraphics[width=.30\textwidth]{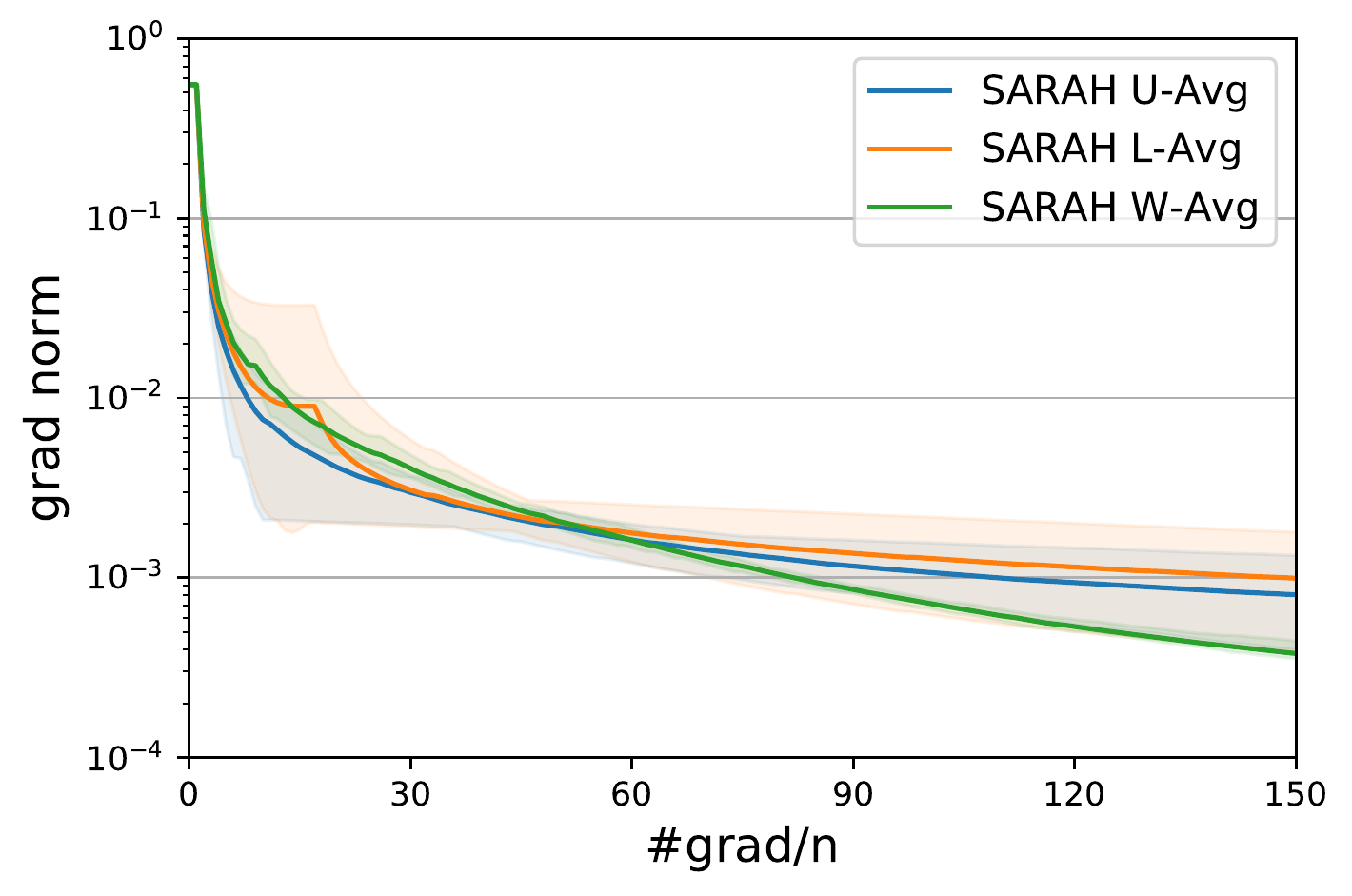}&
		\hspace{-0.3cm}
		\includegraphics[width=.30\textwidth]{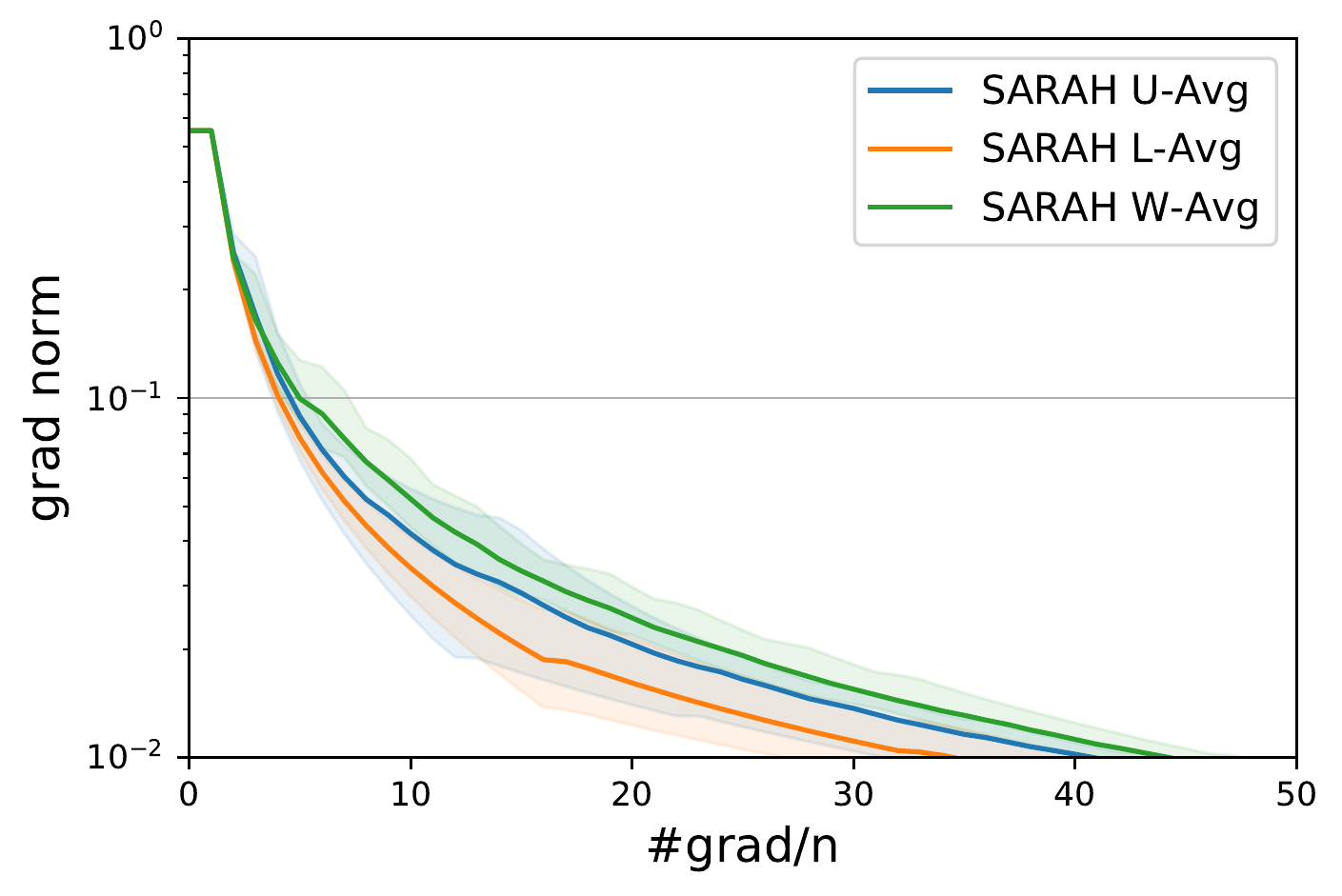}
		\\ (d) $\eta = 0.1/L$ & (e) $\eta = 0.01/L$ & (f) $\eta = 0.005/L$
	\end{tabular}
	\caption{Comparing SARAH with different types of averaging on datasets \textit{a9a} and \textit{diabetes}. In all tests, we set $\mu = 0.002$ with $m = 5\kappa$.}
	 \label{fig.diff_sarah_avg}
\end{figure*}

\subsection{Details of Datasets Used in Section \ref{sec.tests}}\label{appdx.final_tests}
The dimension $d$, number of training data $n$, the weight used for regularization, and other details of datasets used in Section \ref{sec.tests}, are listed in Table \ref{tab.para}.

\begin{table}
\centering 
\caption{Parameters of datasets used in numerical tests}\label{tab.dataset}
 \begin{tabular}{ c*{5}{|c}}
    \hline
Dataset  & $d$  & $n$ (train)  & density & $n$ (test) & $\mu$
\\ \hline
\textit{a9a}  & $122$ & $3,185$ &  $11.37\%$ & $29,376$  & $0.001$
\\ \hline
\textit{rcv1} &  $47,236$  & $20,242$ & $0.157\%$ &  $677,399$  & $0.00025$
\\ \hline
\textit{real-sim} &   $20,958$  & $50,617$ & $0.24\%$ & $21,692$ & $0.00025$
\\ \hline
\end{tabular} 
 \label{tab.para}
\end{table}

\end{document}